\def\iid{i.i.d\onedot} 
\def\x{{\mathbf x}}
\def\b{{\mathbf b}}
\def\B{{\mathbf B}}
\def\y{{\mathbf y}}
\def\mathr{{\mathbb{R}}}
\def\mathX{{\mathcal{X}}}
\def\loss{{\text{Loss}}}
\def\rank{{\text{Rank}}}
\def\ER{{\text{ER}}}
\def\reg{{\text{Reg}}}
\def\Beta{ {\boldsymbol{\beta}} }
\def\Theta{ {\boldsymbol{\theta}} }
\def\Varphi{ {\boldsymbol{\varphi}} }
\def\bxi{ {\boldsymbol{\xi}} }
\def\bomega{ {\boldsymbol{u}} }
\def\Alpha{ {\boldsymbol{\alpha}} }
\newtheorem{theorem}{Theorem}[section]
\newtheorem{corollary}{Corollary}[section]
\newtheorem{definition}{Definition}[section]
\newtheorem{lemma}{Lemma}[section]
\begin{document}

\title{Face Recognition using Optimal Representation Ensemble}

\author
{
  Hanxi Li$^{1,2}$, ~  Chunhua Shen$^3$, ~ Yongsheng Gao$^{1,2}$ \\
  $^1$NICTA\thanks{NICTA
  is funded by the Australian Government as represented by the
  Department of Broadband, Communications and the Digital Economy and
  the Australian Research Council through the ICT Center of Excellence
  program.
  }, Queensland Research Laboratory, QLD, Australia\\
  $^2$Griffith University, QLD, Australia  \\
  $^3$University of Adelaide, SA, Australia
}

\maketitle

\begin{abstract}

  Recently, the face recognizers based on linear representations have been shown to
  deliver state-of-the-art performance. These approaches assume that the faces, belonging
  to one individual, reside in a linear-subspace with a respectively low dimensionality.
  In real-world applications, however, face images usually suffer from expressions,
  disguises and random occlusions. The problematic facial parts undermine the validity of
  the subspace assumption and thus the recognition performance deteriorates significantly.
  In this work, we address the problem in a learning-inference-mixed fashion. By observing
  that the linear-subspace assumption is more reliable on certain face patches rather than
  on the holistic face, some \emph{Bayesian Patch Representations} (BPRs) are randomly
  generated and interpreted according to the Bayes' theory. We then train an ensemble
  model over the patch-representations by minimizing the empirical risk \wrt the
  ``leave-one-out margins''.
  The obtained model is termed \emph{Optimal Representation Ensemble} (ORE), since it
  guarantees the optimality from the perspective of Empirical Risk Minimization.
  To handle the unknown patterns in test faces, a robust version of BPR is proposed by
  taking the non-face category into consideration. Equipped with the Robust-BPRs, the
  inference ability of ORE is increased dramatically and several \emph{record-breaking}
  accuracies ($99.9\%$ on Yale-B and $99.5\%$ on AR) and desirable efficiencies (below
  $20$ ms per face in Matlab) are achieved.  It also overwhelms other modular heuristics
  on the faces with random occlusions, extreme expressions and disguises. Furthermore, to
  accommodate immense BPRs sets, a boosting-like algorithm is also derived. The boosted
  model, \aka Boosted-ORE, obtains similar performance to its prototype. Besides the
  empirical superiorities, two desirable features of the proposed methods, namely, the
  \emph{training-determined model-selection} and the \emph{data-weight-free boosting
  procedure}, are also theoretically verified.  They reduce the training complexity
  immensely, while keeps the generalization capacity not changed.

\end{abstract}

\section{Introduction}
\label{sec:introduction}
  
  Face Recognition is a long-standing problem in computer vision.  In the past decade,
  much effort has been devoted to the \emph{Linear Representation} (LR) based algorithms
  such as Nearest Feature Line (NFL) \cite{Li_NN_99_NFL}, Nearest Feature Subspace (NFS)
  \cite{Tzung_PAMI_02_Discriminant}, Sparse Representation Classification (SRC)
  \cite{Wright_PAMI_09_Face} and the most recently proposed Linear Regression
  Classification (LRC) \cite{Nassem_PAMI_10_Face}. Compared with traditional face
  recognition approaches, higher accuracies have been reported.  The underlying assumption
  for the LR-classifiers is that the faces of one individual reside in a low-dimensional
  linear manifold. This assumption, however, is only valid when the cropped faces are
  considered as rigid Lambertian surfaces and without any occlusion
  \cite{Georghiades_PAMI_01_Few, Basri_PAMI_2003_Lambertian}. In practice, the
  linear-subspace model is sometimes too rudimentary to handle expressions, disguises and
  random occlusions which usually occur in local regions, \eg expressions influence the
  mouth and eyes more greatly than the nose, scarves typically have the impact on
  lower-half faces. The problematic face parts are not suitable for performing the linear
  representation and thus reduce the recognition accuracy. On the other hand, there should
  be some face parts which are less problematic, \ie more reliable. But, how can we
  evaluate the reliability of one face part?  Given the reliabilities of all the parts,
  how do we make the final decision? 
  
  Several heuristic methods were introduced to address the problem. In particular, the
  modular approach is used in \cite{Wright_PAMI_09_Face} and \cite{Nassem_PAMI_10_Face}
  for eliminating the adverse impact of continuous occlusions. Significant improvement in
  accuracy was observed from the partition-and-vote \cite{Wright_PAMI_09_Face} or the
  partition-and-compete \cite{Nassem_PAMI_10_Face} strategy. The drawbacks of these
  heuristics are also clear. First, one must roughly know {\em a priori} the shape and
  location of the occlusion otherwise the performance will still deteriorate. It is
  desirable to design more flexible ``models'' to handle occlusions with arbitrary spatial
  features. Furthermore, the existing heuristics discard much useful information, like the
  representation residuals in \cite{Wright_PAMI_09_Face} or the classification results of
  the unselected blocks in \cite{Nassem_PAMI_10_Face}. Higher efficiencies are expected
  when all the information is simultaneously analyzed. Thirdly, there is great potential
  to increase the performance by employing a sophisticated fusion method, rather than the
  primitive rules in \cite{Wright_PAMI_09_Face} and \cite{Nassem_PAMI_10_Face}. Finally,
  most existing methods neglect the fact that the LR-method can also be used to
  distinguish human faces from non-face images, or partly-non-face images. By harnessing
  this power, one could achieve higher robustness to occlusions and noises.
    
  In this paper, we propose a learning-inference-mixed framework to learn and recognize
  faces. The novel framework generate, interpret and aggregate the partial representations
  more elegantly. First of all, LRs are performed on randomly-generated face patches.
  Secondly, in a novel manner, we interpret every patch representation as a probability
  vector, with each element corresponding to a certain individual. The interpretation is
  obtained via applying Bayes theorem on a basic distribution assumption, and thus is
  referred to as Bayesian Patch Representation (BPR). We then learn a linear combination
  of the obtained BPRs to gain much higher classification ability. The combination
  coefficients, \ie the weights associated with different BPRs, are achieved via
  minimizing the exponential loss \wrt sample margins \cite{Herbrich_02_Learning}. In this
  way, most given face-related patterns are learned via assigning different ``importances'' to
  various patches. The learned model is termed \emph{Optimal Representation Ensemble}
  (ORE) since it guarantees the optimality from the perspective of Empirical Risk
  Minimization. To cope with unknown-patterns in test faces, a variation of BPR, namely
  Robust-BPR, is derived by taking account of the \emph{Generic-Face-Confidence}. The
  inference power of the ORE model is improved dramatically by employing the Robust-BPRs. 

  The BPRs are, essentially, instance-based. One can not simply copy the off-the-shelf
  ensemble learning method to combine them. To accommodate the instance-based predictors
  and optimally exploit the given information, we propose the \emph{leave-one-out margin}
  for replacing the conventional margin concept. The leave-one-out margin also makes the
  ORE-Learning procedure extremely resistant to the overfitting, as we theoretically
  verified. One therefore can choose the model parameter merely depending on the training
  errors. This merit of ORE-Learning leads to a remarkable drop in the validation
  complexity.  In addition, to tailor the proposed method to immense BPR sets, a
  boosting-like algorithm is designed to obtain the ORE in an iterative fashion. The
  boosted model, Boosted-ORE, could be learned very efficiently as we prove that the
  training procedure is unrelated to data weights. \emph{From a higher point of view, we
  offer an elegant and efficient framework for training a discriminative ensemble of
  instance-based classifiers}.   
    
  A few work has used ensemble learning methods for face recognition
  \cite{Wang_CVPR_04_Sampling, Chawla_CVPR_05_Subspace, Lu_TNN_05_Ensemble,
  Xiao_CVPR_06_Joint, Wang_IJCV_06_Subsampling}. Nonetheless, those methods only combine
  the model-based, primitive classifiers, \eg Linear Discriminant Analysis (LDA) or
  Principle Component Analysis (PCA), which are sensitive to illumination, easy to
  over-fit and neglect the non-face category. In contrast, our Bayesian-rule-based BPRs
  overcome all these drawbacks. Furthermore, the proposed ensemble methods globally
  minimize an explicit loss function \wrt margins. It serves as a more principled way, in
  comparison to the simple voting strategies \cite{Wang_CVPR_04_Sampling,
  Chawla_CVPR_05_Subspace, Wang_IJCV_06_Subsampling} or the heuristically customized
  boosting schemes \cite{Lu_TNN_05_Ensemble, Xiao_CVPR_06_Joint}. 

  The experiment part justifies the excellence of proposed algorithms over conventional
  LR-methods. In particular, ORE achieves some record-breaking accuracies ($99.9\%$ for
  Yale-B dataset and $99.5\%$ for AR dataset) on the faces with extreme illumination
  changes, expressions and disguises. Boosted-ORE also shows similar recognition
  capability. Equipped with the GFC, Robust-ORE outperforms other modular heuristics under
  all the circumstances. Moreover, the ORE-model also shows the highest efficiency (below
  $20$ ms per face with Matlab and one CPU core) among all the compared LR-methods. 

  The rest of this paper is organized as follows. In Section \ref{sec:back}, we briefly
  introduce the family of LR-classifiers and the modular heuristics. BPR and
  Robust-BPR are proposed in the following section. The learning algorithm for obtaining
  ORE is derived in Section \ref{sec:combine_bpr} where we also prove the validity of the
  training-determined model-selection. The derivations of the boosting-like variation,
  \aka Boosted-ORE, and its desirable feature in terms of ultrafast training are given in
  Section \ref{sec:boost_ore}. Section~\ref{sec:strategy} introduces the
  learning-inference-mixed strategy of the ORE algorithm. The experiment and results are
  shown in Section \ref{sec:exp} while the conclusion and future topics can be found in
  the final section.

\section{Background}
\label{sec:back}

\subsection{The family of LR-classifiers}
\label{subsec:family}
  
  For a face recognition problem, one is usually given $N$ vectorized face images $\X \in
  \mathr^{D \times N}$ belonging to $K$ different individuals, where $D$ is the
  dimensionality of faces and $N$ is the face number. Let us suppose their labels are $\l =
  \{l_1, l_2, \cdots, l_N\},$ $ l_i \in \{1, 2, \cdots, K\}~\forall i$. When a probe
  face $\y \in \mathr^D$ is provided, we need to identify it as one individual exists in
  the training set, \ie $\gamma_{\y} = H(\y) \in \{1, 2, \dots, K\}$, where $H(\cdot)$ is
  the face recognizer that generates the predicted label $\gamma$. Without loss of
  generality, in this paper, we assume all the classes share the same sample number $M =
  N/K$. For the $k$th face category, let $\x^k_i \in \mathr^{D}$ denote the $i$th face
  image and $\X_k  = [\x_1, \x_2, \dots, \x_M] \in \mathr^{D \times M}$ indicates the
  image collection of the $k$th class\footnote{For simplicity, we slightly abuse the
  notation: the symbol of a matrix is also used to represent the set comprised of all the
  columns of this matrix.}. 

  Nearest Neighbor (NN) can be thought of as the most primitive LR-method. It uses only
  one training face, \aka the nearest neighbor, to represent the test face. However,
  without a powerful feature extraction approach, NN usually performs very poorly.
  Therefore, more advanced methods like NFL \cite{Li_NN_99_NFL}, NFS
  \cite{Tzung_PAMI_02_Discriminant}, SRC \cite{Wright_PAMI_09_Face} and LRC
  \cite{Nassem_PAMI_10_Face} are proposed. Most of their formulations (\cite{Li_NN_99_NFL,
  Tzung_PAMI_02_Discriminant, Nassem_PAMI_10_Face}) could be unified. For class $k \in
  \{1, 2, \cdots, K\}$, a typical LR-classifier firstly solve the following problem to get
  the representation coefficients $\Beta^\ast_k$, \ie
  \begin{equation}
      \min_{\Beta_k} ~ \|\y - \tilde{\X}_k\Beta_k\|_2 ~~ \forall ~ k
      \in \{1, 2, \cdots, K\},
    \label{equ:opt_lr}
  \end{equation}
  where $\|\cdot\|_p$ stands for the $\ell_p$ norm and $\tilde{\X}_k$ is a subset of
  $\X_k$, selected under certain rules. The above problem, also known as the \emph{Least
  Square Problem}, has a closed-form solution given by
  \begin{equation}
    \boldsymbol{\Beta}^{\ast}_k = (\tilde{\X}_k^\T \tilde{\X}_k)^{-1}{\tilde{\X}_k}^\T
    \y.
    \label{equ:sol_ls}
  \end{equation}
  The identity of test face $\y$ is then retrieved as 
  \begin{equation}
    \gamma_\y = \argmin_{k \in \{1, \cdots, K\}} r_k,
    \label{equ:label_lr}
  \end{equation}
  where $r_k$ is the reconstruction residual associated with class $k$, \ie
  \begin{equation}
    r_k = \|\y - \tilde{\X}_k\Beta^\ast_k\|_2.
    \label{equ:residual}
  \end{equation}
  
  Different rules for selecting $\tilde{\X}_k$ actually specify different members of the
  LR-family. NN merely use one nearest neighbor from $\X_k$ as the representation basis;
  NFL exhaustively searches two faces which form a nearest line to the test face; NFS
  conduct a similar search for the nearest subspace with a specific dimensionality;
  Finally, at the other end of the spectrum, LRC directly employ the whole $\X_k$ to
  represent $\y$. Note that although the solution of problem \eqref{equ:opt_lr} is
  closed-form, most LR-method requires a brute-force search to obtain $\tilde{\X}_k$. The
  only exception exists in LRC where $\tilde{\X}_k = \X_k$, thus LRC is much more faster
  than the other members. 
  
  The SRC algorithm, on the other hand, solves a second-order-cone problem over the entire
  training set $\X$. The optimization problem writes:
  \begin{equation}
      \min_{\Beta} ~ ~ \|\Beta\|_1 ~ ~ ~ \sst ~ ~ \|\y - \X\Beta\|_2 \le \varepsilon.
    \label{equ:opt_src}
  \end{equation}
  Then the representation coefficients for class $k$ are calculated as:
  \begin{equation}
      \Beta^\ast_k = \delta_k(\Beta) ~~ \forall ~ k \in \{1, 2, \cdots, K\},
    \label{equ:rep_para_src}
  \end{equation}
  where function $\delta_k(\Beta)$ sets all the coefficients of $\Beta$ to $0$ except
  those corresponding to the $k$th class \cite{Wright_PAMI_09_Face}. The identifying
  procedure of SRC is the same\footnote{Note that for SRC, $\tilde{\X}_k = \X_k$.} to
  \eqref{equ:label_lr}. By treating the occlusion as a ``noisy'' part, Wright \etal
  \cite{Wright_PAMI_09_Face} also proposed a robust version of SRC, which conducts the
  optimization as follows:
  \begin{equation}
      \min_{\bomega} ~ ~ \|\bomega\|_1 ~ ~ ~ \sst ~ ~ \|\y - [\X,~ \I]\bomega\|_2 \le
      \varepsilon,
    \label{equ:opt_robust_src}
  \end{equation}
  where $\I$ is an identity matrix, $\bomega = [\Beta,~ \e]^\T$ and $\e$ is the
  representation coefficients corresponding to the non-face part. SRC is very slow
  \cite{Shi_CVPR_10_Face} due to the second-order-cone programming and the assumption is
  also doubtful \cite{Shi_CVPR_11_Face}. 

  Obviously, all the LR-methods are \emph{generative} rather than \emph{discriminative}.
  Their main goal is to best reconstruct test face $\y$, while the subsequent
  classification procedure seems a ``byproduct''. Nonetheless, they still achieved
  impressive performances because the underlying linear-subspace theory
  \cite{Georghiades_PAMI_01_Few, Basri_PAMI_2003_Lambertian} keeps approximately valid no
  matter how the illumination changes. Unfortunately, for the face with extreme
  expressions, disguises or random contaminations, the theory doesn't hold anymore and
  poor recognition accuracies are usually observed.

\subsection{Two modular heuristics for robust recognition}
\label{subsec:modular}

  To ease the difficulties, some modular methods are proposed. In particular, Wright \etal
  \cite{Wright_PAMI_09_Face} partition the face image into several (usually $4$ to $8$)
  blocks and perform the robust SRC, as illustrated in \eqref{equ:opt_robust_src}, on each
  of them. The final identity of the test face is determined via a majority voting over
  all the blocks. We term this algorithm as Block-SRC in this paper. The \emph{Distance
  based Evidence Fusion} (DEF) \etal \cite{Nassem_PAMI_10_Face} modifies the LRC via a
  similar block-wise strategy while the predict label is given by a competition procedure.
  Without loss of generality, their methods can be summarized as: 
  \begin{equation}
    \gamma = F\left(\r^1, \r^2, \dots, \r^T\right),
    \label{equ:fusion}
  \end{equation}
  where $\r_t = [r_{t,1}, r_{t, 2}, \cdots, r_{t, K}]^{\T}$ is the collection of all the $K$
  residuals for the $t$th block, $F(\cdot)$ refers to the fusion method which counts the
  votes in \cite{Wright_PAMI_09_Face} and perform the competition in
  \cite{Nassem_PAMI_10_Face}. In other words, $F(\cdot)$ reflects how we combine the
  block-representations' outputs.

  The modular approaches did increase the accuracy. Their success implies that the
  linear-subspace assumption is more reliable on certain face parts, rather than the
  holistic face. Nonetheless, their drawbacks, as described in the introduction part, are
  also obvious. In the following sections, we build a more elegant framework to generate,
  interpret and aggregate the partial representations.

\section{Bayesian Patch Representation}
\label{sec:bpr}

\subsection{Random face patches}
\label{subsec:random_face_patch}

\subsubsection{What are random face patches}
\label{subsubsec:what}

  A random face patch is a continuous part of the face image, with an arbitrary shape and
  size. The method based on image patches has illustrated a great success in face
  detection \cite{Viola2004}. Differing from the Haar-feature in face detection, linear
  representations are much more sophisticated. There is no need to generate the patches
  exhaustively. In this paper, we only employ $500$ small patches randomly distributing
  over the face image. Those patches are already sufficient to sample all the reliable
  face parts. Different weights are assigned to these patches to indicate their
  importances for a specific recognition task. We expect that a certain combination of
  these patches could yield similar classification capacity to the direct use of all the
  reliable regions. 

  Figure~\ref{subfig:weighted_patches} gives us an example of the weighted patches. $500$
  random face patches are generated with different shapes (here only rectangles). The
  higher its weight is assigned, the redder and wider a patch is shown. The weights are
  obtained by using the proposed ORE-Learning algorithm on AR \cite{Martinez_98_AR}
  dataset. Note that most patches are purely blue which implies their weights are too
  small to influence the classification. We simply ignore those patches in practices.

  
\subsubsection{Why random face patches}
\label{subsubsec:what}

  \begin{figure}[ht!]
    \centering
    \subfigure[weighted patches]{\label{subfig:weighted_patches}\includegraphics[width=0.25\textwidth]{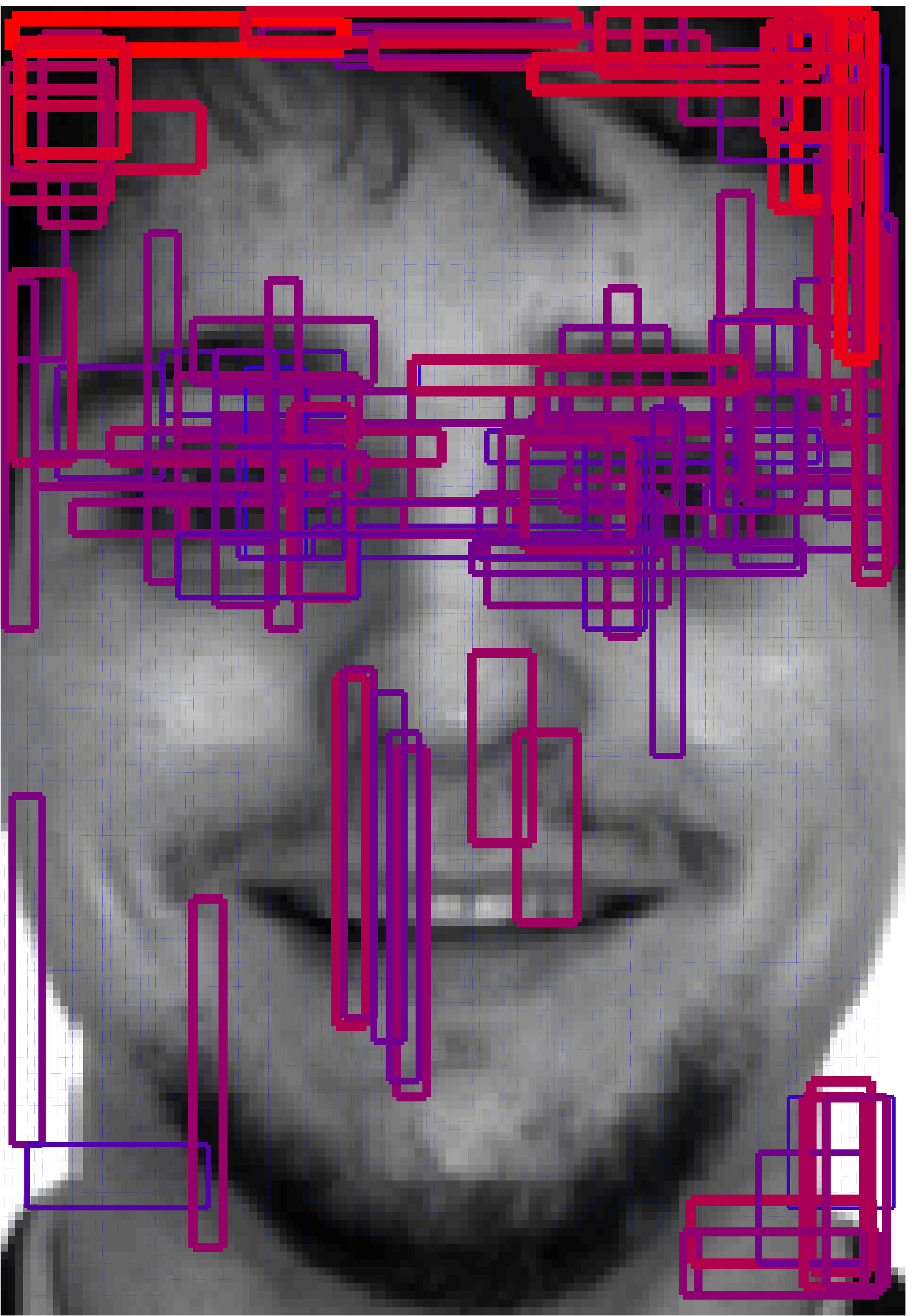}}
    \subfigure[pixel-energy map]{\label{subfig:pixel_weight}\includegraphics[width=0.25\textwidth]{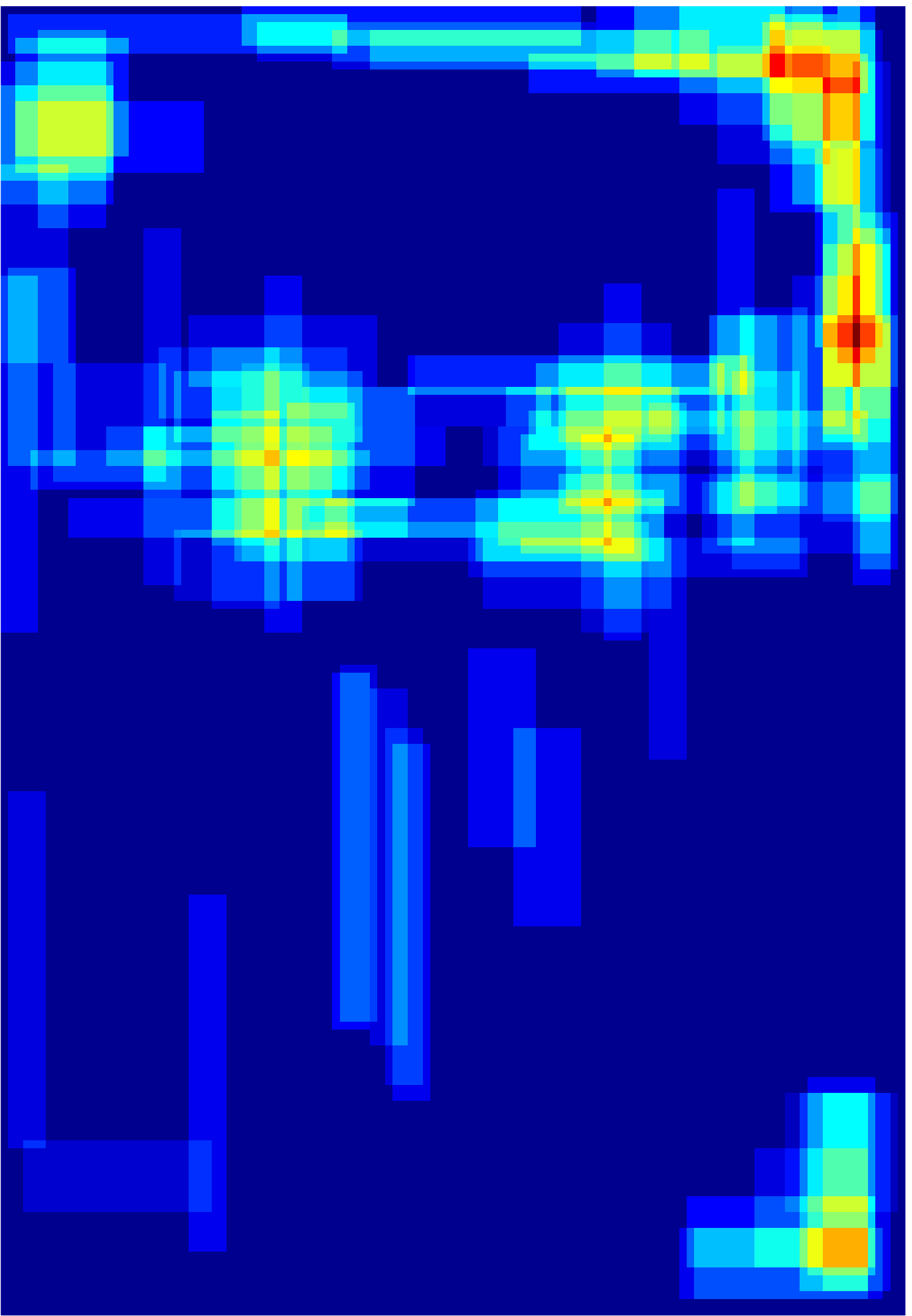}}
    \subfigure[focused face]{\label{subfig:focused_face}\includegraphics[width=0.25\textwidth]{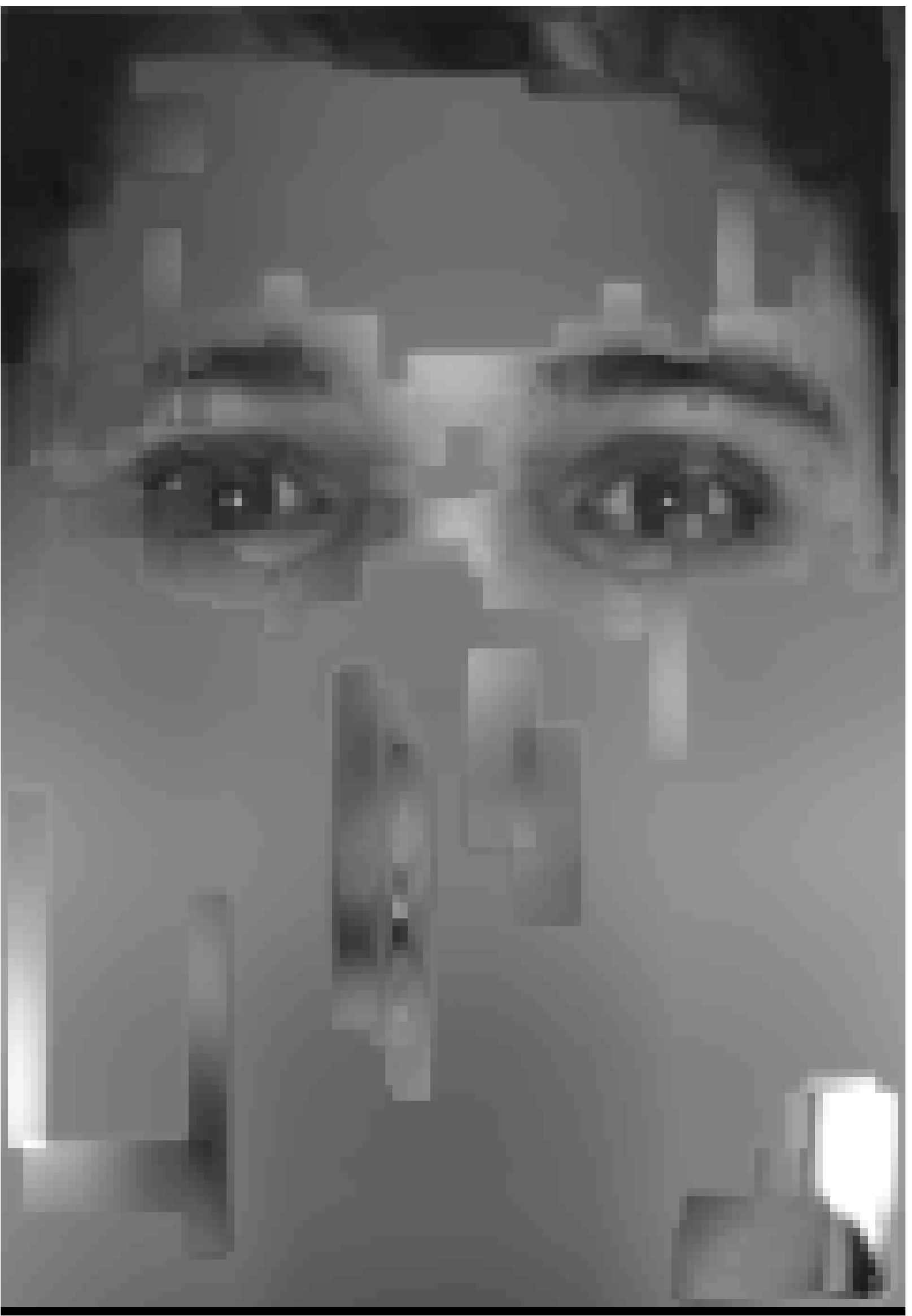}}
    \caption
    {
        The demonstration of random face patches. (a): $500$ random face patches with
        different weights. The weight is represented by the color and width of edges.
        (b): the corresponding pixel-energy map, the energy of one pixel is defined as the
        average weight of all the overlapping patches. (c): the simulated focusing
        behavior. Only a small part of the face is emphasized (focused) while the
        others are ignored (blurred). The weights are obtained by using the proposed
        ORE-Learning algorithm on AR \cite{Martinez_98_AR} dataset.
    }
   \label{fig:patch_demo}
  \end{figure}

  Compared with the deterministic blocks, random patches have the following two
  advantages. 
  
  \begin{itemize}
    \item 

      More flexible. The reliable region of a given face could be in arbitrary shape and
      location. Deterministic blocks are therefore too rudimentary to represent it. The
      random patch approach, on the other hand, can approximate any shape of interested
      regions. Figure~\ref{subfig:pixel_weight} illustrates a pixel-energy map
      corresponding to the patches shown in Figure~\ref{subfig:weighted_patches}. A
      pixel's energy is the mean weights of all the patches covering this pixel. A
      irregularly-shaped but reasonable region, which includes two eyes and certain parts
      of the forehead, is emphasized.  From a bionic perspective, it is promising to
      aggregate the random patches for simulating the focusing behavior of human beings.
      Figure~\ref{subfig:focused_face} illustrates the simulated focusing behavior: the
      face is blurred according to the pixel-weights, only the focused facial part, \aka
      the emphasized region, remains clear.


    \item 

      More efficient. According to Figure~\ref{fig:patch_demo}, only a limited number
      (always in the order of $10^1$ in this work) of patches are taken into
      consideration. As we empirically proved in the experiment, the complexity of
      performing the LR method on several small patches are usually lower than that for
      few large blocks.

  \end{itemize}

\subsection{Bayesian Patch Representation}
\label{subsec:bpr}

  Given that the linear-subspace assumption is more reliable on certain face patches, it
  is intuitive to perform the LR-method for each patch. In principle, we could employ
  either member of the LR-family to perform the linear representation on the patches.
  According to the theoretical analysis \cite{Georghiades_PAMI_01_Few,
  Basri_PAMI_2003_Lambertian}, however, it seems no need to specifically select a certain
  subset from $\X_k$. We thus employ the whole $\X_k$ to form the representation basis,
  just like what \cite{Nassem_PAMI_10_Face, Georghiades_PAMI_01_Few} did. 
  In particular, for class $k$ and patch $t$, we denote the patch set as $\X^t_k =
  [\x^t_1, \x^t_2, \cdots, \x^t_M] \in \mathr^{d \times M}$, with each column obtained via
  vectorizing a image patch. The representation coefficients $\Beta^{\ast}_{t,k}$, for the
  $k$th class and $t$th patch is then given by 
  \begin{equation}
    \Beta^{\ast}_{t,k} = ({\X^t_k}^\T \X^t_k)^{-1}{\X^t_k}^\T \y.
    \label{equ:sol_patch_lr}
  \end{equation}
  Then the residual $r_{t,k}$ can be obtained as
  \begin{equation}
    r_{t,k} = \|\y_t - \X^t_k\Beta^{\ast}_{t,k}\|_2,
    \label{equ:residual_patch}
  \end{equation}
  where $\y_t$ is the cropped test image according to the patch location. In this paper, all
  the patches are normalized so that their $\ell_2$ norms are equal to $1$. As a result,
  $r_{t,k} \in [0,1],~\forall t,k$.
  
  Ordinary LR-methods, including the robust variations, only focus on the smallest
  residual or the corresponding class label. This strategy will lose much useful
  information. Differing from the conventional manner, we interpret every patch
  representation as a probability vector $\b_t$. The $k$th element of $\b_t$, namely
  $b_{t,k}$, is the probability that current test patch $\y_t$ belongs to individual $k$
  \ie
  \begin{equation}
    b_{t,k} = \Prob{\gamma_\y = k \mid \y_t}.
    \label{equ:post_bt}
  \end{equation}

  We obtain the above posteriors by applying the Bayesian theorem. First of all, it is
  common that all the classes share the same prior probability, \ie $\Prob{\gamma_\y = k} =
  1/K,~\forall~ k$. The linear-subspace assumption states that, if one test face belongs
  to class $k$, the test patch $\y_t$ should distribute around the linear-subspace spanned
  by $\X^t_k$. The probability of a remote $\y_t$ is smaller than the one close to the
  subspace. In this sense, when the category is known, we can assume the random variable
  $\y_t$ belongs to a distribution with the probability density function 
  \begin{equation}
    \Prob{\y_t \mid \gamma_\y = k} = C \cdot \exp(-{r^2_{t,k}}/{\delta}), 
    \label{equ:condition_prob}
  \end{equation} 
  where $\delta$ is a assumed variance and the $C$ is the normalization factor. This
  distribution, in essence, is a \emph{singular normal distribution} as its covariance
  matrix is singular. Figure~\ref{fig:singular_normal_distribution} depicts the tailored
  distribution in a $2$-D space for the linear-subspace assumption. 
  
  \begin{figure}[h]
    \centering
    \includegraphics[width=0.58\textwidth]{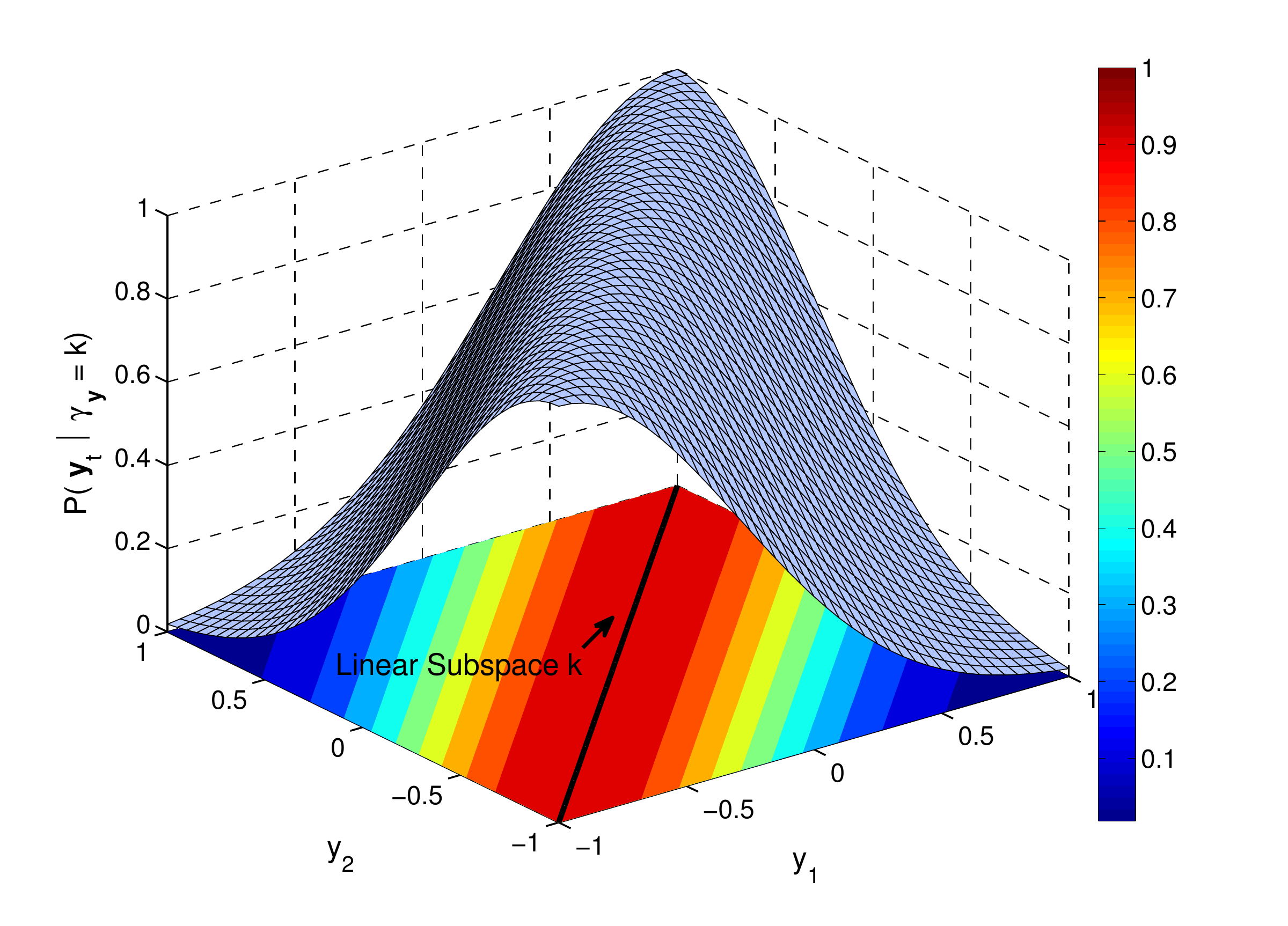}
    \caption
    {
       The demonstration of the singular normal distribution tailored to the
       linear-subspace assumption. The black line indicates the linear-subspace $k$ while
       different colors represent different probabilities. The surface of the probability
       density function is also shown above. Note that here $D = 2$ thus the subspace can
       have the dimensionality at most $1$, or in other words, a line. 
    }
   \label{fig:singular_normal_distribution}
  \end{figure}

  According to Bayes' rule, the posterior probability is then derived as 
  \begin{equation}
    \begin{split}
      b_{t,k} & = \frac{\Prob{\y_t \mid \gamma_\y =
      k}\cdot\Prob{\gamma_\y = k}}{\sum_{j = 1}^{K}\Prob{\y_t \mid \gamma_\y =
      j}\cdot\Prob{\gamma_\y = j}} \\
      & = \frac{C/K \cdot \exp(-r^2_{t,k}/\delta)}{\sum_{j=1}^{K}C/K\cdot\exp(-r^2_{t, j}/\delta)} \\
      & = \frac{\exp(-r^2_{t,k}/\delta)}{\sum_{j=1}^{K}\exp(-r^2_{t, j}/\delta)}.
    \end{split}
    \label{equ:derive_post}
  \end{equation}
  As an example, Figure~\ref{fig:post_distribution} shows the distribution of the
  posterior $b_{t,1}$ when there are only $2$ orthogonal linear-subspaces ($1$ and $2$)
  and dimensionality $D = 2$

  \begin{figure}[h]
    \centering
    \includegraphics[width=0.58\textwidth]{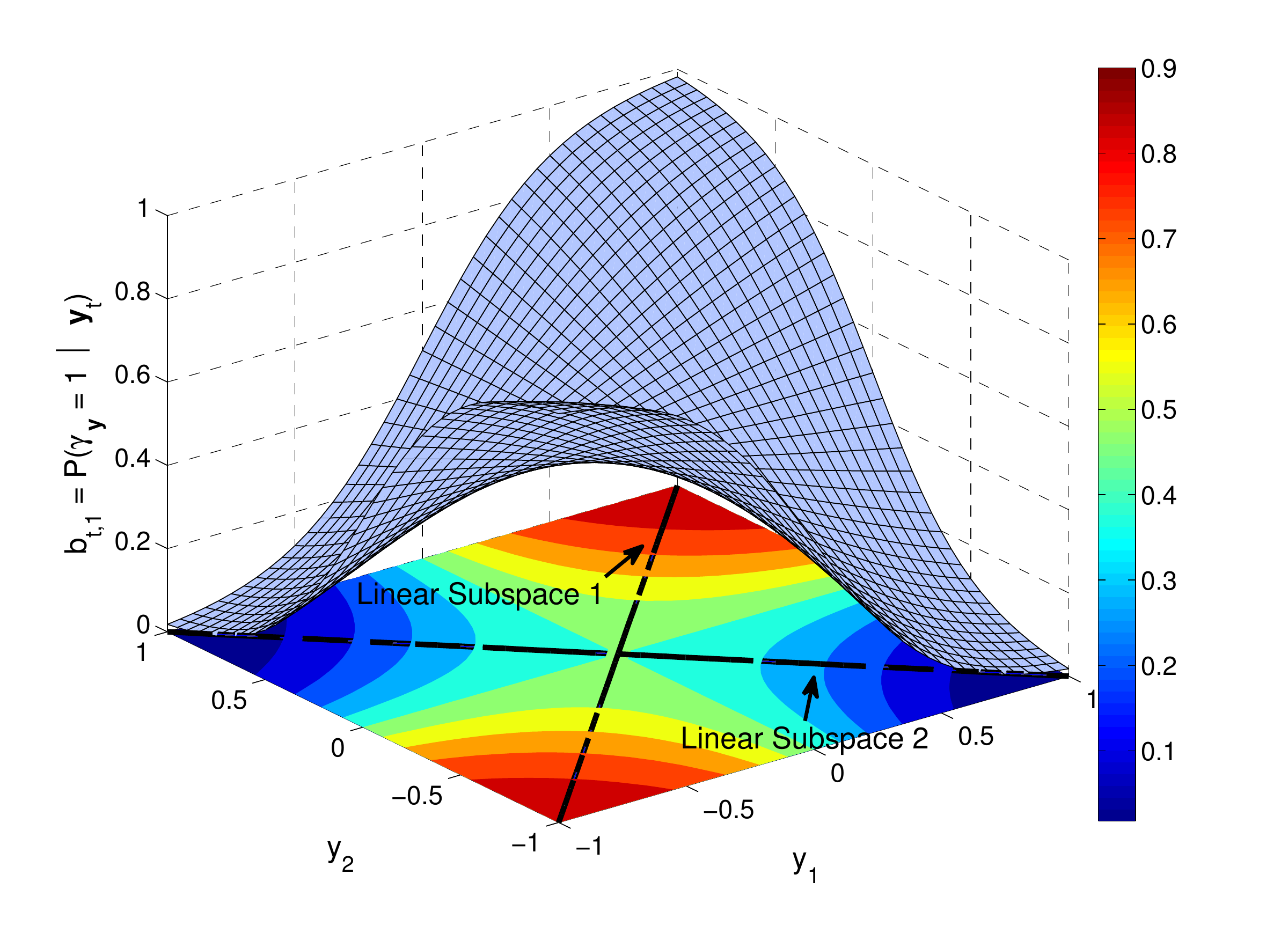}
    \caption
    {
      The demonstration of the posterior distribution $b_{t,1} = \Prob{\gamma_\y = 1 \mid
      \y_t}$. The dimensionality of the original space is $2$. In this example, two
      linear-subspace, \ie the two black lines, are orthogonal to each other. 
    }
   \label{fig:post_distribution}
  \end{figure}
  
  We finally aggregate all the posteriors into a vector $\b_t = [b_{t,1},$ $b_{t,2},$
  $\cdots,$ $b_{t,K}]^{\T}$. The elegant interpretation $\b_t$, termed Bayesian Patch
  Representation (BPR), keeps most information related to the representation and thus
  could lead to a more accurate recognition result. In practice, it makes little sense to
  impose a constant $\delta$ for all the patches and faces. We thus use normalized one,
  \begin{equation}
    \delta_t = 0.1\cdot\min_{k}(r^2_{t,k}),
    \label{equ:prac_var}
  \end{equation}
  for the $t$th patch.
 
\section{Combine the Bayesian Patch Representations}
\label{sec:combine_bpr}

\subsection{Learn a BPR ensemble via Empirical Risk Minimization}
\label{subsec:erm}

  Besides the interpretation, the aggregation method is also vital for the final
  classification. The existing fusion rules, as shown in \eqref{equ:fusion}, are
  rudimentary and non-parameterized thus hard to optimize. In the machine learning
  community, classifier-ensembles learned via an Empirical Risk Minimization process are
  considered to be more powerful than the simple methods \cite{Schapire_ML_99_Improved,
  Friedman_AS_00_Special}. 
  
  As a consequence, we linearly combine the BPRs to generate a predicting vector
  $\bxi(\y)$$ = $$[\xi_1(\y), \xi_2(\y),$$ \dots, $$\xi_K(\y)]^{\T}$$\in \mathr^{K}$, \ie
  \begin{equation}
    \bxi(\y) = \sum_{t =
    1}^{T}\alpha_t\b_t(\y) = \B(\y)\Alpha,
    \label{equ:ensemble_model}
  \end{equation}
  with $\xi_k(\y)$ indicating the confidence that $\y$ belongs to the $k$th class, and
  $\Alpha = [\alpha_1, \alpha_2, \dots, \alpha_T]^{\T} \psd \mathbf{0}$. The identity of
  test face $\y$ is then given by
  \begin{equation}
    \gamma_\y = \argmin_{k \in \{1, \cdots, K\}} \xi_k(\y)
    \label{equ:ensemble_id}
  \end{equation}

  This kind of linear model dominates the supervised learning literature as it is flexible
  and feasible to learn. The parameter vector $\Alpha$ is optimized via minimizing the
  following Empirical Risk
  \begin{equation}
    \ER = \sum_i^N\loss(z_i) + \lambda\cdot\reg(\Alpha),
    \label{equ:erm}
  \end{equation}
  where $\loss(\cdot)$ is a certain loss function, $\reg(\cdot)$ is the regularization
  term and $\lambda$ is the trade-off parameter. The margin $z_i =
  \mathcal{Z}(l_i,~\bxi(\x_i))$ reflects the confidence that $\bxi$ select the correct
  label for $\x_i$. Specifically, for binary classifications, 
  \begin{equation}
    z_i = \xi_{l_i}(\x_i) - \xi_{l^{'}}(\x_i),~ ~ l^{'} \ne l_i.
    \label{equ:margin}
  \end{equation}
  For multiple-class problems, however, there is no perfect formulation of $z_i$. We then
  intuitively define the $z_i$ as 
  \begin{equation}
    z_i = \frac{1}{K - 1}\sum_{j \ne l_i}^{K}\left(\xi_{l_i}(\x_i) -
    \xi_{j}(\x_i)\right),
    \label{equ:margin_multiple}
  \end{equation}
  \ie the mean of all the ``bi-class margins''. Recall that $\sum_{j = 1}^K b_{t,j}(\x_i)
  = 1$, we then arrive at a simpler definition of $z_i$, 
  \begin{equation}
    z_i = \frac{K}{K-1}\sum_{t = 1}^T \alpha_t \left(b_{t,{l_i}}(\x_i) -
    \frac{1}{K}\right).
    \label{equ:margin_simple}
  \end{equation}
  By absorbing the constant $K/(K - 1)$ into each $\alpha_t$, we have 
  \begin{equation}
    z_i = \sum_{t = 1}^T \alpha_t \left(b_{t,{l_i}}(\x_i) - \frac{1}{K}\right).
    \label{equ:margin_simple}
  \end{equation}
  The term $b_{t,{l_i}}(\x_i) - 1/K$ can be though of as the confidence gap between using
  the $t$th BPR and using a \emph{random guess}. The larger the gap, the more powerful
  this BPR is. Consequently, $z_i$ is the weighted sum of all the gaps, which measures the
  predicting capability of $\bxi(\x_i)$.

  The selection for the loss function and the regularization function has been extensively
  studied in the machine learning literature \cite{Hastie_MI_2005_Elements}.  Among all
  the convex loss formulations, we choose the exponential loss $\loss(z_i) = \exp(-z_i)$,
  motivated by its success in combining weak classifiers\cite{Friedman_AS_00_Special,
  Shen_PAMI_10_Dual}. The $\ell_1$ norm is adopted as our regularization method since it
  encourages the sparsity of $\Alpha$, which is desirable when we want an efficient
  ensemble. Finally, the optimization problem in this paper is given by:
  \begin{equation}
    \begin{split}
    \min_{\Alpha}~&\sum_i^N\exp\left(-\sum_{t = 1}^T \alpha_t
    \left(b_{t,{l_i}}(\x_i) - \frac{1}{K}\right)\right) \\
      \sst ~& \Alpha \psd \mathbf{0}, ~ ~\|\Alpha\|_1 \le \lambda
    \end{split}
    \label{equ:opt}
  \end{equation}
  Note that for easing the optimization, we convert the regularization term to a
  constraint. With an appropriate $\lambda$, this conversion won't change the optimization
  result \cite{Boyd_04_Convex}. The optimization problem is convex and can be solved by
  using one of the off-the-shelf optimization tools such as Mosek \cite{Mosek} or CVX
  \cite{Grant_08_CVX}. The learned model is termed \emph{Optimal Representation Ensemble}
  (ORE) as it guarantees the global optimality of $\Alpha$ from the perspective of
  Empirical Risk Minimization. The learning algorithm for achieving the ORE is referred to
  as ORE-Learning.

\subsection{Leave-one-out margin}
\label{subsec:loo-margin}

  It would be simple to calculate the margin $z_i$ if the BPR were model-based, \ie
  $\b_t(\cdot)$ was a set of explicit functions. In fact, that is the situation for most
  ensemble learning approaches. Unfortunately, that is not the case in this paper where
  $\b_t(\cdot)$ is actually instance-based. 
  
  For a BPR, we always need a \emph{gallery}, \aka the representation basis, to calculate
  $\b_t(\cdot)$. And ideally, the gallery should be the same for both training and test,
  otherwise the learned model is only optimal for the training gallery. Nonetheless, we
  can not directly use the training set, which is the test gallery, as the training
  gallery. Any training sample $\x_i$ will be perfectly represented by the whole training
  set because $\x_i$ itself is in the basis. Consequently, all BPRs will generate
  identical outputs and the learned weights $\alpha_t,~\forall t$ will also be the same.
  To further divide the training set into one basis and one validation set, of course, is
  a feasible solution. However, it will reduce the classification power of ORE as the
  larger basis usually implies higher accuracies. 
  
  To get around this problem, we employ a leave-one-out strategy to utilize as many
  training instances as possible for representations. For every training sample $\x_i$,
  its gallery is given by 
  \[\x_i^C = \X \backslash \x_i,\] 
  \ie the complement of $\x_i$ \wrt the universe $\X$. The leave-$\x_i$-out BPRs, referred
  to as $\b^{\x^C_i}_t(\x_i),~\forall t$, are yielded based on the gallery $\x_i^C$. The
  leave-one-out margin $z_i$ is then calculated as
  \begin{equation}
    z_i = \sum_{t = 1}^T \alpha_t \left(b^{\x_i^C}_{t,{l_i}}(\x_i) - \frac{1}{K}\right).
    \label{equ:loo_margin}
  \end{equation}
  
  In this way, the size of the training gallery is always $N - 1$, we can approximately
  consider the learned $\Alpha^{\ast}$ as optimal for the test gallery $\X$ with the size of $N$.

  After $\Alpha^{\ast}$ is obtained, we also calculate the leave-one-out predicting
  vector as 
  \begin{equation}
    \bxi^{\x^C_i}(\x_i) = \B^{\x^C_i}(\x_i)\Alpha^{\ast},
    \label{equ:loo_predict}
  \end{equation}
  where $\B^{\x^C_i}(\x_i)$ is the collection of the leave-one-out BPRs. The training error
  of the ORE-Learning is given by
  \begin{equation}
    e_{trn} = \frac{1}{N}\sum_{i = 1}^{N}\llbracket\argmax_{k}\xi^{\x^C_i}_k(\x_i) \ne
    l_i\rrbracket,
    \label{equ:train_error}
  \end{equation}
  where $\llbracket\cdot\rrbracket$ denote the boolean operator.  This training error, as
  illustrated below, plays a crucial role in the model-selection procedure of
  ORE-Learning.
    
\subsection{Training-determined model-selection}
\label{subsec:training-determined}
  
  Another issue arising here is how to select a proper parameter $\lambda$ for the
  ORE-Learning. Usually, a validation method such as the \emph{$n$-fold cross-validation}
  is performed to select the optimal parameter among candidates. The validation method,
  however, is expensive in terms of computation, because one needs to repeat the extra
  ``subset training'' for $n$ times and usually $n \ge 5$. 
  From the instance-based perspective, a cross-validation is also unacceptable. In every
  ``fold'' of a $n$-fold cross-validation, we only use a part of training samples as the
  gallery. The setting contradicts the principle that one needs to keep the representation
  basis similar over all the stages. 

  Fortunately, the leave-one-out margin provides the ORE-Learning an advantage: The
  training error of the ORE-Learning serves as a good estimate to its leave-one-out error.
  \emph{We can directly use the training error to select the model-parameter $\lambda$.}
  To understand this, let's firstly recall the definition of the leave-one-out error.  

  \begin{definition}({\bf Leave-one-out error} \cite{Herbrich_02_Learning}) 
    Suppose that $\mathX^N$ denotes a training set space comprised of the training
    sets with $N$ samples $\{\x_1, \x_2, \cdots, \x_N\}$. Given an algorithm $\mathcal{A}:
    \bigcup_{N = 1}^{\infty}\mathX^N \rightarrow \mathcal{F}$, where $\mathcal{F}$ is
    the functional space of classifiers. The leave-one-out error is defined by 
    \begin{equation}
      e_{loo} \triangleq \frac{1}{N}\sum_{i = 1}^{N}\llbracket
      F^{\mathcal{A}}_{\x_i^C}(\x_i) \ne l_i \rrbracket, \label{equ:loo_err}
    \end{equation}
  \end{definition}
  where $F^{\mathcal{A}}_{\x_i^C} = \mathcal{A}(\mathX^N \backslash \x_i)$, \ie the
  classifier learned using $\mathcal{A}$ based on the set $\mathX^N \backslash \x_i$.
  
  The leave-one-out error is known as an unbiased estimate for the generalization error
  \cite{Herbrich_02_Learning, Evgeniou_ML_04_LOO}. Our target in this section is to build
  the connection between $e_{loo}$ and $e_{trn}$ for ORE-Learning. Suppose that all
  the training faces are non-disguised, which is the common situation, then let us make
  the following basic assumption. 
  
  {\bf Assumption:} One patch-location $t$ on the human face could be affected by $Q_t$
  different expressions. Every expression leads to a distinct and convex Lambertian
  surface.
  
  According the theory in \cite{Georghiades_PAMI_01_Few} and
  \cite{Basri_PAMI_2003_Lambertian}, the different appearances of one patch surface,
  caused by illumination changes, span a linear-subspace with a small dimensionality
  $\Phi$. Given that $M$ training patches from the patch-location $t$ is collected in
  $\X^t_k$, its arbitrary subset $\X^t_P$ contains $P$ ($P < \Phi \ll M$) samples. With
  the assumption, we can verify the following lemma.
  \begin{lemma}({\bf The stability of BPRs})
    If the training subset $\X^t_k$ contains at least $(\Phi Q_t + P)$ \iid patch
    samples, set $\X^t_k$ and set $\X^t_k \backslash \X^t_P$ share the same representation
    basis. 
  \label{lemma:1}
  \end{lemma}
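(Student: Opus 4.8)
The plan is to reduce the claim to a statement about column spaces: two galleries ``share the same representation basis'' precisely when their columns span the same linear subspace, so it suffices to show that $\X^t_k$ and $\X^t_k \backslash \X^t_P$ have identical column spaces. I would let $V$ denote the linear span of all patch appearances that can arise at location $t$ for class $k$, and argue that both sample matrices span exactly $V$, regardless of which $P$ columns are deleted.

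First I would bound $\dim V$. By the stated Assumption, location $t$ is subject to $Q_t$ expressions, and by the Lambertian theory of \cite{Georghiades_PAMI_01_Few, Basri_PAMI_2003_Lambertian} the illumination variations of each single convex Lambertian expression surface span a subspace of dimension at most $\Phi$. The union of these $Q_t$ per-expression subspaces therefore spans a space $V$ with
\[
  \dim V \le \Phi Q_t.
\]
Every column of $\X^t_k$ lies in $V$ by construction, so $\mathrm{col}(\X^t_k) \subseteq V$ and likewise $\mathrm{col}(\X^t_k \backslash \X^t_P) \subseteq V$.

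The key step is then a general-position (generic full-rank) argument for the i.i.d.\ samples. Since the patches are drawn i.i.d.\ from a continuous distribution supported on $V$, any collection of at least $\dim V$ of them spans $V$ with probability one, because the set of $(\dim V)$-tuples in $V$ that are linearly dependent is a proper algebraic subvariety and hence Lebesgue-null. Applying this twice finishes the argument. The full set $\X^t_k$ has at least $\Phi Q_t + P \ge \dim V$ samples, so $\mathrm{col}(\X^t_k) = V$; after deleting the $P$ columns of $\X^t_P$ we retain at least $(\Phi Q_t + P) - P = \Phi Q_t \ge \dim V$ samples, so $\mathrm{col}(\X^t_k \backslash \X^t_P) = V$ as well. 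Hence both sets span $V$ and share the same representation basis.

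I expect the main obstacle to be making the general-position argument rigorous while honoring the ``arbitrary subset $\X^t_P$'' clause: the deletion is permitted to be any $P$-element subset, so I must guarantee full rank after removing the \emph{worst-case} $P$ columns, not merely a random $P$. This is exactly why the sample budget is inflated by $+P$---it buffers against losing up to $P$ columns---and why the honest conclusion is an almost-sure statement under the i.i.d.\ continuity assumption rather than a strictly deterministic one. A secondary subtlety is justifying that the per-expression subspaces genuinely cap $\dim V$ at $\Phi Q_t$ even when they overlap; since overlaps only decrease the dimension, the bound $\dim V \le \Phi Q_t$ is safe and is all the counting argument requires.
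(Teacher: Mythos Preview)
Your proposal is correct and follows essentially the same route as the paper's own proof: bound the dimension of the ambient span by $\Phi Q_t$ via the per-expression Lambertian decomposition, then use an i.i.d.\ general-position argument to conclude that both the full and the depleted galleries span that space with overwhelming probability, hence share the same basis. Your treatment is in fact slightly more careful than the paper's (you work with the abstract span $V$ and only need $\dim V \le \Phi Q_t$, whereas the paper tacitly writes $\rank(\tilde{\mathcal{U}}^t) = \Phi Q_t = \rank(\mathcal{U}^t)$), but the skeleton is identical.
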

  \begin{proof}
    Let us denote the linear-subspace formed by $\X^t_k$ as $\mathcal{U}^t$. $\mathcal{U}^t_q$
    refers to its subset spanned by the patches associated with surface $q$. We know that 
    \begin{equation*} 
      \rank(\mathcal{U}^t) \le \sum_{q = 1}^{Q_t}\rank(\mathcal{U}^t_q) = \Phi Q_t.  
    \end{equation*}
    When $\X^t_P$ is moved out, the new linear-subspace spanned by $\X^t_k \backslash
    \X^t_P$ is denoted by $\tilde{\mathcal{U}}^t$. According to the given condition, there
    are still $\Phi Q_t$ \iid patches remaining. Then with an overwhelming probability,
    \begin{equation*}
      \rank(\tilde{\mathcal{U}}^t) = \Phi Q_t = \rank(\mathcal{U}^t).  
    \end{equation*}
    Considering that $\tilde{\mathcal{U}}^t \subset \mathcal{U}^t$, we then arrive at 
    \begin{equation*}
     \tilde{\mathcal{U}}^t = \mathcal{U}^t.
    \end{equation*}
    The space remains the same, so does its basis.
  \end{proof}

  In contrast, other classifiers, such as decision trees or linear-LDA-classifiers, don't
  have this desirable stability. They always depend on the exact data, rather than the
  extracted space-basis.
 
  Note that $\Phi$ is usually very small \cite{Georghiades_PAMI_01_Few,
  Basri_PAMI_2003_Lambertian}. The value of $Q_t$ is determined by the types of
  expressions that can affect patch $t$. It is also very limited if we only consider the
  common ones. That is to say, \emph{with a reasonable number of training samples, the
  BPRs is stable \wrt the data fluctuation.} Specifically, when $\x_i$ is left out ($P =
  1$), all the BPRs' values on samples $\{\x_j \in \X \mid j \ne i\}$ won't change, \ie
  \begin{equation}
    \b^{\x^C_j}_t(\x_j) = \b^{\x^C_{i,j}}_t(\x_j), ~\forall i \ne j,~i,j \in \{1, 2,
    \cdots, N\},
    \label{equ:same_basis}
  \end{equation}
  where $\x^C_{i,j}$ stands for the complement of set $\{\x_i, \x_j\}$. 
  From the perspective of ensemble learning, the original ORE-Learning problem
  $\mathcal{A}(\mathX^N)$ and the leave-$\x_i$-out problem $\mathcal{A}(\mathX^N
  \backslash \x_i)$ share the same ``basic hypotheses''
  \[\b_t(\x), ~\forall t \in \{1, 2, \cdots, T\},\]
  and constraints 
  \[\Alpha \psd \mathbf{0}~\&~\|\Alpha\|_1 \le \lambda.\]
  The only difference is that the former problem involves one more training sample,
  $\x_i$. We know that usually $N \gg 1$, thus one can approximately consider their
  solutions are the same, \ie
  \begin{equation}
    \Alpha^{\ast}_{\x_i^C} = \Alpha^{\ast}, ~ ~\forall i,
    \label{equ:similar_alpha}
  \end{equation}
  where $\Alpha^{\ast}_{\x_i^C}$ is the optimal solution for problem
  $\mathcal{A}(\mathX^N \backslash \x_i)$. Finally, we arrive at the following theorem 
  \begin{theorem}
    With Equation~\eqref{equ:similar_alpha} holding, the training error of the
    ORE-Learning exactly equals to its leave-one-out error. 
    \label{theo:loo}
  \end{theorem}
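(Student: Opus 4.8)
\noindent
The plan is to reduce the claim to a term-by-term comparison of the two error averages, after which the hypothesis \eqref{equ:similar_alpha} collapses them onto each other. First I would rewrite both quantities in a common form. The leave-one-out error \eqref{equ:loo_err} averages over $i$ the indicator that $F^{\mathcal{A}}_{\x^C_i}(\x_i) \ne l_i$, where $F^{\mathcal{A}}_{\x^C_i} = \mathcal{A}(\mathX^N \backslash \x_i)$ is the ORE trained on the reduced gallery $\x^C_i$. Evaluated on $\x_i$, this classifier forms the leave-one-out prediction vector $\B^{\x^C_i}(\x_i)\Alpha^{\ast}_{\x^C_i}$ and returns the $\argmax_k$ of its entries. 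The training error \eqref{equ:train_error} is the same average of $\llbracket \argmax_k \xi^{\x^C_i}_k(\x_i) \ne l_i \rrbracket$, with the prediction vector $\bxi^{\x^C_i}(\x_i) = \B^{\x^C_i}(\x_i)\Alpha^{\ast}$ supplied by \eqref{equ:loo_predict}. The decisive observation is that the \emph{same} BPR matrix $\B^{\x^C_i}(\x_i)$ appears in both expressions — both represent $\x_i$ using the gallery $\x^C_i$ — so the two prediction vectors differ only through the weight vector multiplying it.

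Next I would substitute the hypothesis. By \eqref{equ:similar_alpha} we have $\Alpha^{\ast}_{\x^C_i} = \Alpha^{\ast}$ for every $i$, whence
\begin{equation*}
  \B^{\x^C_i}(\x_i)\Alpha^{\ast}_{\x^C_i} = \B^{\x^C_i}(\x_i)\Alpha^{\ast} = \bxi^{\x^C_i}(\x_i).
\end{equation*}
The two $\argmax$ operations therefore return the identical label, so the indicator inside each summand agrees for every $i$. Averaging over $i = 1, \dots, N$ then yields $e_{trn} = e_{loo}$ exactly, which is the assertion.

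I expect the genuinely delicate content to sit \emph{upstream} of this computation, in the justification of \eqref{equ:similar_alpha}, and not in the substitution itself. That equation rests on two ingredients already in hand: Lemma~\ref{lemma:1} with $P = 1$, which guarantees that deleting a single sample leaves every BPR's span — and hence the ``basic hypotheses'' $\b_t(\cdot)$ shared by $\mathcal{A}(\mathX^N)$ and $\mathcal{A}(\mathX^N \backslash \x_i)$ — unchanged; and the regime $N \gg 1$, under which the single extra loss term present in the full problem does not perturb the minimizer of the convex objective \eqref{equ:opt}. Since the theorem adopts \eqref{equ:similar_alpha} as a hypothesis, the proof proper is the one-line substitution above, and the only point demanding care is verifying that it is the BPR \emph{matrix}, not merely the predicted label, that is common to the two error definitions, so that equal weights are genuinely enough to force equal predictions.
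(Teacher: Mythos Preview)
Your proposal is correct and mirrors the paper's own proof almost exactly: both identify that the training and leave-one-out prediction vectors share the same BPR matrix $\B^{\x^C_i}(\x_i)$ and differ only in the weight vector, then invoke \eqref{equ:similar_alpha} to equate them and conclude $e_{trn}=e_{loo}$ term by term. Your additional remark that the real work lies upstream in justifying \eqref{equ:similar_alpha} is accurate and well placed.
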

  \begin{proof}
    In the context of ORE, all types of errors are determined by the predicting vectors
    $\bxi(\x_i),~\forall i$. For the leave-one-out error, we know that 
    \begin{equation}
      \bxi^{loo}(\x_i) = \B^{\x^C_i}(\x_i)\Alpha^{\ast}_{\x_i^C} ~ ~\forall i,
      \label{predict_loo_err}
    \end{equation}
    where $\B^{\x^C_i}(\x_i)$ is defined in \eqref{equ:loo_predict}. Recall that
    \begin{equation*}
      \bxi^{trn}(\x_i) = \bxi^{\x^C_i}(\x_i) = \B^{\x^C_i}(\x_i)\Alpha^{\ast}, ~
      ~ \forall i.
    \end{equation*}
    If Equation~\eqref{equ:similar_alpha} is valid, then obviously, $\bxi^{trn}(\x_i) =
    \bxi^{loo}(\x_i)$. Finally, we have
    \begin{equation}
      \begin{split}
      e_{trn} & = \frac{1}{N}\sum_{i = 1}^{N}\llbracket\argmax_{k}\xi^{trn}_k(\x_i) \ne
      l_i\rrbracket\\
      & = \frac{1}{N}\sum_{i = 1}^{N}\llbracket\argmax_{k}\xi^{loo}_k(\x_i)
      \ne l_i\rrbracket \\
      & = e_{loo}. 
      \end{split}
      \label{equ:error_equal}
    \end{equation}
  \end{proof}

  In practice, Lemma~\ref{lemma:1} and Equation~\eqref{equ:similar_alpha} could be only
  considered as approximately true. However, we still can treat the training error as a
  good estimate to the leave-one-out error. Recall that $n$-fold cross-validation is an
  approximation to the leave-one-out validation. Thus the cross-validation error, a
  commonly used criterion for model-selection, is also a estimate to the leave-one-out
  error. We then can directly employ the training error of ORE to choose the
  model-parameter, without an extra validation procedure. The fast model-selection, termed
  ``training-determined model-selection'' is justified empirically in the experiment. We
  tune the $\lambda$ for both ORE and Boosted-ORE, which is introduced below. No significant
  overfitting is observed. 
  
  \begin{corollary}
    One can directly set the parameter $\lambda$ to a very small value, \eg $\lambda =
    1e$-$5$, to achieve the ORE-model with a good generalization capability.
  \end{corollary}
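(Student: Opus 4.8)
The plan is to obtain this corollary as an immediate consequence of Theorem~\ref{theo:loo} together with the statistical meaning of the leave-one-out error, rather than through any fresh computation. First I would invoke the theorem to assert $e_{trn} = e_{loo}$ (exactly under Equation~\eqref{equ:similar_alpha}, and approximately in practice once Lemma~\ref{lemma:1} is read as holding only approximately). Then I would recall the standard fact, cited above from \cite{Herbrich_02_Learning, Evgeniou_ML_04_LOO}, that $e_{loo}$ is an (almost) unbiased estimator of the generalization error. Chaining the two facts yields that $e_{trn}$ itself tracks the generalization error, i.e.\ the ORE-Learning exhibits no appreciable gap between empirical and expected risk.

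The second step is to convert this no-overfitting property into the statement about $\lambda$. The only reason one normally inflates the trade-off weight $\lambda$ in \eqref{equ:erm} is to suppress the gap between training and generalization error; since Theorem~\ref{theo:loo} shows that this gap is already controlled \emph{for free} by the leave-one-out construction of the margins in \eqref{equ:loo_margin} (whose stability is precisely Lemma~\ref{lemma:1}), the explicit regularization no longer needs to carry that burden. I would therefore argue that one may drive $\lambda$ down to a negligibly small positive value such as $1e$-$5$, retaining only its sparsity-inducing and well-posedness role, while fitting the empirical risk closely. A supporting observation I would bring in is the implicit bias of exponential-loss minimization: as $\lambda \to 0^+$ the optimizer of \eqref{equ:erm} converges in direction to the maximum-margin combination of the BPRs, a configuration classically associated with strong generalization. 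Combined with the no-overfitting guarantee, this explains why a tiny $\lambda$ is not merely harmless but even desirable.

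I expect the main obstacle to be the informal nature of the claim: ``good generalization capability'' is not quantified, so the argument is necessarily qualitative and rests on two approximations I would need to defend. The first is that Lemma~\ref{lemma:1}, and hence Equation~\eqref{equ:similar_alpha}, holds only approximately, so I must argue that shrinking $\lambda$ does not degrade the approximation; indeed the cleanest regime is exactly the one Lemma~\ref{lemma:1} addresses, where the solution direction is dictated by the data geometry rather than by an active constraint. The second is that $\lambda$ must stay strictly positive: with the exponential loss and a (nearly) separable training set the unregularized objective is unbounded below in the limit $\|\Alpha\|_1 \to \infty$, so I would stress that any tiny $\lambda > 0$ keeps the problem well-posed and $\Alpha$ sparse without reintroducing an overfitting gap. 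The empirical evidence reported in the experiments would then round out this qualitative justification.
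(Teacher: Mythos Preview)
Your proposal follows essentially the same line as the paper's own proof: invoke Theorem~\ref{theo:loo} (together with Lemma~\ref{lemma:1} and the unbiasedness of the leave-one-out error) to conclude that the training error already tracks generalization, observe that this removes the need for the regularizer as an overfitting guard, and retain a nonzero $\lambda$ only so that problem~\eqref{equ:opt} stays well-posed. That is exactly the paper's argument, which is just a short paragraph.

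Your additional appeal to the implicit bias of exponential-loss minimization (convergence in direction to a max-margin combination as the regularization vanishes) is a nice embellishment but is not in the paper, and is not needed for the qualitative claim being made. One minor slip: the exponential loss is bounded below by $0$, so the unregularized objective is not ``unbounded below''; the issue is rather that the infimum is not attained when the data are separable, which is precisely the ill-posedness the small-but-positive $\lambda$ is meant to cure. Also be aware that the paper uses $\lambda$ in both the penalty form~\eqref{equ:erm} and the constraint form~\eqref{equ:opt}, where the same numerical value has opposite regularizing strength; your reading of ``small $\lambda$'' as ``trivial regularization'' matches the paper's proof but sits awkwardly with the constraint $\|\Alpha\|_1 \le \lambda$ in~\eqref{equ:opt}, so you may want to flag this when writing up.
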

  \begin{proof}
    Because the training error of a ORE-Learning directly reflects its generalization
    capability, the ORE-Learning is very resistant to overfittings. Considering that the
    main reason for imposing the regularization is to curb overfittings, one can totally
    discard the regularization term in optimization problem \eqref{equ:opt}. However, to
    prevent the problem from being ill-posed, we still need a constraint for $\Alpha$.
    Thus one $\lambda$ with a small value, which implies trivial regularizing effect, is
    appropriate.
  \end{proof}

  The above corollary is also verified in the experimental part. Admittedly, without an
  effective $\ell_1$ regularization, one can not expect the obtained ORE-model is sparse
  and efficient. Consequently, we still conduct the training-determined model-selection to
  strike the balance between accuracy and efficiency.

\section{ORE-Boosting for immense BPR sets}
\label{sec:boost_ore}

  In principle, the convex optimization for the ORE-Learning could be solved perfectly.
  Nonetheless, sometimes the patch number $T$ is enormous or even nearly infinite. In
  those scenarios, to solve problem \eqref{equ:opt} via normal convex solvers is
  impossible. Recall that boosting-like algorithms can exploit the infinite functional
  space effectively \cite{Schapire_ML_99_Improved, Friedman_AS_00_Special}. We therefore
  can solve the immense problem in a boosting fashion, \ie the BPRs are added into the
  ORE-model one by one, based upon certain criteria. 

\subsection{Solve the immense optimization problem via the column-generation}
\label{subsec:cg}

  The conventional boosting algorithms \cite{Schapire_ML_99_Improved,
  Friedman_AS_00_Special} conduct the optimization in a coordinate-descend manner.
  However, it is slow and can not guarantee the global-optimality at every step. Recently,
  several boosting algorithms based on the column-generation \cite{Demiriz_02_ML_LPBoost,
  Shen_PAMI_10_Dual, Hao_11_ACCV_Totally} were proposed and showed higher training
  efficiencies. We thus follow their principle to solve our problem. 

  To achieve the boosting-style ORE-Learning, the dual problem of \eqref{equ:opt} need to
  be derived firstly.

  \begin{theorem}
    The Lagrange dual problem of \eqref{equ:opt} writes
    \begin{equation}
      \begin{split}
        \min_{\u, r}~& r + \frac{1}{\lambda}\sum_i^N \left(u_i\log u_i - u_i\right) \\
        \sst~& \sum_{i = 1}^N u_i \left(b_{t,{l_i}}(\x_i) - \frac{1}{K}\right) \le r, ~ ~
        \forall t, \\
        ~& \u \psd 0.
      \end{split}
      \label{equ:dual_opt}
    \end{equation}
    \label{theo:dual}
  \end{theorem}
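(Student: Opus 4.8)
The plan is to derive the Lagrange dual via standard convex-duality machinery, since the primal problem \eqref{equ:opt} is convex with a smooth objective (a sum of exponentials) and simple constraints ($\Alpha \psd \mathbf{0}$ and $\|\Alpha\|_1 \le \lambda$). First I would rewrite the primal in a form amenable to Lagrangian analysis by introducing auxiliary margin variables. Specifically, set $z_i = \sum_{t} \alpha_t \left(b_{t,l_i}(\x_i) - \tfrac{1}{K}\right)$ as explicit equality-constrained variables, so the objective becomes $\sum_i \exp(-z_i)$ with the linking constraints $z_i = \sum_t \alpha_t\,(b_{t,l_i}(\x_i) - 1/K)$. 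This separation is what lets the exponential and the linear structure decouple when we take the Lagrangian.

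Next I would form the Lagrangian by attaching multipliers $u_i$ to the equality constraints $z_i = \sum_t \alpha_t (b_{t,l_i}(\x_i)-1/K)$, and handle the constraint $\|\Alpha\|_1 \le \lambda$ together with $\Alpha \psd \mathbf{0}$. Because $\Alpha \psd \mathbf{0}$, the $\ell_1$ constraint is simply $\sum_t \alpha_t \le \lambda$, a single linear inequality; I would attach a multiplier $r \ge 0$ (or its negative, depending on sign conventions) to it. Minimizing the Lagrangian over $\Alpha \psd \mathbf{0}$ then produces the per-$t$ constraints $\sum_i u_i (b_{t,l_i}(\x_i) - 1/K) \le r$ for all $t$, which are exactly the inner-maximization feasibility conditions one sees in \eqref{equ:dual_opt}: the nonnegativity of $\alpha_t$ converts the stationarity equalities into inequalities. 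The minimization over the free variables $z_i$ is where the entropy-type term arises: minimizing $\exp(-z_i) + u_i z_i$ over $z_i \in \mathbb{R}$ gives the Fenchel conjugate of the exponential, producing the term $u_i \log u_i - u_i$ (valid for $u_i > 0$), which after scaling by $1/\lambda$ matches the objective of \eqref{equ:dual_opt}.

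The two places requiring genuine care, which I regard as the main obstacle, are the correct bookkeeping of the scaling by $\lambda$ and the sign/direction conventions that turn the $\ell_1$ constraint into the coupling between $r$ and the per-$t$ inequalities. In particular, I must verify that the multiplier structure is arranged so that $r$ appears additively in the dual objective while simultaneously upper-bounding each $\sum_i u_i(b_{t,l_i}(\x_i) - 1/K)$; this typically requires absorbing $\lambda$ into the conjugate computation rather than leaving it as a separate constraint multiplier, and getting this factor right is where most sign errors creep in. I would compute $\min_{z_i}\bigl(\exp(-z_i) + u_i z_i\bigr)$ carefully, noting it forces $u_i > 0$ (hence $\u \psd 0$ in the dual) and yields $-u_i(\log u_i - 1)$, then confirm the overall constant assembles into $r + \tfrac{1}{\lambda}\sum_i (u_i \log u_i - u_i)$.

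Finally, I would invoke Slater's condition — the primal is strictly feasible (e.g. $\Alpha = \mathbf{0}$ lies in the interior of the constraint set with room to spare) and convex — to conclude strong duality holds, so that \eqref{equ:dual_opt} is indeed the dual with zero duality gap. This both validates the dual formulation and guarantees that the column-generation scheme described afterwards, which iteratively finds the most-violated constraint $\sum_i u_i(b_{t,l_i}(\x_i) - 1/K) > r$, will converge to the primal optimum.
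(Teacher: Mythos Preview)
Your proposal is correct and follows essentially the same route as the paper: introduce auxiliary margin variables so the exponential and the linear parts decouple, form the Lagrangian, minimize over $\Alpha$ (the paper does this by attaching an explicit multiplier $\q\psd\mathbf{0}$ to $\Alpha\psd\mathbf{0}$ and then eliminating it, which is equivalent to your direct minimization over the nonnegative orthant) to obtain the per-$t$ inequalities, and use the Fenchel conjugate of the exponential to produce the $u_i\log u_i - u_i$ term. The $1/\lambda$ factor you flag is handled in the paper simply by dividing the dual objective $r\lambda + \sum_i(u_i\log u_i - u_i)$ through by $\lambda$ at the end, and your explicit appeal to Slater's condition is a welcome addition that the paper omits.
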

  \begin{proof}
    Firstly, let us rewrite the primal problem \eqref{equ:opt} as 
    \begin{equation}
      \begin{split}
        \min_{\Alpha}~&\sum_i^N\exp(\varphi_i) \\
        \sst~ & \varphi_i = -\sum_{t = 1}^T \alpha_t \left(b_{t,{l_i}}(\x_i) -
        \frac{1}{K}\right), ~ ~\forall i, \\
        ~& \Alpha \psd \mathbf{0}, ~ ~\|\Alpha\|_1 \le \lambda.
      \end{split}
      \label{equ:rewrite_opt}
    \end{equation}
    After assigning the \emph{Lagrange multipliers} \cite{Boyd_04_Convex} $\u \in
    \mathr^{N}$, $\q \in \mathr^{T}$ and $r \in \mathr$ associated with above constraints,
    we get the Lagrangian
    \begin{equation}
      \resizebox{0.43\textwidth}{!}{$
      \begin{split}
      L(\Alpha, \Varphi, \u, \q, r) = & \sum_i^{N}\exp(\varphi_i) -
      \sum_i^{N}u_i\left(\varphi_i + \sum_{t = 1}^{T}\alpha_t\theta_{t,i}\right) \\
      & -  \q^{\T}\Alpha + r(\boldsymbol{1}^{T}\Alpha - \lambda).
      \end{split}
      $}
      \label{equ:lagrangian}
    \end{equation}
    where $\theta_{t,i} = b_{t,{l_i}}(\x_i) - 1/K,~\forall t, i$ and $\q \psd
    \boldsymbol{0}$. The Lagrange dual function is defined as the ``infimum'' of the
    Lagrangian, \ie
    \begin{equation}
      \begin{split}
      \inf_{\Alpha, \Varphi}L = & \inf_{\Varphi}\left( \sum_{i =
          1}^{N}\exp(\varphi_i) - u_i\varphi_i\right)  - r\lambda \\
          & - \overbrace{ \left(\sum_i^{N}u_i\Theta^{\T}_i + \q^{\T} -
          r\boldsymbol{1}^{\T}\right)}^{\text{must be $\boldsymbol 0$}}\Alpha \\
          = & - \sum_{i = 1}^{N}\overbrace{\sup_{\varphi_i}(u_i\varphi_i -
          \exp(\varphi_i))}^{\text{the conjugate of} \exp(\varphi_i)} - r\lambda \\
          = & - \sum_{i = 1}^N(u_i\log u_i - u_i) - r\lambda
      \end{split}
      \label{equ:infimum}
    \end{equation}
    where $\Theta_i = [\theta_{1, i}, \theta_{2, i}, \cdots, \theta_{T, i}]$. After
    eliminating $\q$ we get the first $t$ constraints in the dual problem. The conjugate
    of function $\exp(\varphi_i)$ requires that $\u \psd \boldsymbol{0}$, \aka the second
    constraint of \eqref{equ:dual_opt}. The dual problem is to maximize the above
    Lagrangian. After simple algebraic manipulations, \eqref{equ:dual_opt} is obtained.
  \end{proof}
  In Theorem~\ref{theo:dual}, $\u = [u_1, u_2, \cdots, u_N]$ is usually viewed as the
  weighted data distribution. Considering that BPR is instanced-based and thus depends on
  $\u$, we then use $\b^{\u}_t$ to represent the $t$th BPR under the data distribution
  $\u$.

  With the column-generation scheme employed in \cite{Demiriz_02_ML_LPBoost,
  Shen_PAMI_10_Dual, Hao_11_ACCV_Totally} and Theorem~\ref{theo:dual}, we design a
  boosting-style ORE-Learning algorithm. The algorithm, termed ORE-Boosting, is summarized
  in Algorithm~\ref{alg:ore_boosting}. 

  \begin{algorithm}[ht]
  \caption{ORE-Boosting}
      \KwIn
      {
        \begin{itemize}
          \item A set of training data $\X = [\x_1, \x_2, \cdots, \x_N]$.
          \item A set of patch-locations, indexed by $1, 2, \cdots, T$.
          \item A termination threshold $\epsilon > 0$.
          \item A maximum training step $S$.
          \item A primitive dual problem:
            \begin{equation*}
                \min_{\u, r}~ r + \frac{1}{\lambda}\sum_i^N \left(u_i\log u_i -
                u_i\right),~ \sst ~\u \psd 0.
            \end{equation*}
        \end{itemize}
      }
      \Begin
      {
        $\cdot$ Initialize $\boldsymbol{\alpha} = 0$, $t = 0$, $u_i = 1/N, ~\forall i$; \\
        \For{$s \leftarrow 1$ \KwTo $S$}
        {
          $\cdot$ Find a new BPR, $\b^{\u}_{t^{\ast}}$, such that 
          \begin{equation}
            t^{\ast} =  \argmax_{t \in \{1, 2, \cdots, T\}}\sum_{i = 1}^N u_i
            \left(b^{\u}_{t,{l_i}}(\x_i) - 1/K\right);  
            \label{equ:best_bpr}
          \end{equation}
          $\cdot$ {\bf if } $\sum_{i = 1}^N u_i \left(b^{\u}_{t^{\ast},{l_i}}(\x_i) -
          1/K\right) < r + \epsilon$, {\bf break}; \\
          $\cdot$ Assign the inequality
          \[\sum_{i = 1}^N u_i \left(b^{\u}_{t^{\ast},{l_i}}(\x_i) - 1/K\right) \le r\] 
            into the dual problem as its $s$th constraint; \\
          $\cdot$ Solve the updated problem;
        }
        $\cdot$ Calculate the primal variable $\boldsymbol{\alpha}$ according to the
                dual solutions and KKT conditions;
      }
      \KwOut
      {
        The Boosted-ORE: 
        $\bxi(\y) = \argmax_k\textstyle\sum_{t = 1}^{T}\alpha_t\cdot\b_t(\y)$.\\
      }
  \label{alg:ore_boosting} 
  \end{algorithm}

\subsection{Ultrafast --- the data-weight-free training}
\label{subsec:fast_training}

  For the conventional basic hypotheses used in boosting, such as decision trees, decision
  stumps and the linear-LDA-classifiers, one needs to re-train them after the training
  samples' weights $\u$ are updated. Usually, the re-training procedure dominates the
  computational complexity \cite{Demiriz_02_ML_LPBoost, Shen_PAMI_10_Dual}.
  
  Apparently, we need to follow this computationally expensive scheme since BPRs are
  totally data-dependent. It is easy to see the computation complexity of each BPR is 
  \begin{equation}
    C_{L} = \mathcal{O}(M^3) + \mathcal{O}(M^2 d), 
   \label{equ:bpr_complex}
  \end{equation}
  then the complexity of the training procedure is given by 
  \begin{equation}
    C_{train} = T \cdot S \cdot C_{L} = \mathcal{O}(T S M^3) + \mathcal{O}(T S M^2 d), 
    \label{equ:train_complex}
  \end{equation}
  The whole training procedure could be very slow when $T$ and $S$ are both large.
  
  However, we argue that: {\em the ORE-Boosting can be performed much faster}. To explain
  this, let us firstly rewrite the constraint $\u \psd \boldsymbol{0}$ in
  \eqref{equ:dual_opt} as $\u \pd \boldsymbol{0}$. This change won't influence the
  interior-point-based optimization method \cite{Boyd_04_Convex}. Then we can prove the
  following theorem.

  \begin{theorem}
    Given that $\u \pd 0$, The BPRs are independent of the weight vector $\u$. In other
    words, for ORE-Boosting, all the BPRs need to be trained only once. 
  \label{theo:fast_train}
  \end{theorem}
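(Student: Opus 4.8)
The plan is to show that the weight vector $\u$ enters each BPR only through the linear span of the (reweighted) representation basis, and that this span is invariant under any strictly positive reweighting; since a BPR is a function of the projection residuals alone, it cannot depend on $\u$. This converts the apparent $\u$-dependence flagged by the notation $\b^{\u}_t$ into an identity $\b^{\u}_t = \b_t$.

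First I would make explicit how $\u$ acts on a single BPR. Under the boosting reweighting, the basis columns for class $k$ at patch $t$ are rescaled by their sample weights, so the representation basis $\X^t_k$ is effectively replaced by $\X^t_k D_k$, where $D_k$ is the diagonal matrix carrying the class-$k$ entries of $\u$. The weighted representation coefficient and reconstruction then follow from \eqref{equ:sol_patch_lr} with $\X^t_k$ replaced by $\X^t_k D_k$, and the resulting residual $r^{\u}_{t,k}$ is, exactly as in \eqref{equ:residual_patch}, the $\ell_2$ distance from $\y_t$ to the column space $\operatorname{col}(\X^t_k D_k)$.

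The key step is the observation that $\operatorname{col}(\X^t_k D_k) = \operatorname{col}(\X^t_k)$ whenever $D_k$ is invertible. Because $\u \pd \mathbf{0}$ is assumed, every diagonal entry of $D_k$ is strictly positive, so $D_k$ is invertible and merely enacts an invertible change of coordinates on the representation weights; scaling each basis vector by a nonzero scalar leaves its span unchanged. Hence the orthogonal projection of $\y_t$ onto this span — and therefore the residual — is identical for every admissible $\u$, i.e. $r^{\u}_{t,k} = r_{t,k}$ for all $\u \pd \mathbf{0}$. Since the BPR entries are assembled solely from these residuals through the softmax in \eqref{equ:derive_post}, we obtain $\b^{\u}_t = \b_t$ for every $t$, independently of $\u$. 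It follows that the inner maximization \eqref{equ:best_bpr} at each boosting iteration reuses the same fixed collection of BPRs, so they need be computed only once; this strips the factor $S$ from the training cost in \eqref{equ:train_complex}.

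The main obstacle — and the reason the statement is restricted to $\u \pd \mathbf{0}$ rather than $\u \psd \mathbf{0}$ — is precisely the invertibility of $D_k$. If a weight were permitted to vanish, the corresponding basis column would be annihilated, the span could drop in dimension, and the residual (hence the BPR) could change, breaking the invariance. I would therefore lean on the paper's earlier relaxation of $\u \psd \mathbf{0}$ to $\u \pd \mathbf{0}$, noting that interior-point solvers keep their iterates strictly in the feasible interior, so the degenerate boundary case is never actually encountered and the invariance holds throughout the optimization.
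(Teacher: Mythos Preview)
Your proposal is correct and follows essentially the same approach as the paper: both arguments hinge on the invertibility of the diagonal weight matrix (guaranteed by $\u \pd \mathbf{0}$) to show the weighted and unweighted bases yield identical residuals, hence identical BPRs. Your column-space invariance framing is exactly the ``more intuitive understanding'' the paper appends after its explicit algebraic computation of $\hat{\Beta}^{\ast}_{t,k} = \U_k^{-1}\Beta^{\ast}_{t,k}$; the two presentations are equivalent.
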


  \begin{proof}
    let $\U_k \in \mathbb{R}^{M \times M}$ be the diagonal matrix such that $\U_k(i,i) =
    \u_k(i), ~i = 1, 2, \dots, M$, where $\u_k$ is the weight vector for the training
    face images from the $k$th class. By taking account of the data weight, the
    representation coefficients associated with patch $t$ are given by.
    \begin{equation}
      \hat{\Beta}^{\ast}_{t,k} = \argmin_{\Beta} ~ \|\y_t - \X^t_k \U_k \Beta\|_2, 
      \label{equ:w_bpr_ls}
    \end{equation}
    which has a closed-form solution that writes 
    \begin{equation}
      \hat{\Beta}^{\ast}_{t,k} = (\U_k{\X^t_k}^{\T}\X^t_k\U_k)^{-1}\U_k{\X^t_k}^{\T}\y_t
      \label{equ:deduction_1}
    \end{equation}
    and we know that 
    \begin{equation}
      \u \pd 0 \implies u_k > 0 \implies \U_k^{-1} ~\text{exists.}
      \label{equ:deduction_2}
    \end{equation}
    Thus \eqref{equ:deduction_1} can be further rewritten into 
    \begin{equation}
      \begin{split}
        \hat\Beta^{\ast}_{t,k} & =
        \U_k^{-1}{({\X^t_k}^{\T}\X^t_k)}^{-1}\U_k^{-1}\U_k{\X^t_k}^{\T}\y_t \\
        & = \U_k^{-1}{({\X^t_k}^{\T}\X^t_k)}^{-1}{\X^t_k}^{\T}\y_t \\
        & = \U_k^{-1}\Beta^{\ast}_{t,k},
      \end{split}
      \label{equ:deduction_3}
    \end{equation}
    where $\Beta^{\ast}_{t,k}$ is the solution to the unweighted BPR. We now can obtain
    the reconstruction residual $\hat{r}^t_k$ as
    \begin{equation}
      \begin{split}
        \hat{r}^t_k & = \|\y_t - \X^t_k\U_k\hat\Beta^{\ast}_{t,k}\|_2 \\
        & = \|\y_t - \X^t_k\U_k\U_k^{-1}\Beta^{\ast}_{t,k}\|_2 = r_{t,k}. \\
      \end{split}
      \label{equ:deduction_4}
    \end{equation}
    This result, without loss of generality, is valid for all the classes and patches.
    Considering that BPRs are determined by the associated residuals, we arrive at 
    \begin{equation}
      \b^{\u}_{t,k} = \b_{t,k}.
      \label{equ:deduction_5}
    \end{equation}
    That is to say, training data weights do not have any impact on the BPRs.

    Actually, a more intuitive understanding of the above analysis is in
    \eqref{equ:w_bpr_ls}: if we treat $\U_k\Beta$ as the variable of interest, we solve
    exactly the same problem as the standard least squares fitting problem. 
  \end{proof}
 
  According to the theorem, one needs to calculate the BPRs only once. In practice, the
  following calculations are conducted for all the BPRs and training samples.
  \begin{equation}
    c^t_i = b_{t, l_i}(\x_i) - \frac{1}{K}, ~ ~\forall t, i.
    \label{equ:pre_confidence}
  \end{equation}
  Note that $l_i$ is the ground-truth category of $\x_i$. $T$ oracle vectors $\c_t =
  [c^t_1, c^t_2, \cdots, c^t_N]^{\T},~ \forall t$ are stored beforehand. When we
  performing the ORE-Boosting, the optimization task in \eqref{equ:best_bpr} is reduced to 
  \begin{equation}
    t^{\ast} =  \argmax_{\forall t}\left(\c_t^{\T}\u\right);  
    \label{equ:simple_best_bpr}
  \end{equation}

  With the oracle vectors $\c_t,~\forall t$, the training cost is reduced by $S$ times to 
  \begin{equation}
      \tilde{C}_{train} = T \cdot C_{L} = \mathcal{O}(T M^3) + \mathcal{O}(T M^2 d).
    \label{equ:train_complex_fast}
  \end{equation}
  Usually, $S$ is of order $10^2$, so the above strategy can gain a speedup of a few
  hundred times (see Section~\ref{subsec:fast_training}). This desirable property makes
  the proposed ORE-Boosting very compelling in terms of computation efficiency.   

\section{Face Recognition Using ORE --- a Sophisticated Mixture of Inference and Learning}
\label{sec:strategy}

  As we discussed above, the LR-based algorithms are generative rather than
  discriminative. Their main goal is to reconstruct the test face using training faces.
  From another point of view, every LR-based algorithms is a pure inference procedure
  using the generative model associated with a specific linear-subspace-assumption. There
  is no learning process performed because the generative model is predetermined by the
  theoretical analysis \cite{Georghiades_PAMI_01_Few, Basri_PAMI_2003_Lambertian}.
  However, the theoretical proof is only valid under certain ideal conditions and the
  linear-subspace-assumption itself is an approximation to the derived \emph{illumination
  cone} \cite{Georghiades_PAMI_01_Few}. When this approximated model is applied with
  ``imperfect'' gallery faces, accuracy reductions always occur. 
  
  The ``imperfect'' training faces, from another perspective, usually imply more
  information or patterns involved. By effectively learning the meaningful ones, \eg
  expressions and disguises, we can enhance the prior-knowledge-determined model and
  achieve higher performance. Of course, not all the patterns can be included in the
  training set. When novel patterns arise during the test, one can only reduce their
  influence via a inferring process. In this sense, we argue that an ideal face-recognizer
  should contain two functional parts: 

  \begin{enumerate}
    \item A learner, which can extract the existing patterns from the training set.
    \item An inference approach, which can recognize the known patterns while discard the
      foreign ones in the test face.
  \end{enumerate}
  
  The learning algorithms for ORE models has been proposed in
  Section~\ref{sec:combine_bpr} and \ref{sec:boost_ore}. Now we design the ORE-tailored
  inference approach. 

\subsection{Robust-BPR -- BPR with a Generic-Face-Confidence}
\label{subsec:robust_bpr}

  The BPR is informative enough to describe a patch-based LR and one can learn certain
  face patterns within the ensemble learning framework. However, in the test phase, some
  unknown patterns, which usually present as non-face patches, might occur. Most
  LR-methods, including the standard BPR, only pay attention to distinguish the face
  between different individuals thus can hardly handle this kind of patterns. On the other
  hand, several evidences \cite{Meytlis_PAMI_07_Dimensionality} suggests that
  \emph{generic faces, including all the categories, also form a linear-subspace}. The
  linear-subspace is sufficiently compact comparing with the general image space.
  Furthermore, some visual tracking algorithms have already employed LR-approaches (SRC or
  its variations) to distinguish the foreground from the background
  \cite{Xue_ICCV_09_Track, Li_CVPR_11_Track}. 
  
  Inspired by the successful implementations, we propose to employ the linear
  representation for distinguishing face patches from face-unrelated or partly-face
  patches. Specifically, a badly-contaminated face patch is supposed to be distant from
  the linear subspace spanned by the training patches in the same position. In this
  manner, one can measure the degree of contamination for each test patch.
  
  \begin{figure}[h]
    \centering
    \includegraphics[width=0.58\textwidth]{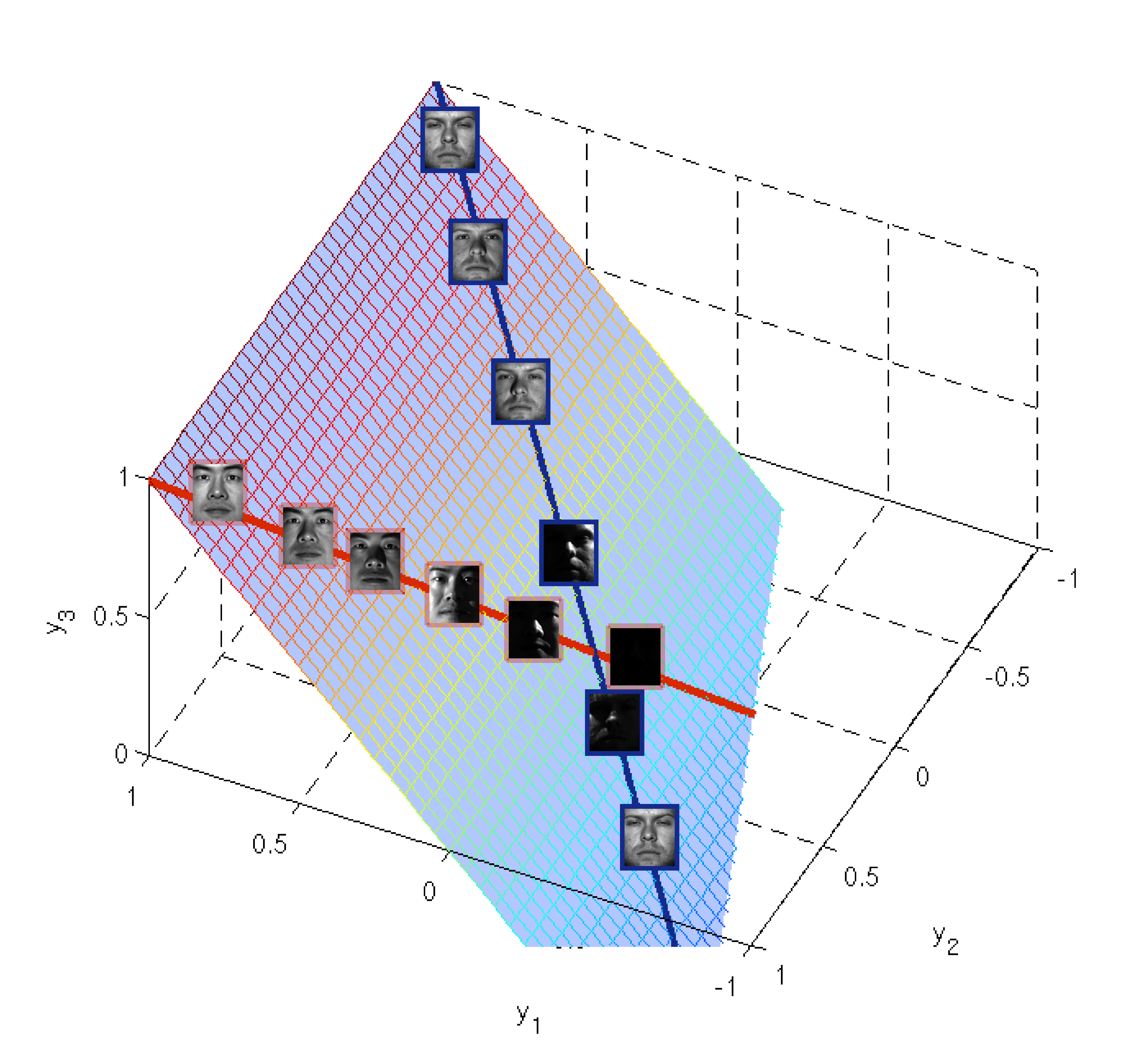}
    \caption
    {
      The demonstration of the generic-face subspace in the original $3$-D feature space.
      Faces from all the categories ($K = 2$ here) form a $2$-D linear-subspace, \ie a
      plane shown in light blue. Two linear-subspaces, \ie the lines shown in blue and red
      respectively, correspond to two different subjects. In this work, however, we are
      only interested in the face patches and consequently the ``generic-face-patch''
      subspace are considered instead. 
    }
   \label{fig:robust_bpr}
  \end{figure}
  
  Figure~\ref{fig:robust_bpr} illustrates the assumption about the linear-subspace of
  generic-faces. Note that the faces are merely for demonstration, in this paper, we
  actually focus on the face patches. According to this assumption, one test patch will be
  considered as a face part only when it is close enough to the corresponding
  ``generic-face-patch'' subspace. 

  Now we formalize this idea in the Bayesian framework. Given that all the training face
  patches $\X^t = [\X^t_1, \X^t_2, \cdots, \X^t_K] \in \mathr^{d \times N}$ are clean and
  forming the representation basis, for a test patch $\y_t$, the reconstruction residual
  $\tilde{r}^2_t$ is given by:
  \begin{equation}
    \tilde{r}^2_t = \|\y_t - \X^t({\X^t}^\T \X^t)^{-1}{\X^t}^\T \y_t\|_2.
    \label{equ:generic_residual}
  \end{equation}
  Let us use the notation $u_t = 1$ to indicate that $\y_t$ is a face patch while $u_t =
  0$ indicates the opposite. After taking the non-face category into consideration, the
  original posterior in \eqref{equ:post_bt} is equivalent to $\Prob{\gamma_\y = k~ |
  ~ u_t = 1, \y_t}$. The new target posterior becomes
  \begin{equation}
    \begin{split}
      \tilde{b}_{t,k} & = \Prob{\gamma_\y = k, u_t = 1 \mid \y_t} \\ 
                    & = \Prob{\gamma_\y = k  \mid  u_t = 1, \y_t} \cdot \Prob{u_t =
                    1 \mid \y_t} \\
                    & = b_{t,k} \cdot \Prob{u_t = 1 \mid \y_t}.
    \end{split}
    \label{equ:robust_post_bt}
  \end{equation}
  Following the principle of linear-subspace, we can assume that
  \begin{equation}
    \begin{split}
      \Prob{\y_t \mid u_t = 0} & = C_0 \\
      \Prob{\y_t \mid u_t = 1} & = C_1 \cdot \exp(-{{\tilde{r}^2_t}}/{\tilde{\delta}}),
    \end{split}
    \label{equ:condition_generic}
  \end{equation}
  where $C_1$, $C_0$ is the normalization constant. The subspace for the non-face category
  is the universe space $\mathr^d$, which leads to the uniform distribution $\Prob{\y_t
  \mid u_t = 0} = C_0$. Recall that all the patches are normalized, thus the domain of
  $\y_t$ is bounded. One can calculate both $C_1$ and $C_0$ with a specific
  $\tilde{\delta}$. For simplicity, let us define 
  \begin{equation}
    \tilde{C} = \frac{C_0\cdot\Prob{u_t = 0}}{C_1\cdot\Prob{u_t = 1}} = \frac{C_0}{C_1},
    \label{equ:robust_prior}
  \end{equation}
  because without any specific prior we usually consider $\Prob{u_t = 0} = \Prob{u_t =
  1}$. We then arrive at the new posterior, which is given by
  \begin{equation}
    \begin{split}
      \tilde{b}_{t,k} & = b_{t,k} \cdot \Prob{u_t = 1 \mid \y_t} \\
      & = \frac{b_{t,k} \cdot \Prob{\y_t \mid u_t = 1}\cdot \Prob{u_t =
                  1}}{\sum_{j \in \{0, 1\}}\Prob{\y_t \mid u_t = j}\cdot \Prob{u_t = j}} \\ 
                  & = \frac{b_{t,k}}{1 + \tilde{C}\exp({\tilde{r}^2_t}/\tilde{\delta})}.
    \end{split}
    \label{equ:generic_post_derive}
  \end{equation}
  In practice, we replace the original $\tilde{b}_{t,k}$ with its upper bound 
  \begin{equation}
    \frac{1}{\tilde{C}}\cdot \exp({-\tilde{r}^2_t}/\tilde{\delta}) \cdot b_{t,k}
    \label{equ:generic_upper_bound}
  \end{equation}
  Note that the constant $\tilde{C}$ won't influence the final classification result as
  all the BPRs are linear combined. As a result, we can discard the term $1/\tilde{C}$ and
  avoid the complex integral operation for calculating it.

  We call the term $\exp(-\tilde{r}^2_t/\tilde{\delta})$ the
  \emph{Generic-Face-Confidence} (GFC) as it peaks when the patch is perfectly represented
  by generic face patches.  With this confidence, we can easily estimate how an image
  patch is face related, or in other words, how is it contaminated by occlusions or
  noises. The BPR equipped with a GFC is less sensitive to occlusions and noises, so we
  refer $\tilde{\b}_{t,k} = [\tilde{b}_{t,1}, \tilde{b}_{t,2}, \cdots, \tilde{b}_{t,K}]$
  as the Robust-BPR. The variance $\tilde{\delta}$ is usually data-dependent, we set
  \begin{equation}
    \tilde{\delta} = 0.05\cdot\left(\frac{1}{T}\sum_t^T\tilde{r}_t\right)^2, 
    \label{equ:prac_var_robust}
  \end{equation}
  for all the faces.

\subsection{The GFC-equipped inference approach}
\label{subsec:inference}

  With the unknown patterns, the learned patch-weights $\alpha_t,~\forall t$ could not
  guarantee their optimality anymore. An highly-weighted patch-location could be corrupted
  badly on the test image. Consequently, it should merely play a trivial role in the test
  phase. In other words, the importances of all the patches should be reevaluated. We then
  employ the proposed GFC to amend the importances for each patch. When the test face is
  possibly contaminated, we aggregate the Robust-BPRs instead of the original BPRs. In
  addition, the learned $\alpha_t,~\forall t$ are not as reliable as before thus we
  replace the original $\alpha_t$ with its ``faded'' version, \ie
  \begin{equation}
    \tilde{\alpha}_t = \alpha_t^q, ~ ~ q \in [0, 1], ~\forall t, 
    \label{equ:faded_alpha}
  \end{equation}
  where $q$ is the ``fading coefficient''. The smaller the $q$ is, the less we take
  account of the learned weights. Now we arrive at the new aggregation, which writes: 
  \begin{equation}
    \bxi = \sum_{t = 1}^{T'}\alpha_t\b_t ~\longrightarrow~ \tilde{\bxi} = \sum_{t =
    1}^{T'}\alpha_t^q\text{GFC}_t\b_t
    \label{equ:robust_ore}
  \end{equation}
  where $T'$ is the number of selected patches via the previous ORE-Learning and usually
  $T' \ll T$ as we impose a $\ell_1$ regularization on the loss function.
  Figure~\ref{fig:stems} gives us a explicit illustration of mechanism of the patch-weight
  amending procedure. In the upper row, $31$ patches are selected by using ORE-Learning.
  Their weights are also shown as stems in the left chart. When a test face is badly
  contaminated by noisy occlusions, as shown in the bottom row, those weights are not
  reliable anymore. After modified by the proposed methods, all the large weights are
  assigned to the clean locations. Consequently, the following classification can hardly
  influenced by the occlusions. \emph{From a bionic angle, the weight-amendment is
  analogue to a focus-changing procedure, as the previously emphasized parts look
  ``unfamiliar'' and not reliable anymore.}

  \begin{figure}[ht!]
    \begin{center}$
    \begin{array}{lc}
    \subfigure{\label{subfig:ori}\includegraphics[width=0.2\textwidth]{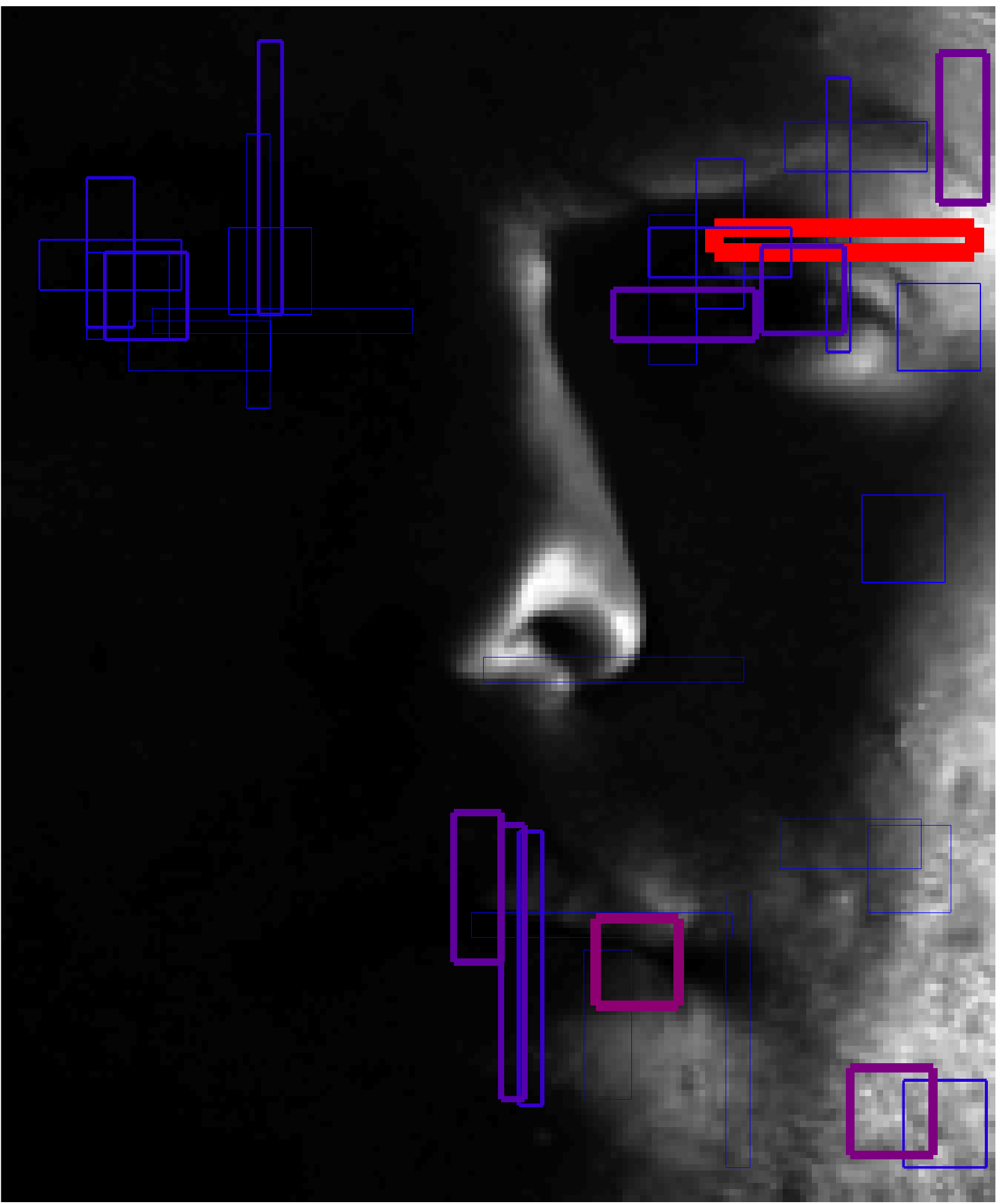}}
    & 
    \subfigure{\label{subfig:ori_stem}\includegraphics[width=0.55\textwidth]{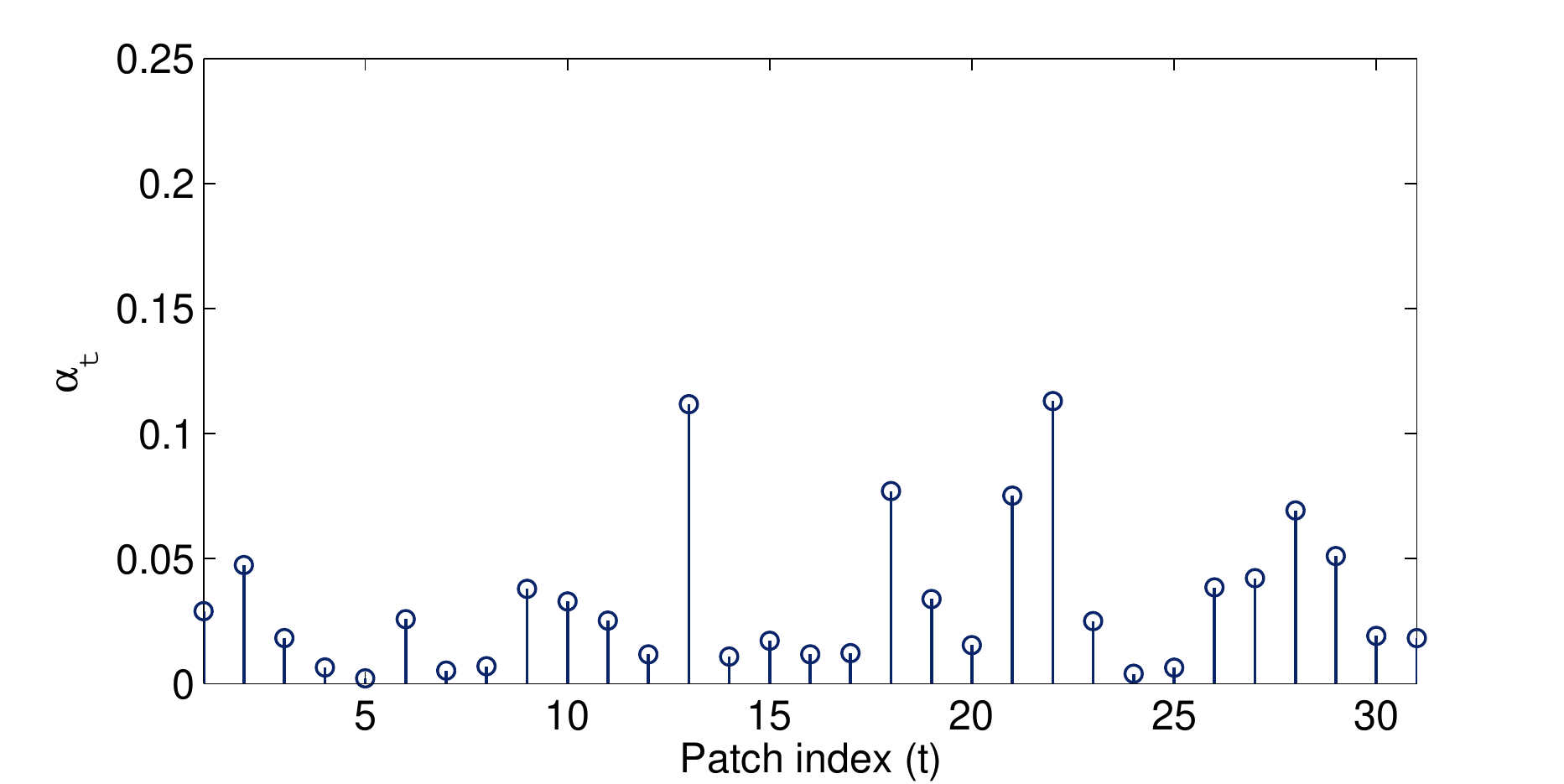}}
    \\
    \subfigure{\label{subfig:gfc}\includegraphics[width=0.2\textwidth]{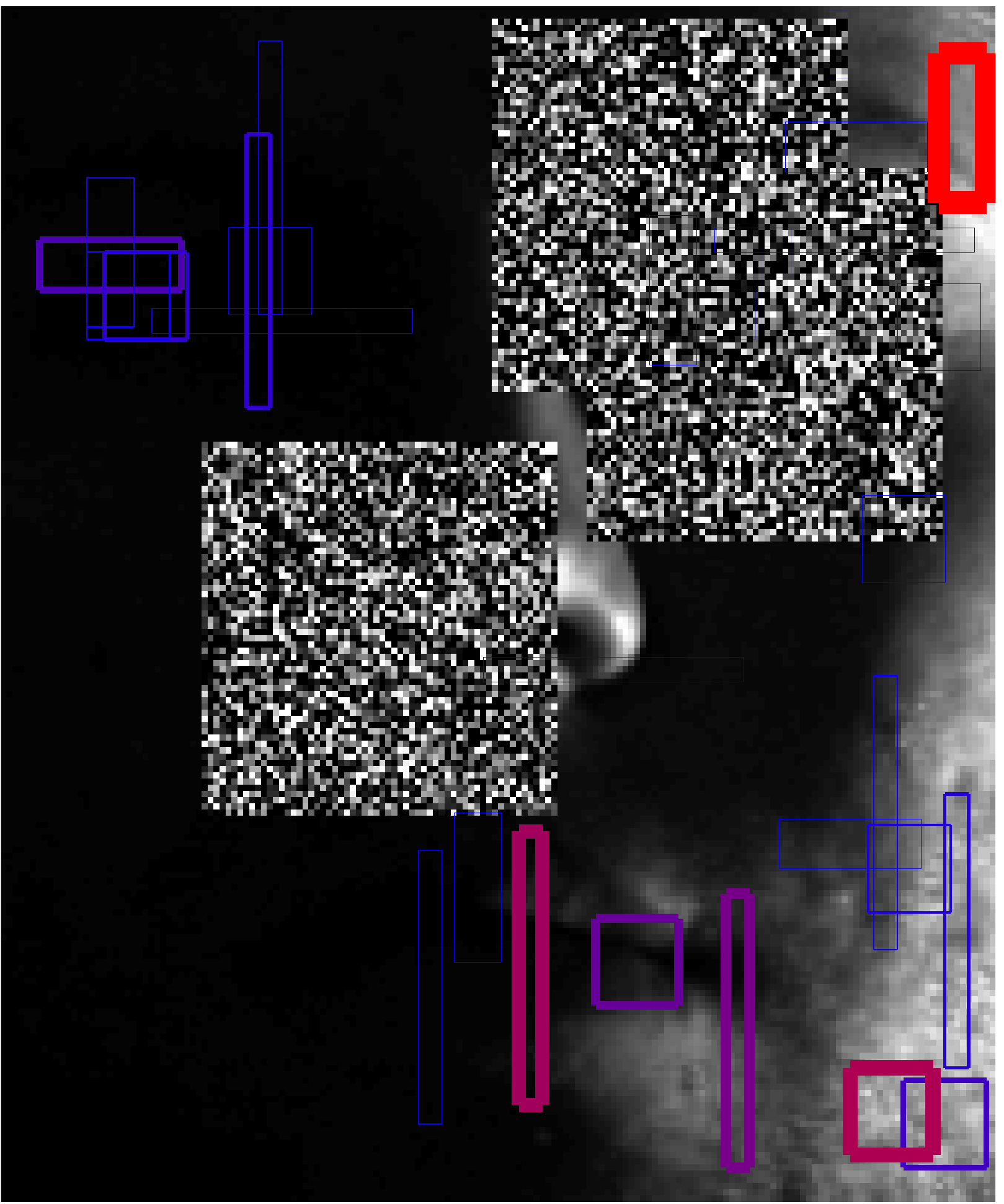}}
    & 
    \subfigure{\label{subfig:gfc_stem}\includegraphics[width=0.55\textwidth]{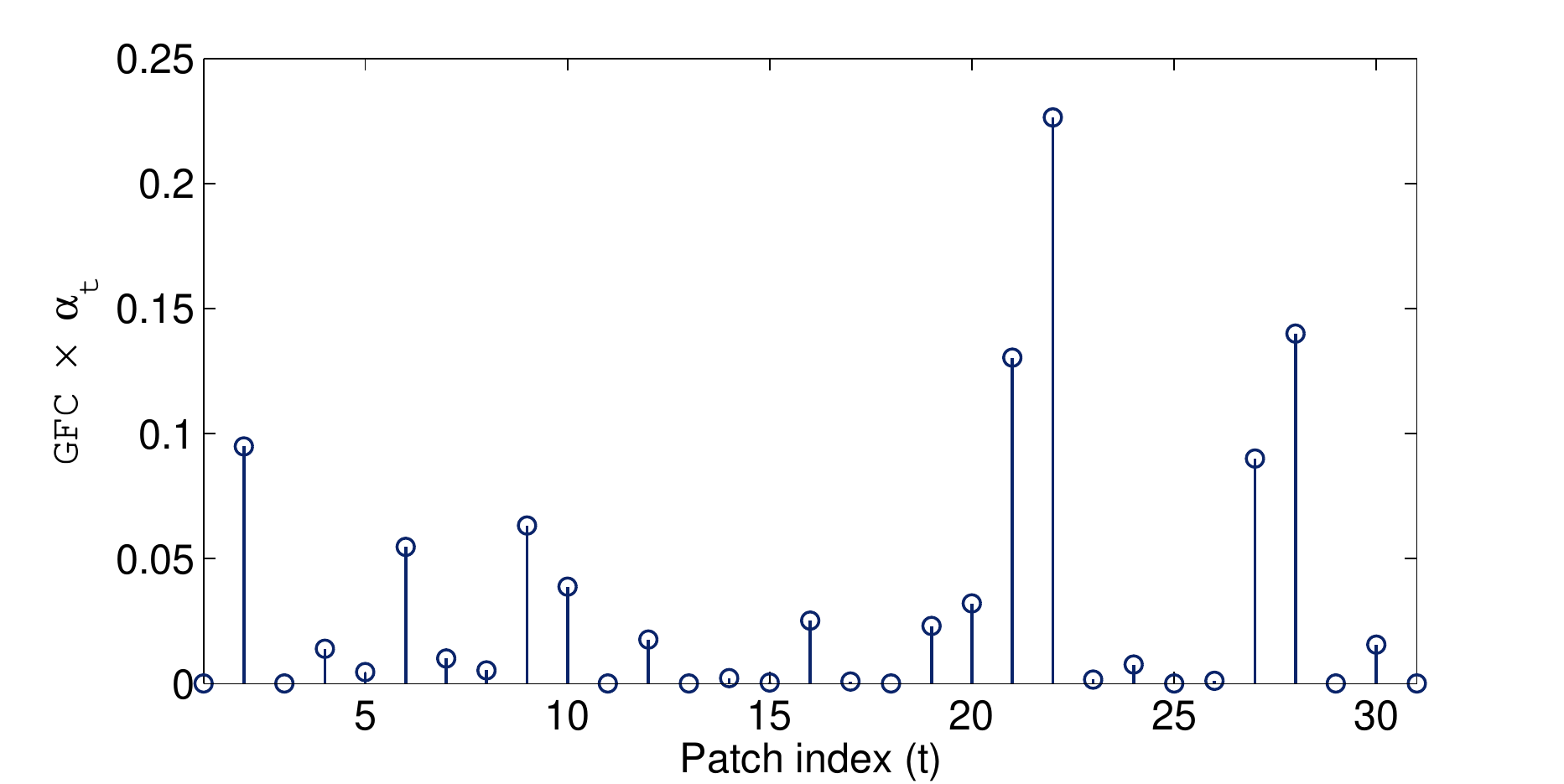}}
    \end{array}$
    \end{center}
    \caption
    {
        The demonstration for the patch-weight amending method. Upper row: the selected
        patches by ORE-Learning. Their weights are shown as stems in the left chart.
        Bottom row: one test face contaminated by three noisy blocks. The patches' weights
        are modified by using GFC. We can observe that the ORE model with new weights pay
        more attention to the clean patches. In other words, the ``attention'' changes
        because some pre-trusted parts are not reliable anymore.
    }
   \label{fig:stems}
  \end{figure}

  The patch-weight amendment and the subsequent aggregation process compose the inference
  part of the ORE algorithm. To distinguish the inference-facilitated ORE-model from the
  original ones, we refer to it as Robust-ORE. Compared with the anti-noise method
  proposed in \cite{Wright_PAMI_09_Face} (see optimization
  problem~\eqref{equ:opt_robust_src}), our Robust-ORE does not impose a sparse assumption
  on the corrupted part thus we can handle much larger occlusions. Furthermore, our method
  is much faster than the robust SRC while maintains its high robustness, as shown in the
  experiment. Most recently, Zhou \etal \cite{Zhou_09_ICCV_MRF} proposed a advanced
  version of \eqref{equ:opt_robust_src} via imposing a spatially-continuous prior to the
  error vector $e$. The algorithm, admittedly, performed very well, especially on the face
  with single occlusion. However, we argue that the performance gain is due to the extra
  spatial prior knowledge. In this paper, none of the spatial relation is considered. 
  
  
\subsection{The learning-inference-mixed strategy}
\label{subsec:mixture}

  \begin{figure*}[h!]
    \centering
    \includegraphics[width=0.9\textwidth]{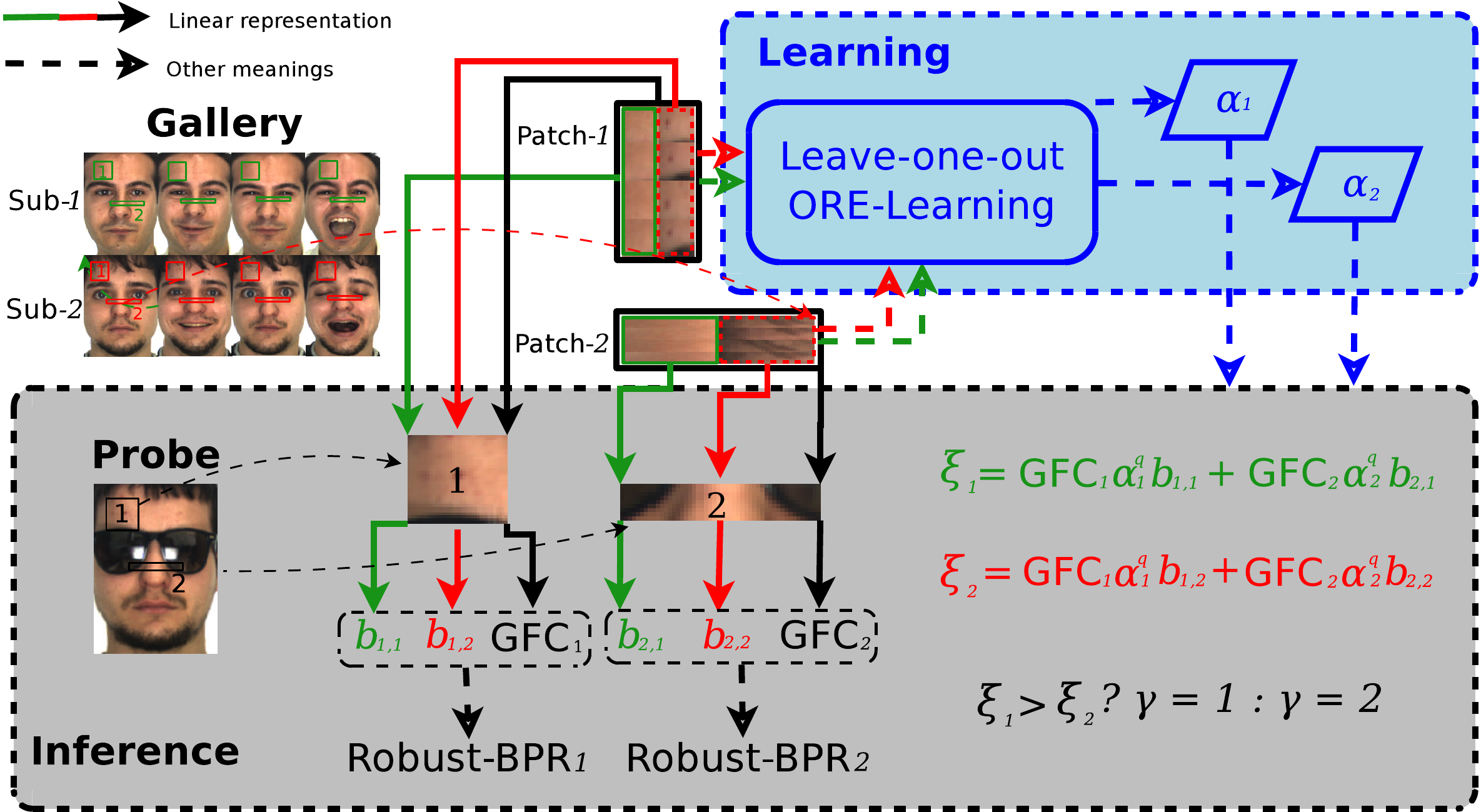}
    \caption
    {
      The demonstration of ORE-Learning and the inference procedure. The
      extremely-simplified problem only contains two subjects and two patch candidates.
      All the green items are related to Sub-$1$ while the red ones are related to
      Sub-$2$. The solid arrows indicate linear representation approaches, with different
      colors standing for different representation basis. The black solid arrows represent
      the representations based on the all the patches from a certain position while the
      green and red ones stand for those corresponding to Sub-$1$ and Sub-$2$
      respectively.
    }
   \label{fig:implementation}
  \end{figure*}

  Figure~\ref{fig:implementation} summarizes the ORE algorithm with a simplified setting
  where only two subjects (Sub-$1$ and Sub-$2$) and two patches (patch-$1$ on the right
  forehead and patch-$2$ on the middle face) are involved.

  From the flow chart, we can see that the ORE algorithm is, in essence, a sophisticated
  mixture of inference and learning. First of all, the patches are cropped and collected
  according to their locations and identities (different columns in one collection).
  Secondly, the leave-one-out margins are generated based on the leave-one-out BPRs. Then
  the existing face patterns are learned via the ORE-Learning or ORE-Boosting procedure.
  The learned results, $\alpha_1$ and $\alpha_2$, indicate the importances of the two
  patches. When a probe image is given, one perform $3$ different linear representations
  for each test patch. The LRs with the patches from Sub-$1$ and Sub-$2$ generate the BPRs
  $b_{t,1}$ and $b_{t,2}$ ($t \in \{1, 2\}$) respectively. In addition, we also use all
  the patches from one location to represent the corresponding test patch. In this way,
  the Generic-Face-Confidence (GFC$_t,~\forall t$) is calculated for each location. When
  calculating the ORE output $\xi_i,~i \in \{1, 2\}$, we multiply the term $\alpha_t^q
  b_{t,k}$ with the corresponding GFC$_t$. In this sense, one reduces the influence of
  unknown patterns (like the sunglasses in the example) arise in the test image. This is,
  typically, an inference manner based on the learned information ($\alpha_1$ and
  $\alpha_2$) and the prior assumption (the linear-subspaces corresponding to different
  individuals and the generic face patches). Finally, the identity $\gamma$ is obtained
  via a simple comparison operation.

  The excellence of this learning-inference-mixed strategy is demonstrated in the
  following experiment part.
\section{Experiments}
\label{sec:exp}

\subsection{Experiment setting}
\label{subsec:setting}

  We design a series of experiments for evaluating different aspects of the proposed
  algorithm on two well-known datasets, \aka Yale-B \cite{Georghiades_PAMI_02_Few} and AR
  \cite{Martinez_98_AR}. We compare the recognition rates, between the ORE algorithm and
  other LR-based state-of-the-arts methods, \ie Nearest Feature Line (NFL)
  \cite{Li_NN_99_NFL}, Sparse Representation Classification (SRC)
  \cite{Wright_PAMI_09_Face}, Linear Regression Classification (LRC)
  \cite{Nassem_PAMI_10_Face} and the two modular heuristics: DEF and Block-SRC. As a
  benchmark, the Nearest Neighbor (NN) algorithm is also performed. For the conventional
  LR-based methods, random projection (Randomfaces) \cite{Wright_PAMI_09_Face}, PCA
  (Eigenfaces) \cite{Turk_CVPR_91_Face} and LDA (Fisherfaces) \cite{Liu_TIP_02_Gabor} are
  used to reduce the dimensionality to $25$, $50$, $100$, $200$, $400$. Note that the
  dimensionality of Fisherfaces are constrained by the number of classes. 

  The ORE algorithm is performed using the patches each comprised of $225$ pixels. The
  widths of those patches are randomly selected from the set $\{5, 9, 15, 25, 45\}$ and
  consequently we generate the patches with $5$ different shapes. Random projections are
  employed to further reduce the dimensionality to $25$, $50$ and $100$. We treat the
  ORE's results with original patches ($225$-D) as its $200$-D performance. The inverse
  value of the trade-off parameter, \ie $\frac{1}{\lambda}$, is selected from candidates
  $\{10, 20, 30, 40, 50, 60, 70, 80, 90, 100\}$, via the training-determined model-selection
  procedure. The variance $\delta$ and $\tilde{\delta}$ are set according to
  \eqref{equ:prac_var} and \eqref{equ:prac_var_robust} respectively. We let $q = 0.2$ for
  Robust-ORE. As to ORE-Boosting, we set the convergence precision $\epsilon =
  1e\text{-}5$ and the maximum iteration number $S = 100$. 

  When carrying out the modular methods, we partition all the faces into $8$ ($4 \times
  2$) blocks and downsample each block to smaller ones in the size of $12 \times 9$, as
  recommended by the authors \cite{Nassem_PAMI_10_Face}. For a fair comparison, we also
  reduce the dimensionality of the face patches to $100$ using random mapping when
  performing the ORE algorithm.
  
  We conduct the test in different experimental settings to verify the recognition
  capacities of our method both in terms of inference and learning.  With each
  experimental setting, the test is repeated $5$ times and we report the average results
  and the corresponding standard deviations. Every training and test sample, \eg faces,
  patches and blocks, are normalized so that 
  \[\|\x_i\|_2 = \|\y\|_2 = 1,~\forall i.\]  
  All the algorithms are conducted in Matlab-R2009$a$, on the PC with a $2.6$GHz quad-core
  CPU and $8$GB RAM. When testing the running speed, we only enable one CPU-core. All the
  optimization, including the ones for ORE-Learning, ORE-Boosting and SRC, are performed
  by using Mosek \cite{Mosek}.

\subsection{Face recognition with illumination changes}
\label{subsec:fr_illumination}

  Yale-B contains $2,414$ well-aligned face images, belonging to $38$ individuals,
  captured under various lighting conditions , as illustrated in Figure~\ref{fig:YaleB}.
  For each subject, we randomly choose $30$ images to compose the training set and other
  $30$ images for testing. The Fisherfaces are only generated with dimensionality $25$ as
  LDA requiring that the reduced dimensionality is smaller than the class number. When
  performing LRC and ORE with $25$-D data, we only randomly chose $20$ training faces
  since the least-square-based approaches need an over-determined linear system. For this
  dataset, we employ $500$ random patches as the task is relatively easy.
  
  \begin{figure}[ht!]
    \centering
    \includegraphics[width=0.6\textwidth]{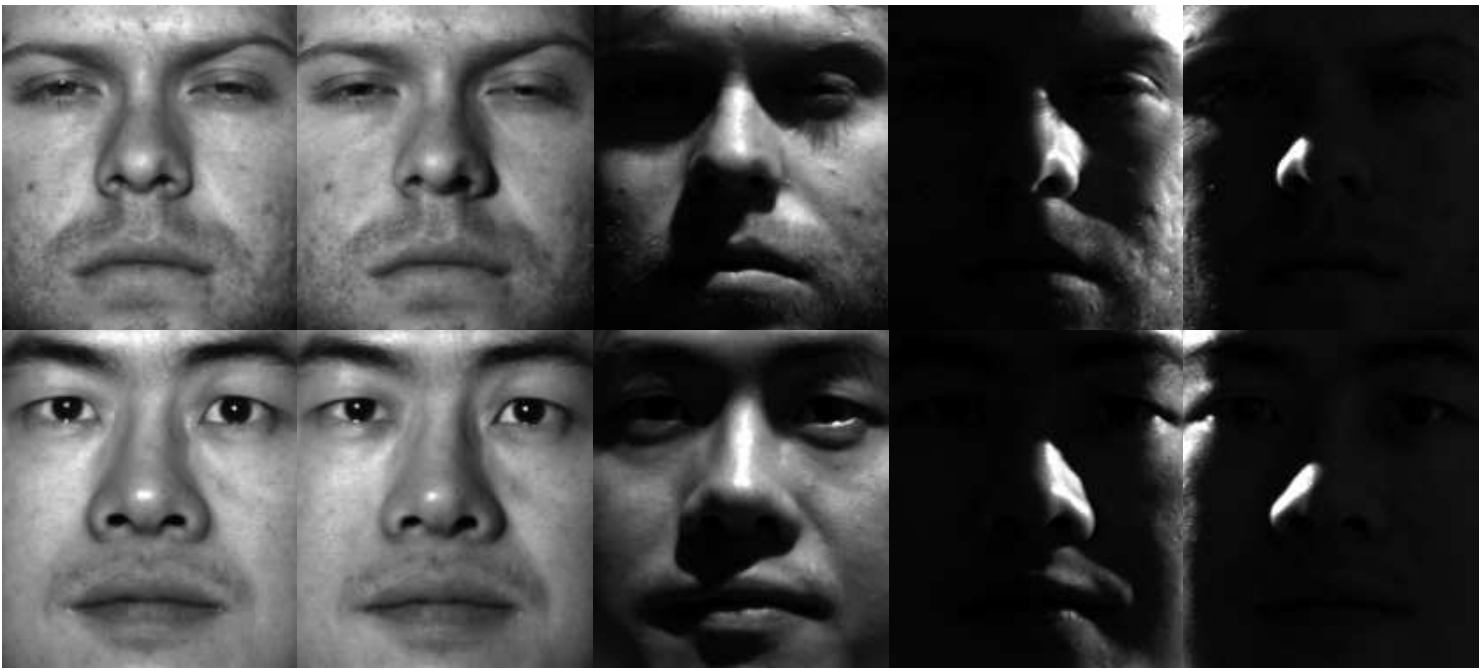}
    \caption
    {
      The demonstration of Yale-B dataset with extreme illumination conditions. 
    }
  \label{fig:YaleB}
  \end{figure}
  
    \begin{table*}[ht]
    \centering
    \resizebox{0.7\textwidth}{!}
    {
      \begin{tabular}{ l | l | l | l | l | l | l }

      \hline\hline
      &   & $25$-D    & $50$-D   & $100$-D    & $200$-D    & $400$-D   \\
      \cline{1-7}
      {\multirow{4}{*}{LDA}}  
      & NN        & $93.4\pm1.3$ & - & - & - & - \\
      & NFL       & $89.4\pm1.0$ & - & - & - & - \\
      & SRC       & $92.5\pm1.2$ & - & - & - & - \\
      & LRC       & $58.0\pm1.9$ & - & - & - & - \\
      \cline{1-7}
      {\multirow{4}{*}{Rand}}  
      & NN        & $42.6\pm4.0$ & $51.4\pm1.5$ & $54.2\pm3.0$ & $54.8\pm1.7$ & $56.6\pm1.5$ \\
      & NFL       & $83.2\pm1.7$ & $88.2\pm1.0$ & $89.5\pm0.6$ & $90.7\pm0.5$ & $90.9\pm0.4$ \\
      & SRC       & $80.1\pm1.6$ & $90.7\pm1.0$ & $94.7\pm0.5$ & $96.6\pm0.7$ & $97.1\pm0.5$ \\
      & LRC       & $25.9\pm4.1$ & $88.1\pm0.6$ & $93.1\pm1.2$ & $94.5\pm0.4$ & $94.7\pm0.4$ \\
      \cline{1-7}
      {\multirow{4}{*}{PCA}}  
      & NN        & $22.3\pm1.8$ & $30.4\pm1.7$ & $34.4\pm0.5$ & $36.6\pm1.2$ & $37.0\pm1.0$ \\
      & NFL       & $69.5\pm1.4$ & $77.4\pm1.2$ & $81.4\pm1.0$ & $83.0\pm0.5$ & $83.5\pm0.5$ \\
      & SRC       & $80.4\pm1.6$ & $89.1\pm0.9$ & $92.8\pm0.8$ & $94.2\pm0.7$ & $95.1\pm0.7$ \\
      & LRC       & $74.7\pm1.9$ & $88.1\pm0.4$ & $89.8\pm0.3$ & $90.7\pm0.5$ & $90.8\pm0.6$ \\
      \cline{1-7}
      \multicolumn{2}{l|}{{ORE}}        
      & $96.5\pm0.5$ & $99.6\pm0.2$ & $99.7\pm0.1$ & $\bf 99.9\pm0.1$ & - \\
      \multicolumn{2}{l|}{{Robust-ORE}}  
      & $\bf 98.3\pm0.3$ & $\bf 99.8\pm0.2$ & $\bf 99.9\pm0.1$ & $\bf 99.9\pm0.1$ & - \\
      \multicolumn{2}{l|}{{Boosted-ORE}}  
      & $95.6\pm1.2$ & $99.6\pm0.2$ & $99.8\pm0.1$ & $\bf 99.9\pm0.1$ & - \\
      \hline\hline
      \end{tabular}
    }
    \caption
    {
        The comparison of accuracy on Yale-B. The highest recognition rates are shown in
        bold. Note that we only perform algorithms with the Fisherface (LDA) on the $25$-D
        feature space. The original patch has $225$ pixels, thus we can't conduct ORE
        algorithms with $400$-D features.
    }
    \label{tab:yaleb_accuracy}
    \end{table*}


  The experiment results are reported in Table~\ref{tab:yaleb_accuracy}. As can be seen,
  the ORE-based algorithms \emph{consistently outperform all the competitors}.  Moreover,
  all the proposed methods achieve the accuracy of $99.9\%$ on $200$-D ($225$-D in fact)
  feature space. To our knowledge, {\em this is the highest recognition rate ever reported
  for Yale-B under similar circumstances}. Given $1,140$ faces are involved as test
  samples and the recognition rate $99.9\%$, \emph{only $1$ faces are incorrectly
  classified in average}. In particular, Robust-ORE, \ie ORE equipped with Robust-BPRs,
  shows the highest recognition ability.  Its recognition rates are always above $99.8\%$
  when $d \ge 50$.  The boosting-like variation of the ORE algorithm performs similarly to
  its prototype and also superior to the performances of other compared methods. 

    
  Figure~\ref{fig:boost_curve_yaleb} shows the boosting procedure, \ie the training and
  test error curves, for the ORE-Boosting algorithm with $100$-D features. We observe fast
  decreases for both curves. That justifies the efficacy of the proposed boosting
  approach. Furthermore, no overfitting is illustrated even though the optimal model
  parameter $\lambda$ is selected according to the training errors. It empirically
  supports our theoretical analysis in Section~\ref{subsec:training-determined}.  

  \begin{figure}[ht]
    \centering
    \includegraphics[width=0.68\textwidth]{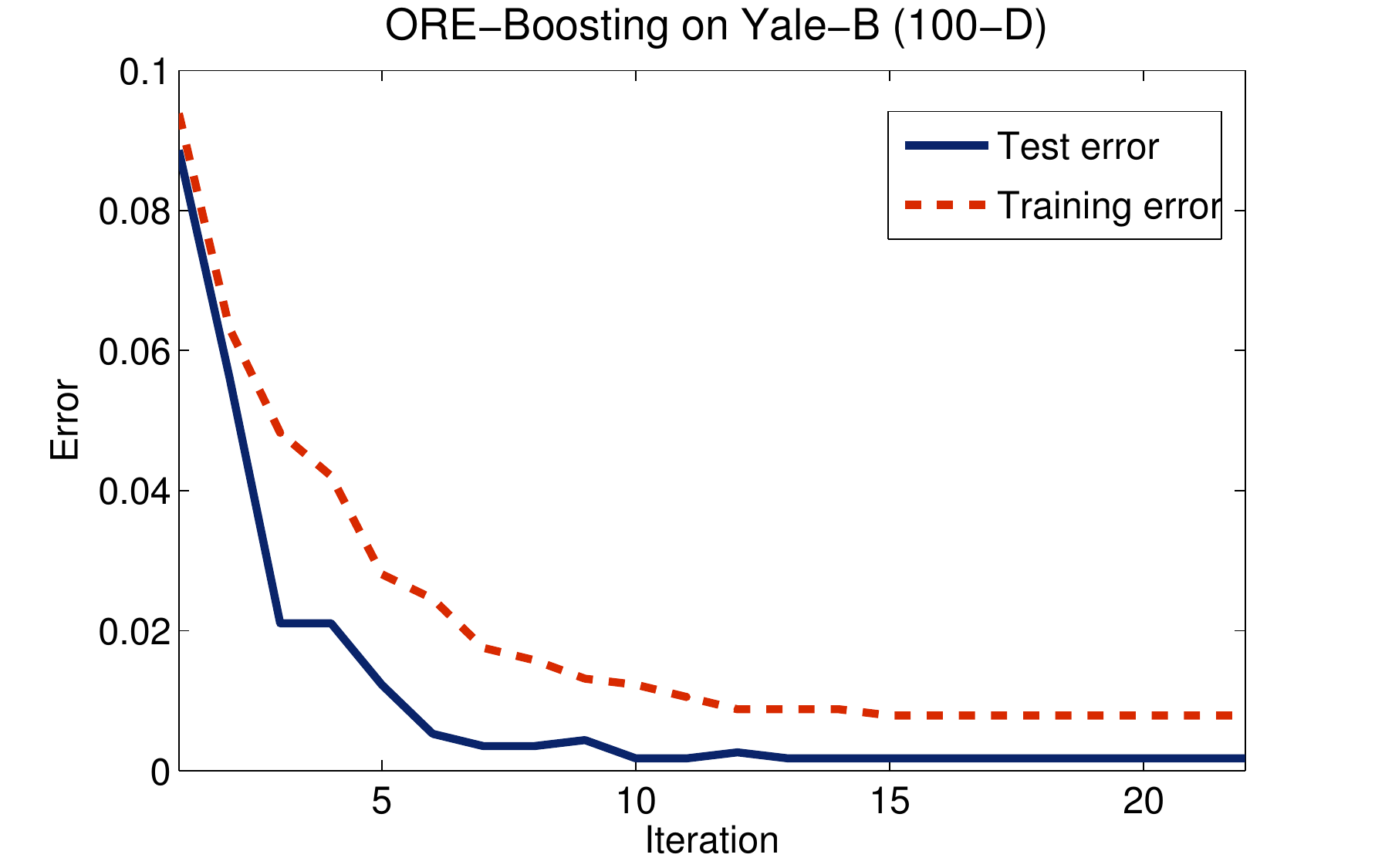}
    \caption[Plots of the boosting procedure of Yale-B]
    {
        Demonstration of the boosting procedure of ORE-Boosting with
        $100$-D features on Yale-B. 
    }
    \label{fig:boost_curve_yaleb}
  \end{figure}

  On Yale-B, both ORE-Learning and its boosting-like cousin only select a very limited
  part (usually around $5\%$) of all the candidate patches, thanks to the $\ell_1$-norm
  regularization. To illustrate this, Figure~\ref{fig:yaleb_selected_patches} shows all
  the candidates (Figure~\ref{subfig:yaleb_all_patches}), the selected patches by
  ORE-Learning (Figure~\ref{subfig:yaleb_ore_patches}), and those selected by ORE-Boosting
  (Figure~\ref{subfig:yaleb_boost_patches}). We can see that two algorithms make similar
  selections: in terms of patch positions and patch numbers ($32$ for ORE-Boosting \vs
  $31$ for ORE-Learning). Nonetheless, minor differences is shown \wrt the weight
  assignment, \ie assigning values to the coefficients $\alpha_i,~\forall i$. The
  ORE-Boosting aggressively assigns dominant weights to a few patches. In contrast,
  ORE-Learning distributes the weights more uniformly. The more conservative strategy
  often leads to a higher robustness. 

  \begin{figure}[ht!]
    \centering
    \subfigure[candidates]{\label{subfig:yaleb_all_patches}\includegraphics[width=0.25\textwidth]{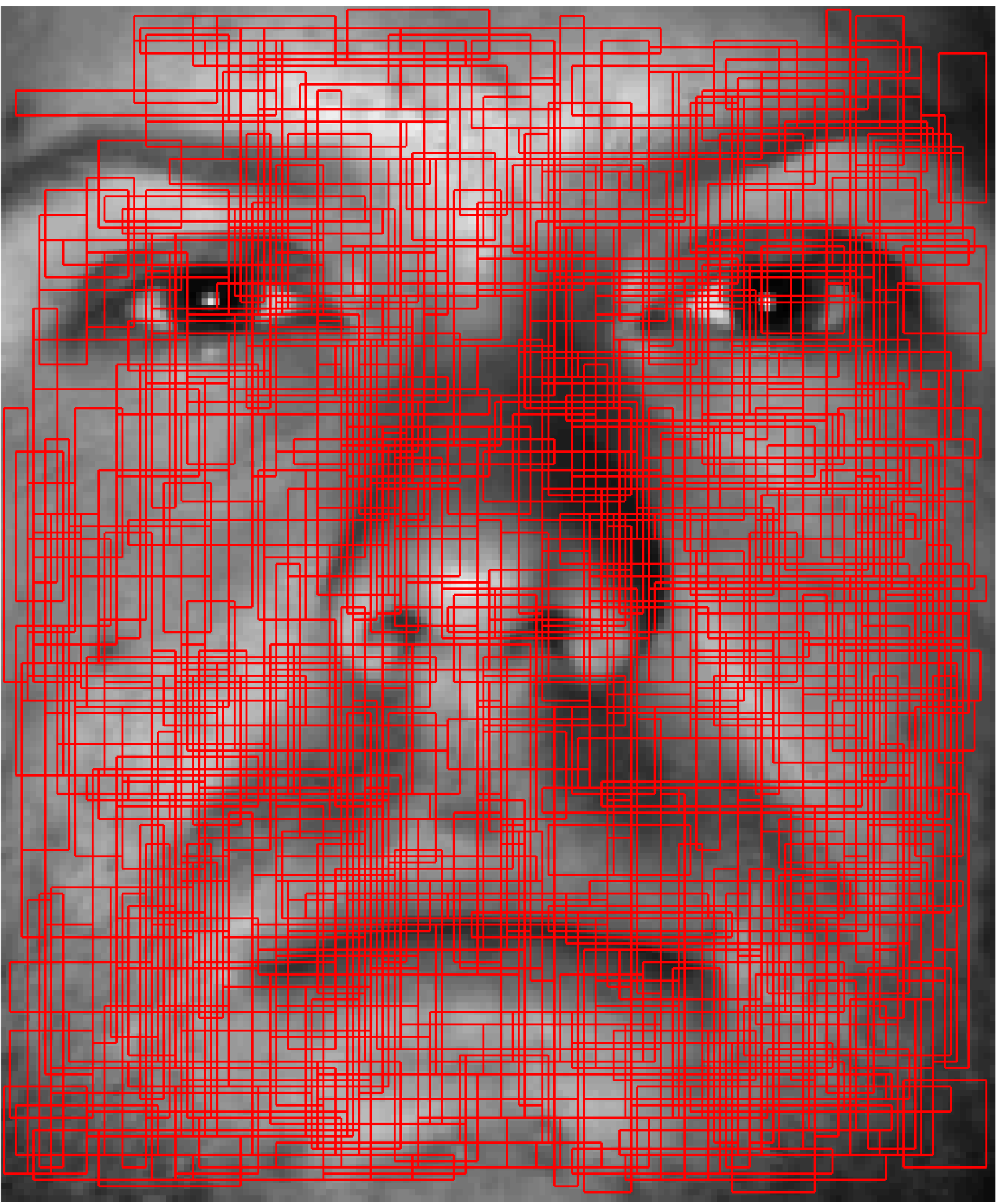}}
    \subfigure[ORE-Learning]{\label{subfig:yaleb_ore_patches}\includegraphics[width=0.25\textwidth]{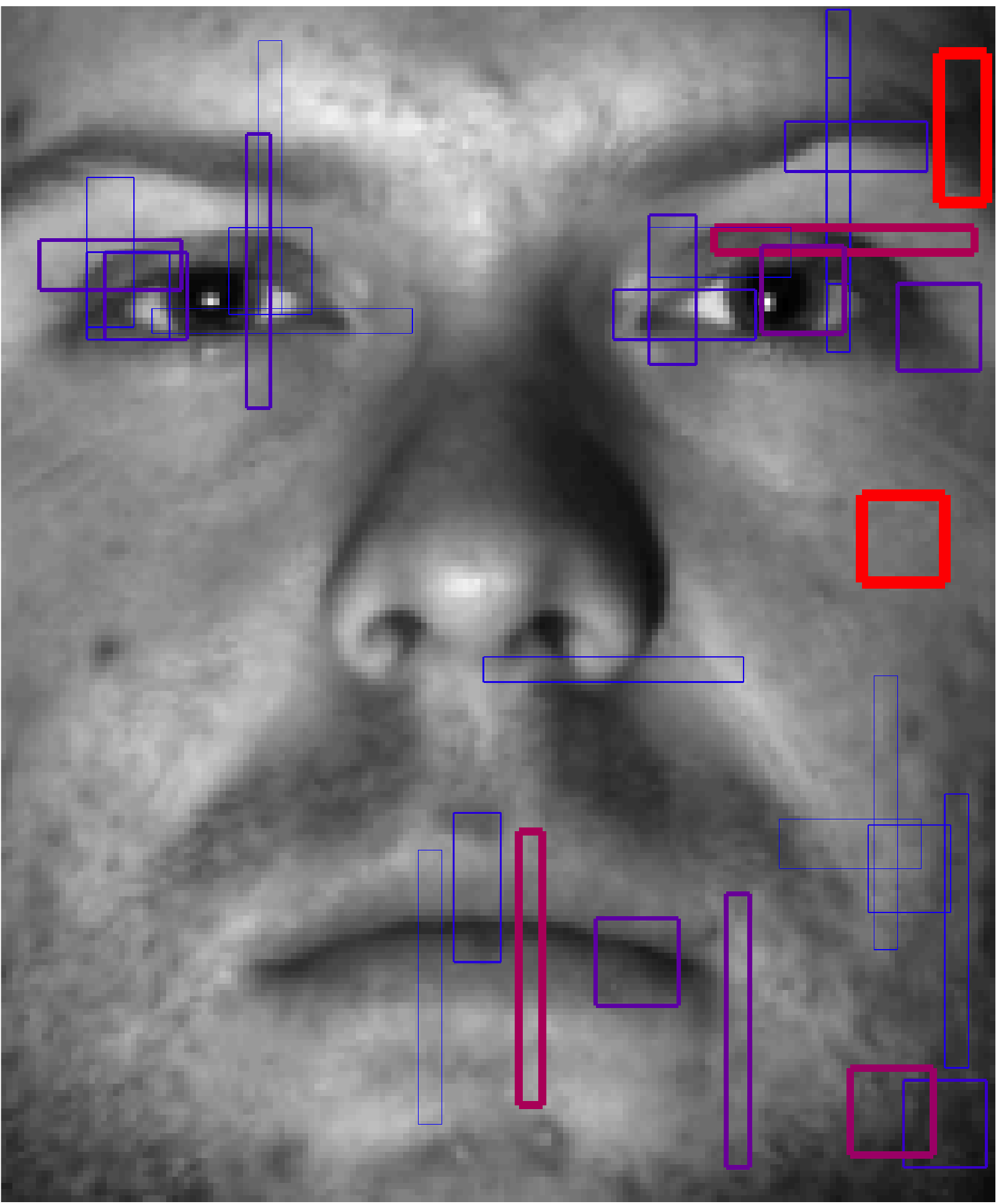}}
    \subfigure[ORE-Boosting]{\label{subfig:yaleb_boost_patches}\includegraphics[width=0.25\textwidth]{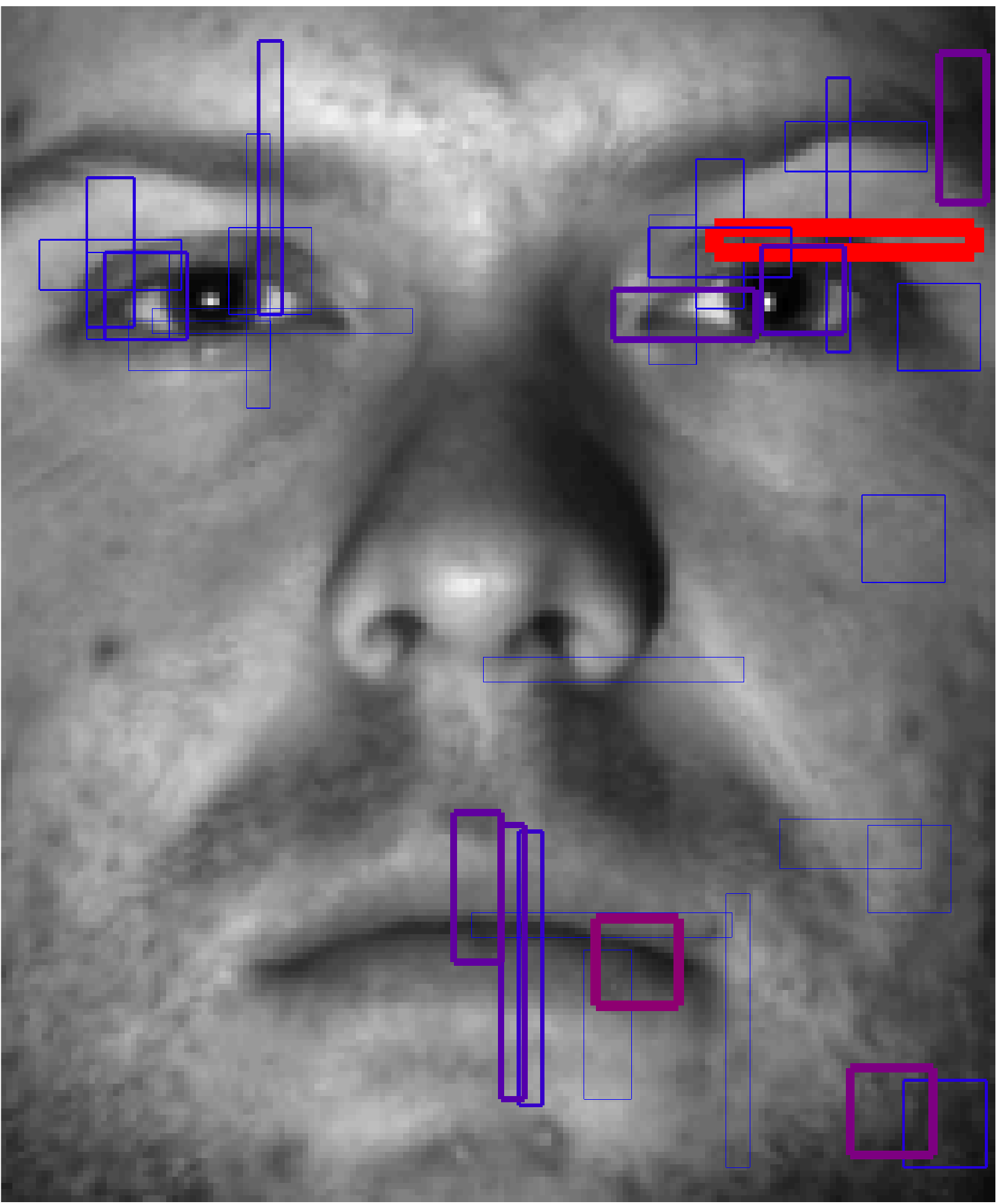}}
    \caption
    {
        The patch candidates (a) and those selected by ORE-Learning (b) and ORE-Boosting
        (c). All the patches are shown as blocks. Their widths and colors indicate the
        associated weights $\alpha_i,~\forall i$. A thicker and redder edge stands for a
        larger $\alpha_i$, \ie a more important patch. The ORE algorithms are conducted on
        a $100$-D feature space.
    }
  \label{fig:yaleb_selected_patches}
  \end{figure}

\subsection{Face recognition with random occlusions}
\label{subsec:infer_occlusion}

  The above task is completed nearly perfectly. However, sometimes the faces are
  contaminated by occlusions and most state-of-the-arts may fail on some of them. The most
  occlusions occur on face images could be divided into two categories: noisy occlusions
  and disguises. Let us consider the noisy ones first. The noisy occlusions are
  the ones not supposed to arise on a human face, or in other words, not face-related.
  They are unpredictable, and thus hard to learn. We then design a experiment to verify
  the inference capability of ORE-based methods. Considering that ORE-Learning and
  ORE-Boosting select similar patches, we only perform the former one in this test.
  To generate the corrupted samples for testing, we impose several Gaussian noise blocks on
  the Yale-B faces. The blocks are square and in the size of $s \times s,~s\in \{20, 40,
  60, 80, 100, 120\}$. The number of the blocks are defined by
  \begin{equation}
    N_{o} = \text{max}\{\text{round}(0.4 \sigma_f / s^2), 3\}, 
    \label{equ:number_blocks}
  \end{equation}
  where $\sigma_f$ represents the area of the whole face image. That is to say, the
  occluded parts won't cover more than $40\%$ area of the original face, unless the number
  requirement $N_{o} \ge 3$ is not met. The yielded faces are shown in
  Figure~\ref{fig:occluded_yaleb}. We can see that when $s = 120$, the contaminated parts
  dominate the face image.

  \begin{figure}[ht!]
    \centering
    \subfigure[$s = 20$]{\label{subfig:occlude_20}\includegraphics[width=0.2\textwidth]{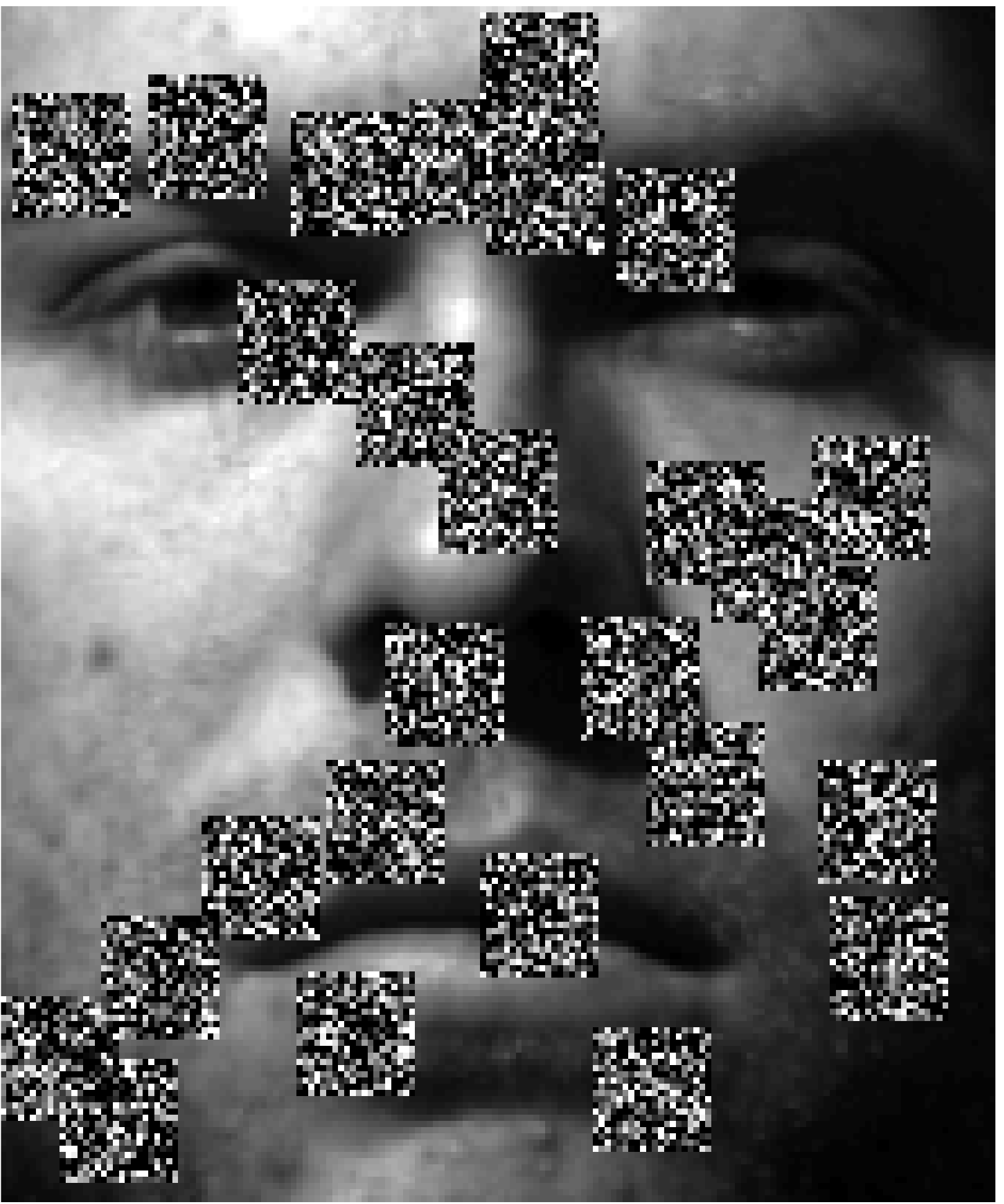}}
    \subfigure[$s = 40$]{\label{subfig:occlude_40}\includegraphics[width=0.2\textwidth]{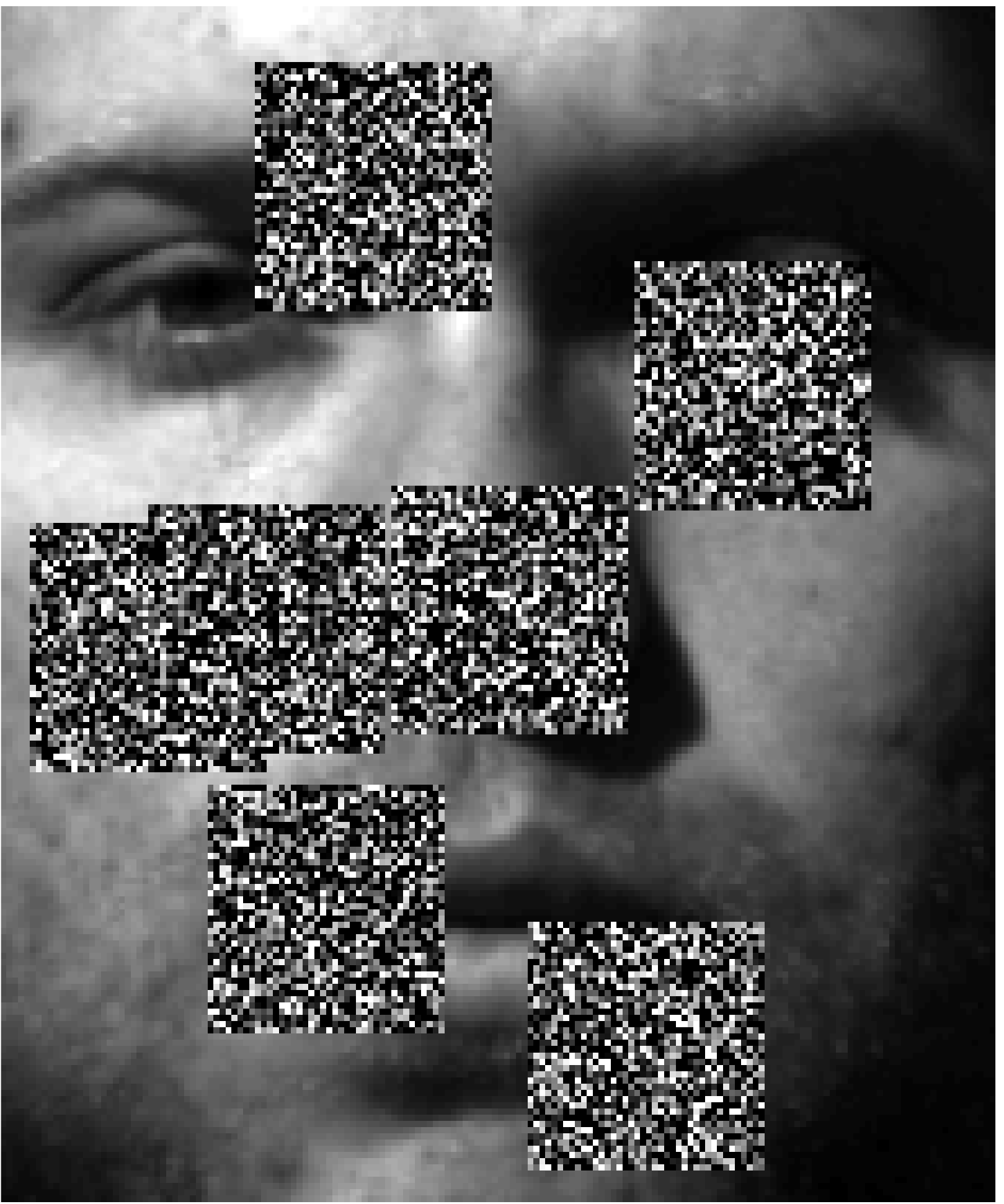}}
    \subfigure[$s = 80$]{\label{subfig:occlude_80}\includegraphics[width=0.2\textwidth]{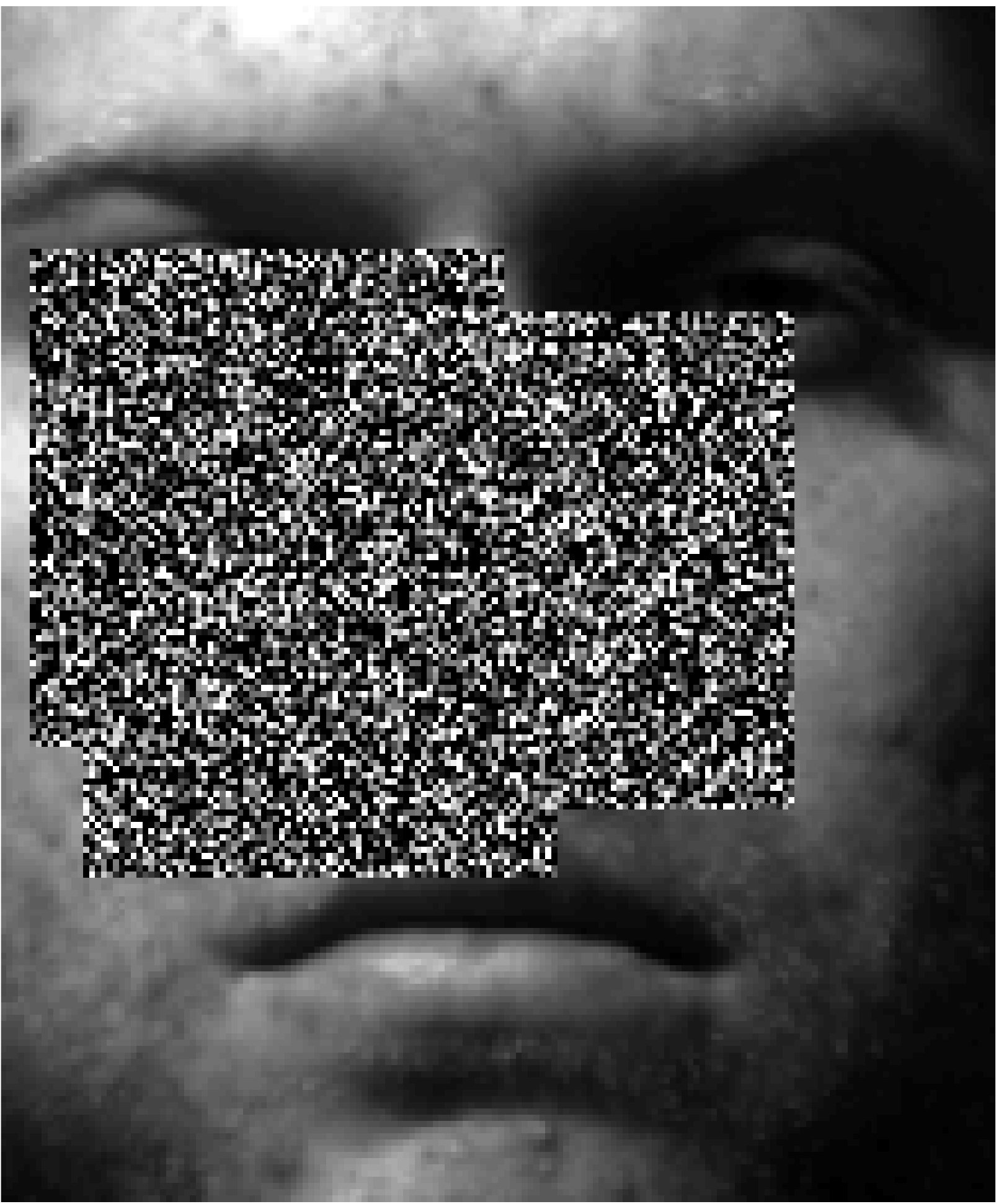}}
    \subfigure[$s = 120$]{\label{subfig:occlude_120}\includegraphics[width=0.2\textwidth]{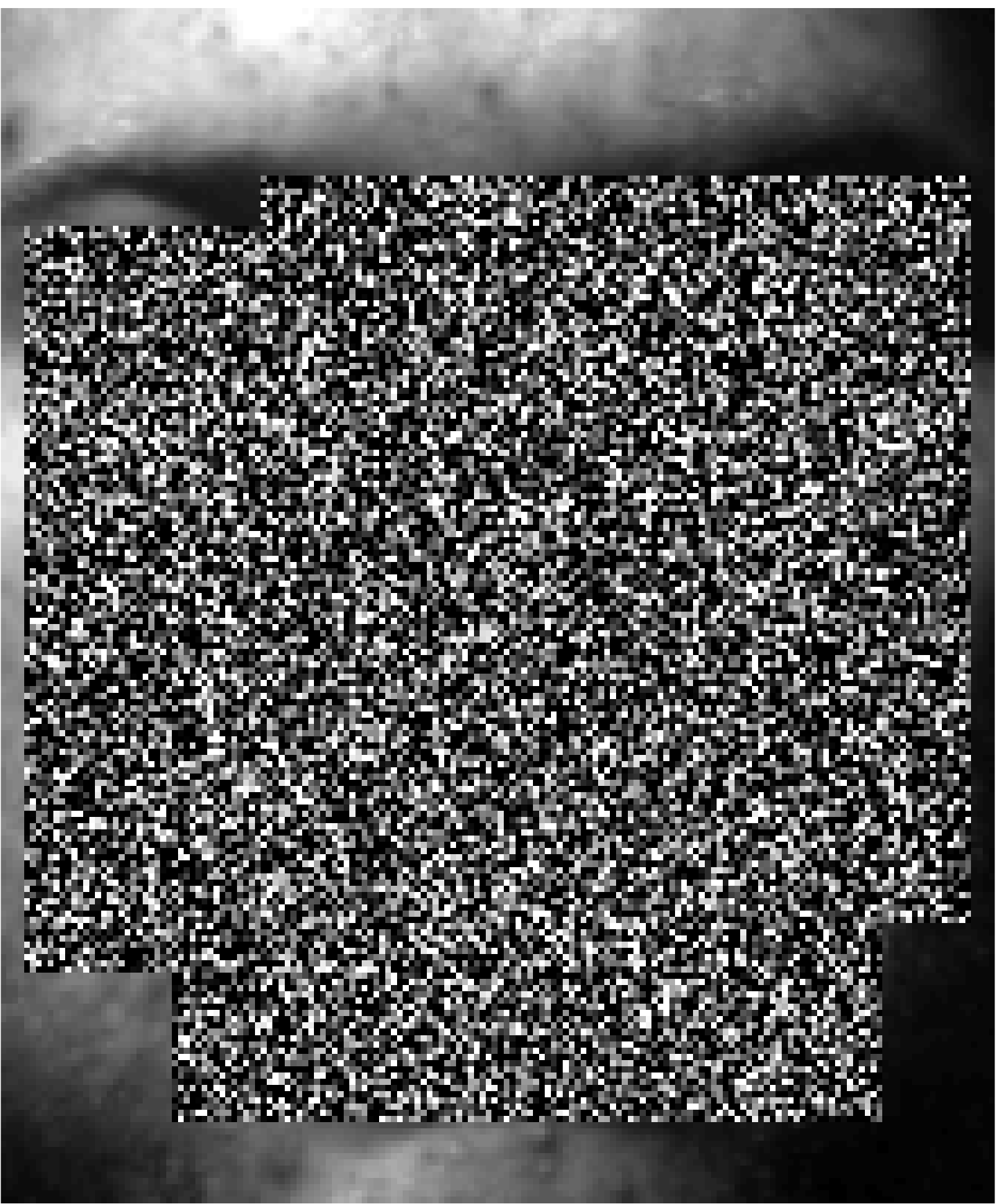}}
    \caption
    {
        The Yale-B faces with Gaussian noise occlusions. The block size is increased from
        $20$ to $120$. We can see that when $s = 120$, more than $60\%$ of the face image
        are totally contaminated.
    }
  \label{fig:occluded_yaleb}
  \end{figure}

  Before testing, we train our ORE models on the clean faces ($30$ faces for each
  individual). Then, on the contaminated faces (also $30$ faces for each individual), we
  test the learned models, with or without Robust-BPRs, comparing to the modular
  heuristics. In this way, we guarantee that no occlusion information is given in the
  training phase. As a reference, we also perform the standard LRC to illustrate different
  difficulty levels. The experiment is repeated $5$ times with the training and test faces
  selected randomly\footnote{We guarantee that a clean face and its contaminated version
  won't be selected simultaneously in each test.}. The results are shown in
  Table~\ref{tab:yaleb_occlude}. 
  
    \begin{table*}[ht!]
    \centering
    \resizebox{0.8\textwidth}{!}
    {
      \begin{tabular}{ l | l | l | l | l | l | l }

      \hline\hline
          & $s = 20$    & $s = 40$   & $s = 60$    & $s = 80$  & $s = 100$ & $s = 120$\\
      \cline{1-7}
      LRC ($400$-D) & $74.1\pm1.4$ &$69.7\pm1.3$  &$68.4\pm1.5$  & $45.5\pm1.4$ &
                    $30.4\pm0.7$ & $16.7\pm0.2$\\
      DEF         & $42.9\pm0.3$ & $80.1\pm0.4$ & $88.8\pm1.0$ & $72.3\pm0.6$ &
                  $48.0\pm1.4$ & $26.6\pm1.3$ \\
      Block-SRC   & $94.1\pm0.5$ & $93.3\pm0.5$ & $94.1\pm0.5$ & $85.7\pm0.8$ &
                  $78.3\pm0.4$ & $56.8\pm0.6$ \\
      \cline{1-7}
      ORE         & $93.9\pm2.6$ & $98.2\pm1.0$ & $98.8\pm0.6$ & $97.5\pm1.7$ &
      $94.2\pm3.6$ & $86.1\pm8.9$ \\
      Robust-ORE  & $\bf 98.5\pm0.7$ & $\bf 99.6\pm0.2$ & $\bf 99.7\pm0.1$ & $\bf
      99.4\pm0.5$ & $\bf 98.3\pm1.0$ & $\bf 93.8\pm4.6$ \\
      \hline\hline
      \end{tabular}
    }
    \caption
    {
        The comparison of accuracy on the occluded Yale-B. The highest recognition rates
        are shown in bold. Robust-ORE represents the ORE-model with Robust-BPRs. Note that
        the original LRC is performed with $400$-D Randomfaces.
    }
    \label{tab:yaleb_occlude}
    \end{table*}


  As we can see, again, the proposed ORE models achieve overwhelming performances. In
  particular, the original ORE-models are nearly (except for the case where $s = 20$)
  consistently better than all the state-of-the-arts.  Furthermore, the Robust-ORE models
  illustrate a very high robustness to the noisy occlusions. It is always ranked first in
  all the conditions and achieves the recognition rates above $98\%$ when $s < 120$.
  Recall that the performance obtained by ORE models on clean test sets is $99.9\%$.
  \emph{The severe occlusions merely reduce the performance of ORE model by around two
  percent.} When the face is dominated by continuous occlusions ($s = 120$), the
  accuracies of modular methods drop sharply to the ones below $60\%$ while that of
  Robust-ORE is still above $90\%$. This success justifies our assumption about the
  generic-face-patch linear-subspace. 

\subsection{Face recognition with expressions and disguises}
\label{subsec:infer_disguise}

  Another kind of common occlusions are functional disguises such as sunglasses and
  scarves. They are, generally speaking, face-related and intentionally put onto the
  faces. This kind of occlusions are unavoidable in real life. Besides this difficulty,
  expression is another important influential factor. Expressions invalidate the rigidity
  of the face surface, which is one foundation of the linear-subspace assumption. To
  verify the efficacy of our algorithms on the disguises and expressions, we
  employ the AR dataset. There are $100$ individuals in the AR (cropped version) dataset.
  Each subject consists of $26$ face images which come with different expressions and
  considerable disguises such as scarf and sunglasses (see Figure~\ref{fig:AR}). 

  \begin{figure}[ht!]
    \centering
    \includegraphics[width=0.68\textwidth]{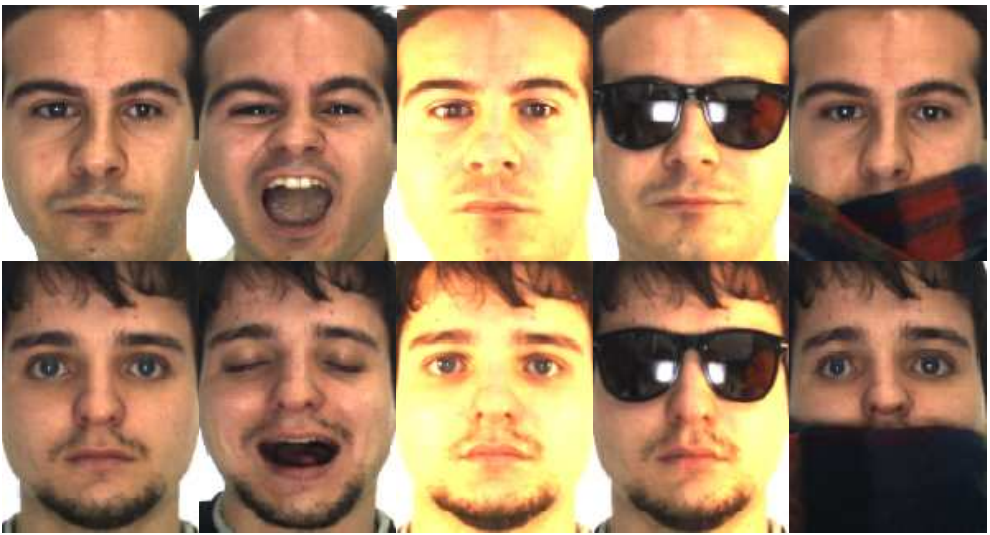}
    \caption
    {
        Images with occlusions and expressions in AR dataset. Note that we use only
        gray-scale faces in the experiment.
    }
    \label{fig:AR}
  \end{figure}

  First of all, following the conventional scheme, we use all the clean and inexpressive
  faces ($8$ faces for each individual) as training samples and test the algorithms on
  those with expressions ($6$ faces per individual), sunglasses ($6$ faces per individual)
  and scarves ($6$ faces per individual) respectively. Similar to the strategy for
  handling random occlusions, $500$ random patches are generated and we also employ the
  Robust-BPRs for testing. The test results can be found in Table~\ref{tab:ar_infer}. Note
  that the tests for Block-SRC, DEF and LRC are conducted once as the data split is
  deterministic. Consequently, no standard deviation is reported for those algorithms. The
  ORE-based methods are still run for $5$ times, with different random patches and random
  projections. 

    \begin{table}[ht!]
    \centering
    \resizebox{0.58\textwidth}{!}
    {
      \begin{tabular}{ l | c | c | c }

      \hline\hline
        & Expressions & Sunglasses & Scarves \\
      \cline{1-4}
      LRC ($400$-D) & $81.0$ & $54.5$ & $10.7$ \\
      DEF           & $88.2$ & $91.2$ & $85.2$ \\
      Block-SRC     & $87.5$ & $95.7$ & $86.0$\\
      \cline{1-4}
      ORE           & $82.0 \pm 1.2$ & $85.0 \pm 3.9$ & $86.5 \pm 0.7$ \\
      Robust-ORE    & $\bf 92.8 \pm 0.9$ & $\bf 96.1 \pm 1.8$ & $\bf 95.8 \pm 1.2$ \\
      \hline\hline
      \end{tabular}
    }
    \caption
    {
        The comparison of accuracy on the AR dataset. The highest recognition rates
        are shown in bold. Robust-ORE represents the ORE-model with Robust-BPRs. Note that
        the original LRC is performed with $400$-D Randomfaces.
    }
    \label{tab:ar_infer}
    \end{table}


  According to the table, Robust-ORE beats other methods in all the scenarios. In
  particular, for the faces with scarves, both of ORE and Robust-ORE are superior to other
  methods. The performance gap between Robust-ORE and the involved state-of-the-arts is
  around $10\%$. 

\subsection{Learn the patterns of disguises and expressions}
\label{subsec:learn_disguise}

  The expressions and disguises share one desirable property: they can be characterized by
  typical and limited patterns. One thus can learn those patterns within our ensemble
  learning framework. To verify the learning power of the proposed method, we re-split the
  data: for each individual, $13$ images are randomly selected for training while the
  remaining ones are test images. In this way, the ORE-Learning or ORE-Boosting algorithm
  is given the information on disguise patterns. The experiment on AR is rerun in the new
  setting. Table~\ref{tab:ar_learn} shows the recognition accuracies. Note that the
  results for the $100$-D Fisherface are actually obtained by using $95$-D features since
  here ($100$ categories) the dimensionality limit for LDA is $99$.

    \begin{table*}[ht]
    \centering
    \resizebox{0.7\textwidth}{!}
    {
      \begin{tabular}{ l | l | l | l | l | l | l }

      \hline\hline
      &   & $25$-D    & $50$-D   & $100$-D    & $200$-D    & $400$-D   \\
      \cline{1-7}
      {\multirow{4}{*}{LDA}}  
      & NN        & $95.3\pm0.3$ & $97.4\pm0.6$ & $97.9\pm0.5$ & - & - \\
      & NFL       & $92.5\pm0.8$ & $96.8\pm0.4$ & $97.9\pm0.2$ & - & - \\
      & SRC       & $94.6\pm0.5$ & $97.4\pm0.5$ & $97.9\pm0.4$ & - & - \\
      & LRC       & $72.8\pm1.6$ & $94.5\pm0.4$ & $97.1\pm0.3$ & - & - \\
      \cline{1-7}
      {\multirow{4}{*}{Rand}}  
      & NN        & $17.0\pm0.9$ & $19.8\pm1.6$ & $22.8\pm2.3$ & $22.2\pm1.5$ & $23.6\pm1.1$ \\
      & NFL       & $44.9\pm3.0$ & $55.2\pm1.9$ & $60.9\pm1.7$ & $63.1\pm1.2$ & $65.1\pm1.2$ \\
      & SRC       & $45.4\pm0.5$ & $71.3\pm1.7$ & $85.8\pm1.1$ & $91.5\pm0.7$ & $93.9\pm0.6$ \\
      & LRC       & $43.0\pm2.2$ & $71.6\pm1.8$ & $78.9\pm1.3$ & $82.1\pm1.2$ & $83.5\pm0.7$ \\
      \cline{1-7}
      {\multirow{4}{*}{PCA}}  
      & NN        & $19.4\pm1.3$ & $20.4\pm1.1$ & $21.7\pm1.3$ & $21.8\pm1.2$ & $22.0\pm1.0$ \\
      & NFL       & $41.9\pm1.6$ & $48.2\pm1.2$ & $52.1\pm1.6$ & $54.3\pm1.3$ & $55.4\pm1.3$ \\
      & SRC       & $52.7\pm0.8$ & $72.1\pm1.5$ & $80.8\pm1.0$ & $83.6\pm0.5$ & $83.9\pm0.7$ \\
      & LRC       & $60.3\pm0.8$ & $75.3\pm1.0$ & $80.3\pm0.7$ & $82.1\pm0.8$ & $82.7\pm0.8$ \\
      \cline{1-7}
      \multicolumn{2}{l|}{{ORE}}        
      & $97.0\pm0.5$ & $98.7\pm0.5$ & $99.0\pm0.3$ & $99.1\pm0.1$ & - \\
      \multicolumn{2}{l|}{{Robust-ORE}}  
      & $\bf 98.4\pm0.5$ & $\bf 99.1\pm0.4$ & $\bf 99.4\pm0.2$ & $\bf 99.5\pm0.2$ & - \\
      \multicolumn{2}{l|}{{Boosted-ORE}}  
      & $96.8\pm0.3$ & $98.6\pm0.3$ & $98.9\pm0.3$ & $99.0\pm0.4$ & - \\
      \hline\hline
      \end{tabular}
    }
    \caption
    {
        The comparison of accuracy on AR. The highest recognition rates are shown in bold.
        Note that we only perform algorithms with the Fisherface (LDA) on the $25$-D and
        $50$-D feature spaces. The original patch has $225$ pixels, thus we can't conduct
        ORE algorithms with $400$-D features.
    }
    \label{tab:ar_learn}
    \end{table*}


  Similar to the previous test, our methods once again show overwhelming superiority. {\em
  The Robust-ORE algorithm achieves a recognition rate of $99.5\%$ which is also the
  best reported result on AR in the similar experimental setting.}. In this sense, we can conclude
  that {\em the ORE algorithms can effectively learn the patterns of disguises}. The
  boosting-like variation of ORE-Learning obtains remarkable performances as well, but is
  slightly worse than the original version. Besides the ORE algorithms, the Fisherface
  approach also shows a high learning capacity. With Fisherfaces, the simplest Nearest
  Neighbor algorithm already achieves the recognition rate of $97.9\%$. This empirical
  evidence implies that discriminative face recognition methods usually benefit from
  learning certain face-related patterns.



  Figure~\ref{fig:ar_selected_patches} shows the patch candidates
  (Figure~\ref{subfig:ar_all_patches}) and the selected ones for ORE-Learning
  (Figure~\ref{subfig:ar_ore_patches}) and ORE-Boosting
  (Figure~\ref{subfig:ar_boost_patches}). As illustrated in the figure, the $500$ patch
  candidates redundantly samples the face image. Both ORE-Learning and ORE-Boosting choose
  $54$ patches and ORE-Learning still employs a more conservative strategy of weight
  assignment. Differing from Figure~\ref{fig:yaleb_selected_patches}, the ORE algorithms
  now focus on the forehead more than eyes and the mouth. Considering that sunglasses and
  scarves are usually located in those two places, the disguises' patterns are learned
  and the corresponding patch positions are less trusted during the test.

  \begin{figure}[ht!]
    \centering
    \subfigure[candidates]{\label{subfig:ar_all_patches}\includegraphics[width=0.25\textwidth]{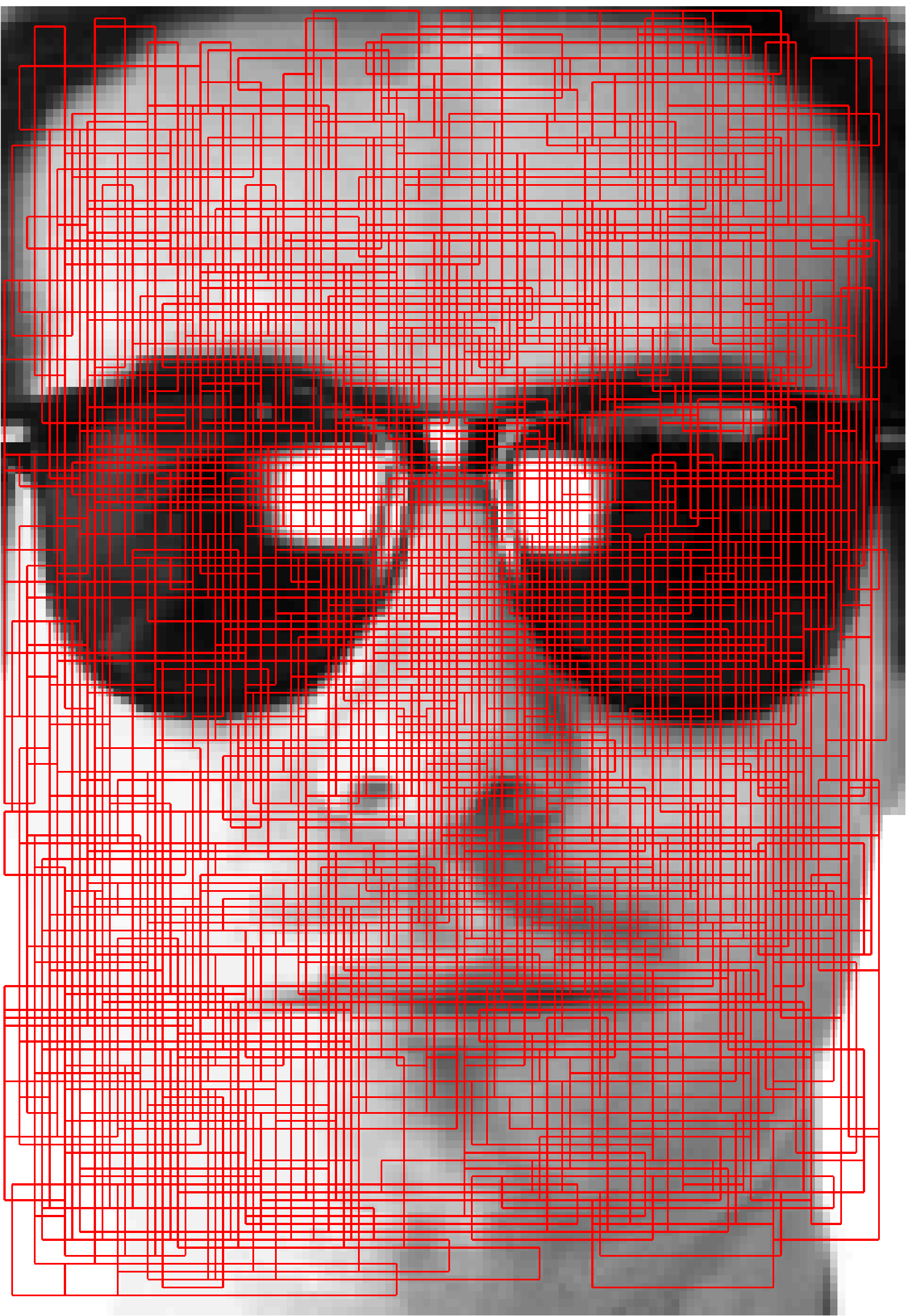}}
    \subfigure[ORE-Learning]{\label{subfig:ar_ore_patches}\includegraphics[width=0.25\textwidth]{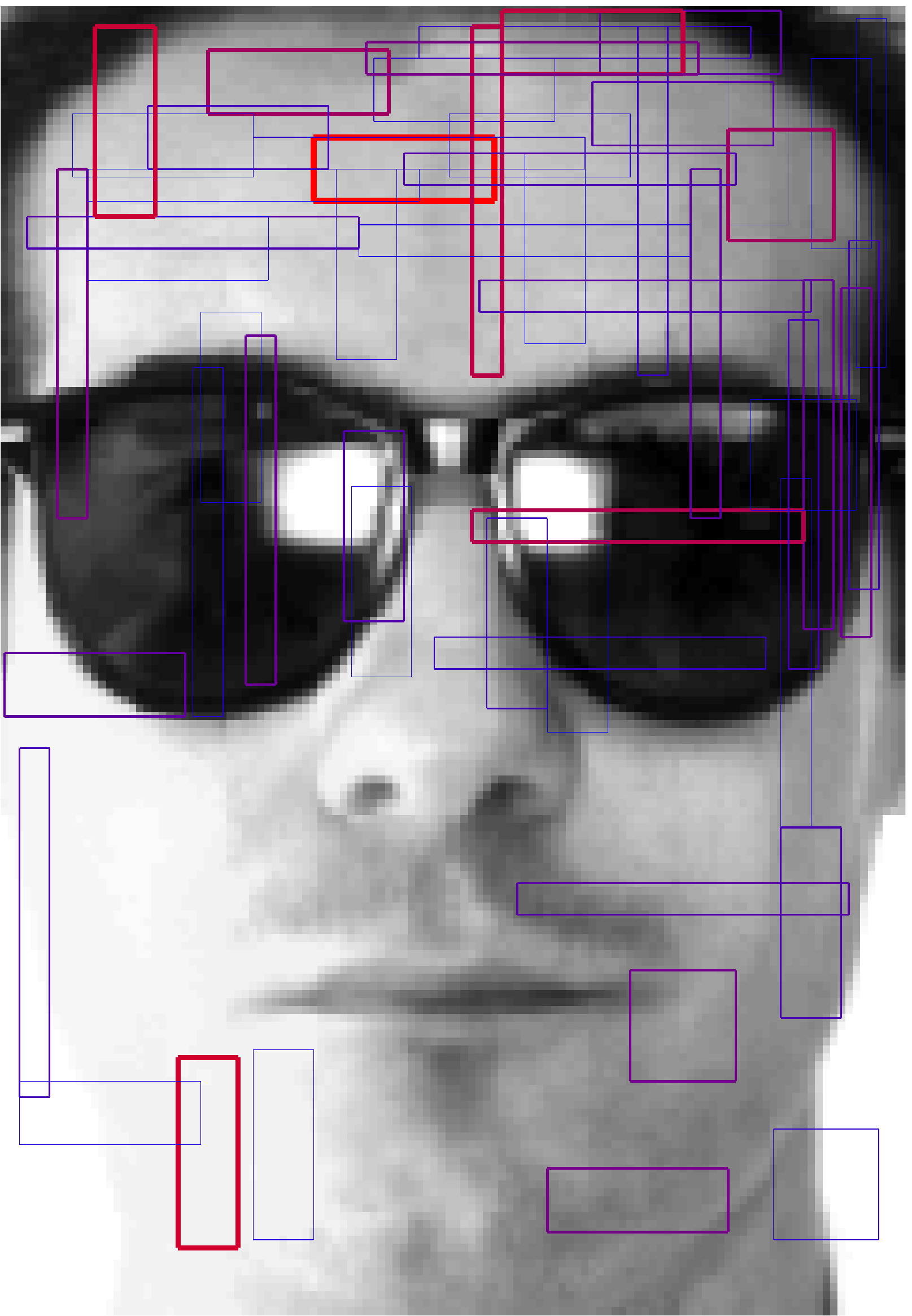}}
    \subfigure[ORE-Boosting]{\label{subfig:ar_boost_patches}\includegraphics[width=0.25\textwidth]{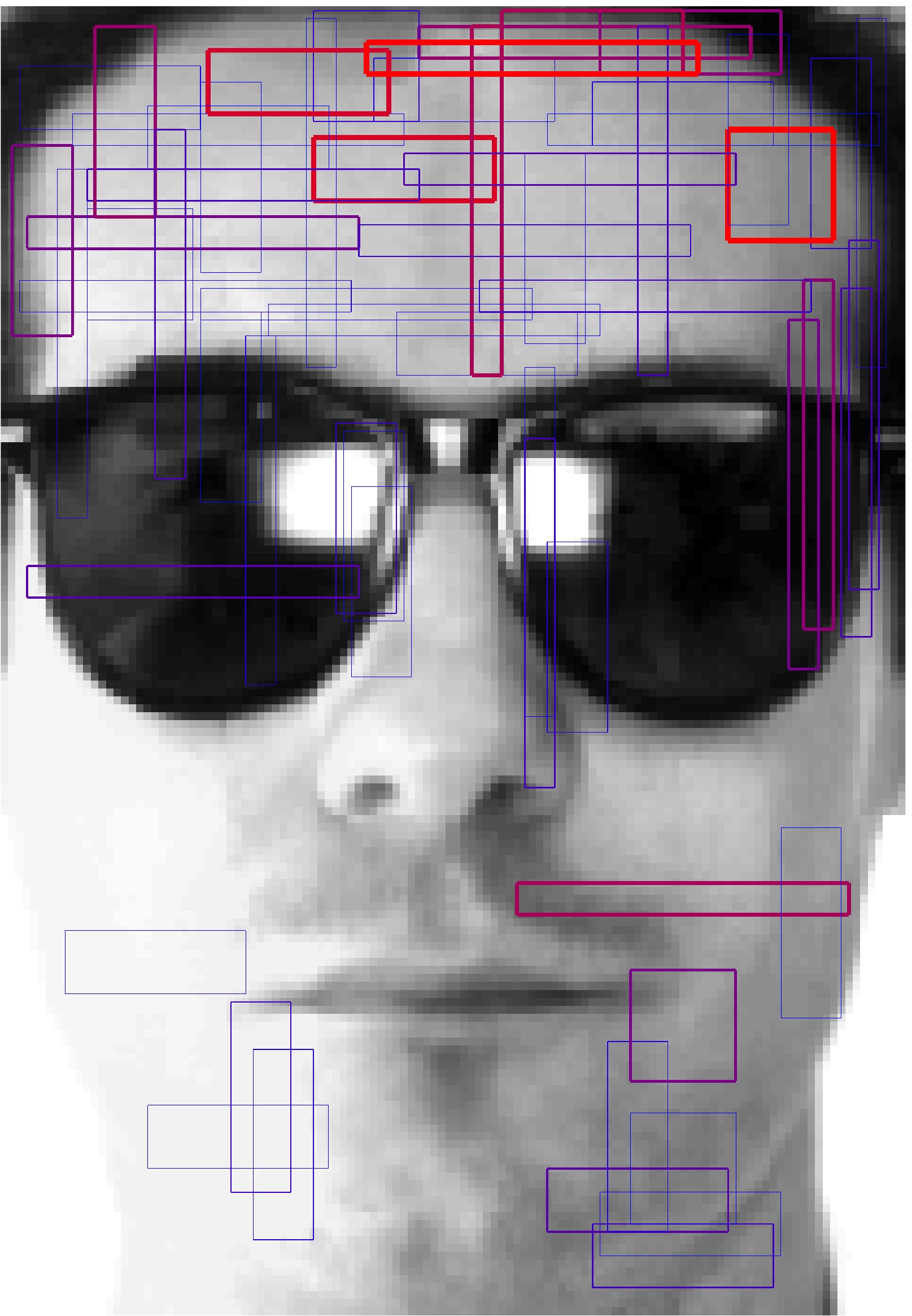}}
    \caption
    {
        The patch candidates (a) and those selected by ORE-Learning (b) and ORE-Boosting
        (c). All the patches are shown as blocks. Their widths and colors indicate the
        associated weights $\alpha_i,~\forall i$. A thicker and redder edge stands for a
        larger $\alpha_i$, \ie a more important patch. The ORE algorithms are conducted on
        a $100$-D feature space.
    }
  \label{fig:ar_selected_patches}
  \end{figure}

\subsection{Efficiency}
\label{subsec:exp_efficiency}
  
  For a practical computer vision algorithm, the running speed is usually crucial. Here we
  show the extremely high efficiency of the proposed algorithms, in both terms of training
  and test.

\subsubsection{The verification for the fast model selection}

  First of all, let us verify the training-determined model-selection for ORE-Learning and
  ORE-Boosting. Figure~\ref{fig:train_test_errors_l1} demonstrates the training error and
  test error curves as the model complexity, factorized by $1/\lambda$, is increasing. The
  two curves, as we can see, show nearly identical tendencies. In particular, when $1/\lambda
  = 1e$-$5$, \ie only trivial regularization is imposed, we still can not observe any
  deviation between the two errors. In other words, overfitting does not occur.
  Consequently, we can employ the training error of ORE-Learning as an accurate
  measurement of the generalization ability and chose an proper model parameter on it. The
  required time for model-selection is therefore reduced dramatically. 
  
  \begin{figure}[ht!]
    \centering
    \includegraphics[width=0.68\textwidth]{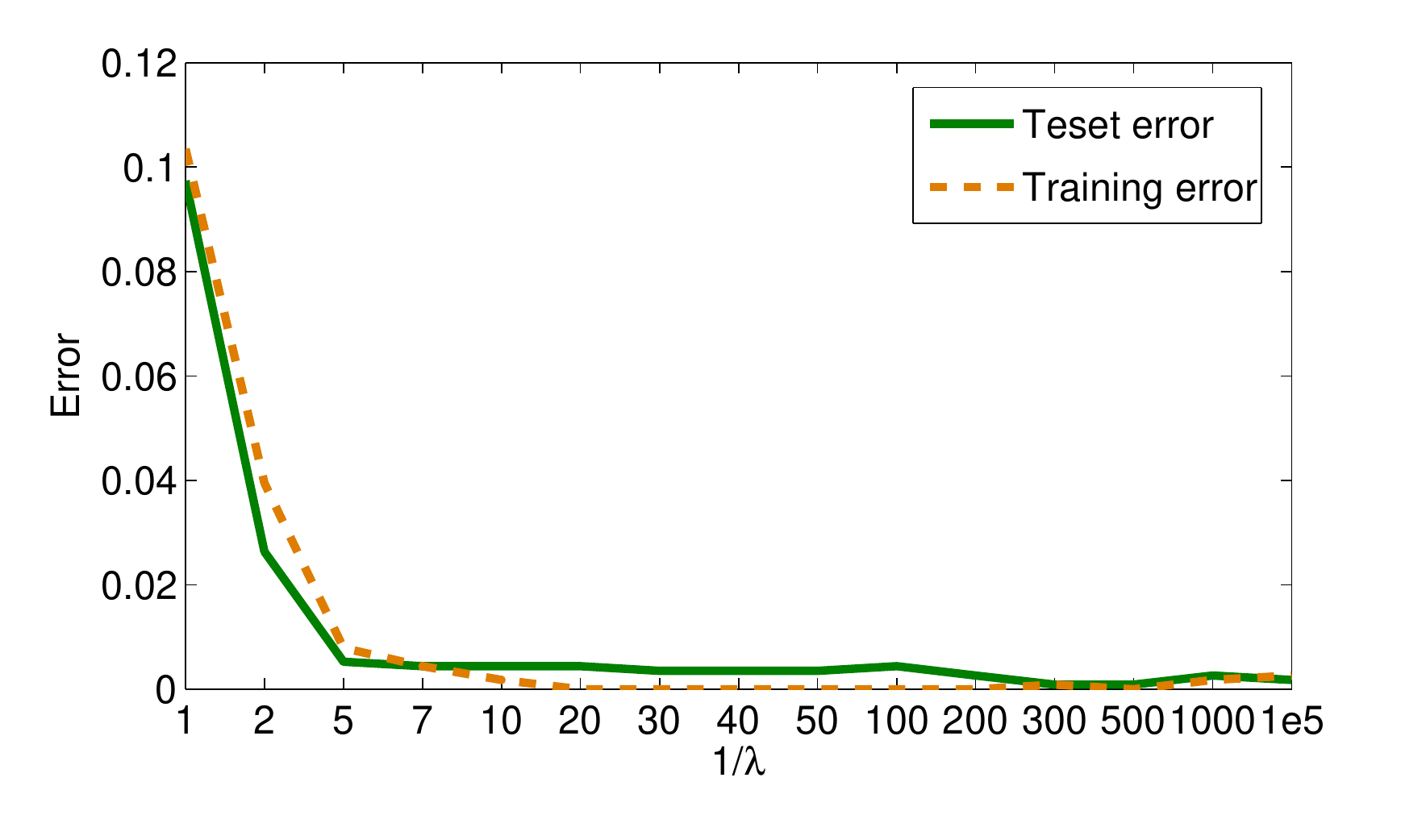}
    \caption
    {
        The training errors and test errors change with the increasing model complexity:
        $\frac{1}{\lambda}$. Note that the $x$-axis is not linearly scaled. The results
        are generated in a $100$-D feature space with Yale-B faces.
    }
  \label{fig:train_test_errors_l1}
  \end{figure}

\subsubsection{The improvement on the training speed}
\label{subsec:fast_training}

  Besides the fast model selection, we also theoretically validate the fast training
  procedure. It achieves, as illustrated above, very promising accuracies. Here we
  evaluate the improvement on the training speed. Figure~\ref{fig:fast_training} depicts
  the difference on the time consumptions for training a Boosted-ORE model, between the methods
  with and without updating BPRs at every iteration. The test is conducted with the
  increasing number (from $10$ to $2,000$) of BPRs and trade-off parameter $\lambda = 0.02$
  in the $100$-D feature space. As illustrated, the efficiency gap is huge. Without the
  BPR-recalculation, one could save the training time by from $700$ seconds ($10$ BPRs) to
  more than $10$ days ($2,000$ BPRs).

  \begin{figure}[ht!]
    \centering
    \includegraphics[width=0.68\textwidth]{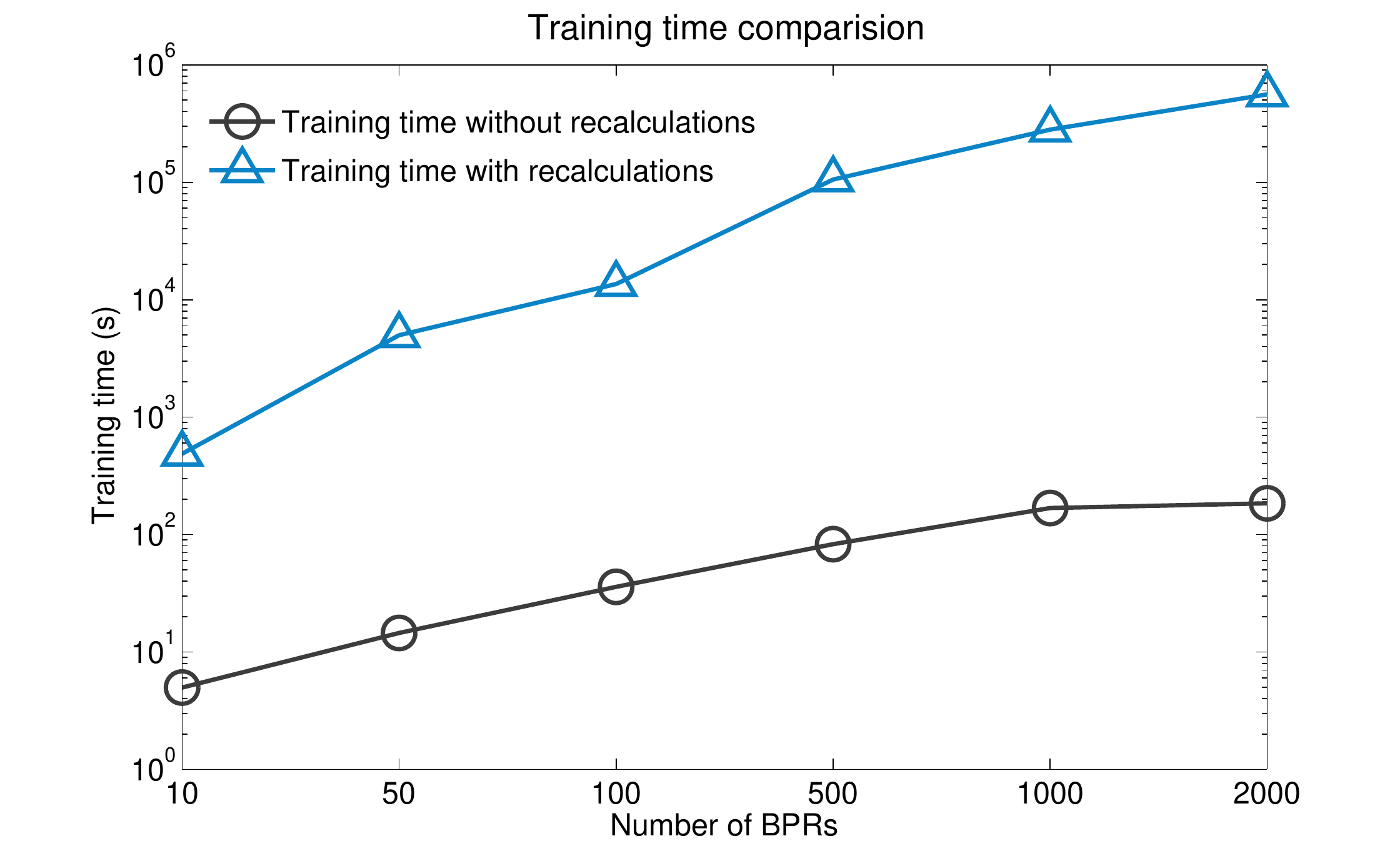}
    \caption
    {
        The training times consumed by the ORE-Boosting methods with the BPR
        recalculations and without them. Note that the $y$-axis is shown in the
        logarithmic scale. The results are obtained on the AR dataset with $100$-D
        features. 
    }
  \label{fig:fast_training}
  \end{figure}

\subsubsection{The highest execution efficiency}

  At last, let us verify the most important efficiency property --- execution speed. The
  test face (or face patch) is randomly mapped to a lower-dimensional space. Given a
  reduced dimensionality, all the face recognition algorithms are performed $100$ times on
  faces from Yale-B. We record the elapsed times (in ms) for each method and show the
  average values in Figure~\ref{fig:efficiency}. Note that for LRC and ORE-based methods,
  there is no need to perform LRs when testing as all the representation bases are 
  deterministic. Before test, one can pre-calculate and store all the matrices   
  \begin{equation}
    \E = (\hat{\X}^\T \hat{\X})^{-1}{\hat{\X}}^\T,
    \label{equ:fast_test}
  \end{equation}
  where $\hat{\X}$ represents different basis for different algorithms. Then the
  representation coefficients $\Beta$ for the test face (or patch) $\y$ can be obtained
  via a simple matrix multiplication, \ie
  \begin{equation}
    \Beta = \E\y.
    \label{equ:simple_multiple}
  \end{equation}
  %
  
  \begin{figure*}[ht]
    \centering
    \includegraphics[width=1\textwidth]{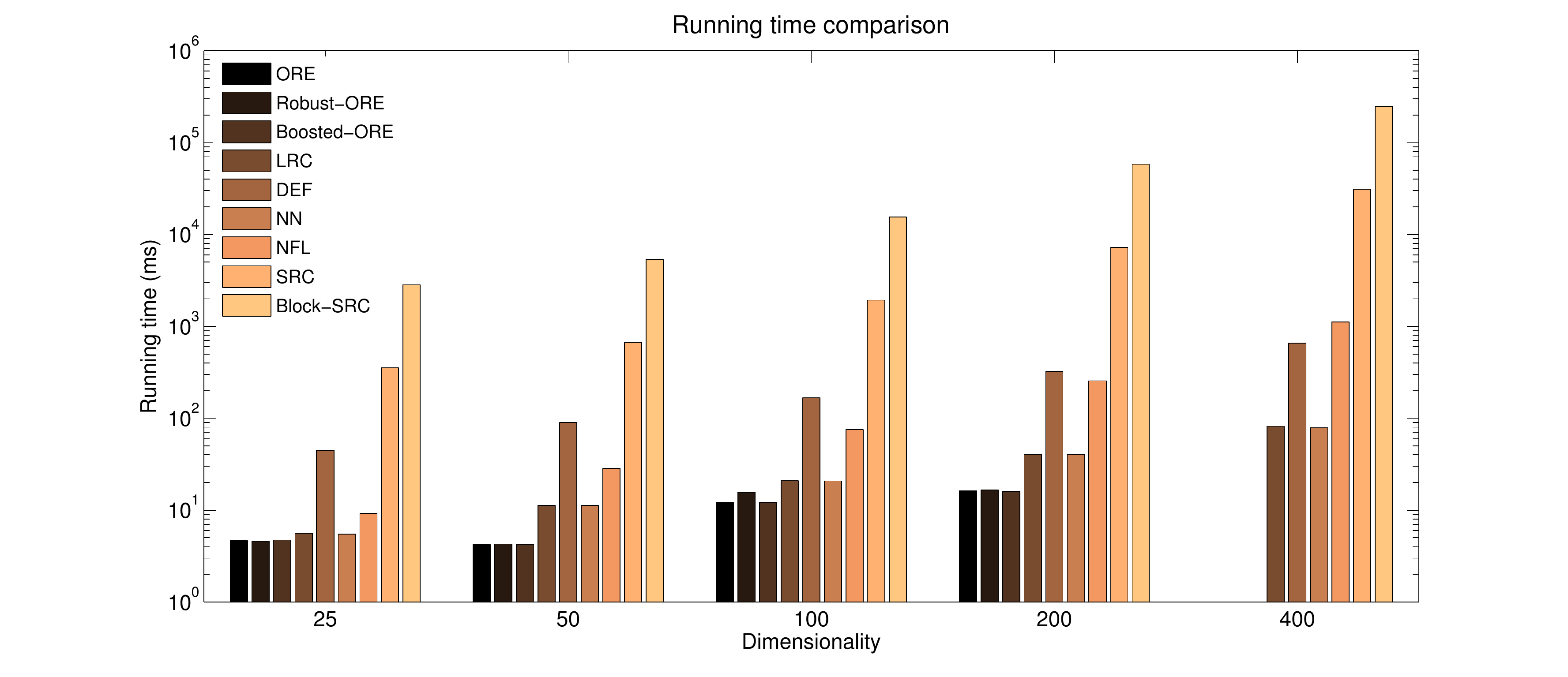}
    \caption[Comparison of running time for face recognition algorithms]
    {
      Comparison of the running time. Note that the $y$-axis is in the logarithmic scale.
      We don't perform ORE algorithms in the $400$-D feature space as each patch only has
      $225$ pixels. The $200$-D results for the proposed methods are actually obtained in
      the original $225$-D space.
    }
    \label{fig:efficiency}
  \end{figure*}

  As demonstrated, the SRC-based algorithms are the slowest two. The original SRC needs up
  to $31$ second ($400$-D) to process one test face. The Block-SRC approach, which shows
  relatively high robustness in the literature, shows even worse efficiency.  For $400$-D
  features, one need to wait more than $4$ minutes for one prediction yielded by
  Block-SRC. NFL also performs slowly. It requires $9$ to $1,113$ ms to handle one test
  image. In contrast, the ORE-based methods consistently outperform others in terms of
  efficiency. In particular, on the $200$-D ($225$-D in fact) feature space, one only
  needs $16$ ms to identify a probe face by using either ORE algorithm. This speed not
  only overwhelms those of SRC and NFL, but is also $2$-time higher than those of LRC and
  NN.
  
  Such a high efficiency, however, seems not reliable. Intuitively, the time consumed by
  LRC might be always shorter than that for ORE because ORE performs multiple LRs (here
  actually matrix multiplications) while LRC only performs one. We then track the
  execution time of the Matlab code via the ``profile'' facility. We found that, with
  high-dimensional features and efficient classifiers, it is the dimension reduction which
  dominates the time usage. The NN and LRC algorithms both perform the linear projection
  over all the pixels while ORE only select a small part (the pixels in the patches) of
  them to do the dimension-reduction.  As a result, if not too many patches are selected,
  the ORE algorithms usually illustrate even higher efficiencies than LRC and NN. 

  Recall that the proposed methods achieve almost all the best recognition rates in
  different conditions. We draw the conclusion that, \emph{the ORE model is a very
  promising face recognizer which is not only most accurate, but also most efficient.}

\section{Conclusion and future topics}
\label{sec:conclusion}

  In this paper, a learning-inference-mixed framework is proposed for face recognition.
  By observing that, in practice, only partial face is reliable for the linear-subspace
  assumption. We generate random face patches and conduct LRs on each of them. The
  patch-based linear representations are interpreted by using the Bayesian theory and
  linearly aggregated via minimizing the empirical risks. The yielded combination, Optimal
  Representation Ensemble, shows high capability of learning face-related patterns and
  outperforms state-of-the-arts on both accuracy and efficiency. With ORE-models, one can
  almost perfectly recognize the faces in Yale-B (with the accuracy $99.9\%$) and AR (with
  the accuracy $99.5\%$) dataset, and at a remarkable speed (below $20$ ms per face using
  the unoptimized Matlab code and one CPU core). 
  
  For handling foreign patterns arising in test faces, the Generic-Face-Confidence is
  derived by taking the non-face patch into consideration. Facilitated by GFCs, the
  ORE-model shows a high robustness to noisy occlusions, expresses and disguises. It beats
  the modular heuristics under nearly all the circumstances. In particular, for Gaussian
  noise blocks, the recognition rate of our method is always above $93\%$ and fluctuates
  around $99\%$ when the blocks are not too large. For real-life disguises and facial
  expressions, Robust-ORE also outperforms the competitors consistently.

  In addition, to accommodate the instance-based BPRs, an novel ensemble learning
  algorithm is designed based on the proposed leave-one-out margins. The learning
  algorithm, ORE-Learning, is theoretically and empirically proved to be resistant to
  overfittings. This desirable property leads to a training-determined model-selection,
  which is much faster than conventional $n$-fold cross-validations. For immense BPR sets,
  we propose the ORE-Boosting algorithm to exploit the vast functional spaces.
  Furthermore, we also increase the training speed a lot by proving that the ORE-Boosting
  is actually data-weight-free. 
  
  As to the future work, one promising direction is to exploit the spatial information for
  ORE-models. Similar to \cite{Zhou_09_ICCV_MRF}, one could also employ a \emph{Markov
  Random Field} (MRF) method to analyze the patch-based GFCs.  Even higher accuracies
  could be achieved, considering that the GFC is more informative and robust than a single
  pixel. Secondly, ORE-models can be expanded for the video-based face recognition via
  using online-learning algorithms. Considering that the Robust-BPRs can distinguish face
  parts from the non-facial ones, we also want to design a ORE-based face detection
  algorithm. By merging the detector with this work, we could finally obtain a
  multiple-task ORE-model that performs the detection and recognition simultaneously. 
  
{
  \bibliographystyle{IEEEbib}
  \bibliography{tip_arc}
}

\end{document}